\documentclass{article}
\pdfoutput=1 

\usepackage[preprint, nonatbib]{neurips_2026}

\usepackage[numbers,sort&compress]{natbib}

\usepackage[utf8]{inputenc} \usepackage[T1]{fontenc}    \usepackage{url}            \usepackage{booktabs}       \usepackage{amsfonts}       \usepackage{nicefrac}       \usepackage{microtype}      \usepackage{xcolor}         

\colorlet{siaminlinkcolor}{green!50!black}
\colorlet{siamexlinkcolor}{red!50!black}
\colorlet{siamreviewcolor}{black!50}
\usepackage[colorlinks,allcolors=siaminlinkcolor,urlcolor=siamexlinkcolor,pagebackref,bookmarksdepth=3]{hyperref}
\renewcommand*{\backref}[1]{\ifx#1\relax \else Page #1 \fi}
\renewcommand*{\backrefalt}[4]{\ifcase #1 \footnotesize{(Not cited)}\or        \footnotesize{(Cited on page~#2)}\else      \footnotesize{(Cited on pages~#2)}\fi
}

\usepackage{etoolbox}

\makeatletter
\patchcmd{\NAT@citexnum}
  {\let\NAT@last@num\NAT@num}
  {\@ifundefined{MakeLinkTarget}{\phantomsection}{\MakeLinkTarget[cite]{}}\Hy@backout{\@citeb\@extra@b@citeb}\let\NAT@last@num\NAT@num
  }
  {\typeout{natbib/hyperref backref patch applied.}}
  {\PackageError{natbib-hyperref-backref-patch}
      {Could not patch \string\NAT@citexnum}
      {The natbib/hyperref internals differ from the expected version.}}
\makeatother

\usepackage{amsmath}
\usepackage{amsthm}
\usepackage{thmtools}
\usepackage{ifpdf}
\usepackage{graphicx}
\usepackage{algorithmic}
\usepackage{amsopn}

\usepackage{amssymb}
\usepackage{subcaption}
\usepackage{mathtools}
\usepackage{bbm}
\usepackage{mathrsfs}
\usepackage{stmaryrd}
\usepackage{thm-restate}
\usepackage{makecell}

\let\originalleft\left
\let\originalright\right
\renewcommand{\left}{\mathopen{}\mathclose\bgroup\originalleft}
\renewcommand{\right}{\aftergroup\egroup\originalright}
\usepackage{array}
\newcolumntype{M}[1]{>{\centering\arraybackslash}m{#1}}
\usepackage{tcolorbox}
\usepackage{bm}

\usepackage[capitalize]{cleveref}
\crefname{assumption}{Assumption}{Assumptions}

\newcommand{\N}{\mathbb{N}}

\newcommand{\R}{\mathbb{R}}

\newcommand{\cL}{\mathcal{L}}
\newcommand{\cF}{\mathcal{F}}

\newcommand{\cH}{\mathcal{H}}
\newcommand{\cX}{\mathcal{X}}

\newcommand{\cC}{\mathcal{C}}
\newcommand{\cS}{\mathcal{S}}
\newcommand{\cG}{\mathcal{G}}

\newcommand{\cR}{\mathcal{R}}

\newcommand{\cU}{\mathcal{U}}

\newcommand{\cZ}{\mathcal{Z}}
\newcommand{\cQ}{\mathcal{Q}}
\newcommand{\cN}{\mathcal{N}}
\newcommand{\cM}{\mathcal{M}}

\newcommand{\sP}{\mathscr{P}}

\newcommand{\sfd}{\mathsf{d}}

\newcommand{\Cac}{C_{\mathrm{ac}}}
\newcommand{\Lac}{L_{\mathrm{ac}}}

\DeclarePairedDelimiterX{\iptemp}[2]{\langle}{\rangle}{#1, #2}

\DeclarePairedDelimiterX{\normtemp}[1]{\lVert}{\rVert}{#1}
\newcommand{\norm}{\normtemp}
\DeclarePairedDelimiterX{\abstemp}[1]{\lvert}{\rvert}{#1}
\newcommand{\abs}{\abstemp}
\DeclarePairedDelimiterX{\trtemp}[1]{(}{)}{#1}

\DeclarePairedDelimiterX{\SEtemp}[2]{(}{)}{#1, #2}

\DeclarePairedDelimiterX{\SGtemp}[1]{(}{)}{#1}

\newcommand{\defeq}{\coloneqq}
\newcommand{\eqdef}{\eqqcolon}

\newcommand{\condbar}{\, \vert \,}

\DeclarePairedDelimiterX{\floor}[1]{\lfloor}{\rfloor}{#1}
\DeclarePairedDelimiterX{\ceil}[1]{\lceil}{\rceil}{#1}

\DeclareMathOperator{\domain}{Dom}

\newcommand{\E}{\operatorname{\mathbb{E}}}

\newcommand{\one}{\mathbbm{1}}

\newcommand{\normal}{\mathcal{N}}

\newcommand{\iunit}{\mathrm{i}}
\newcommand{\diff}{d}

\newcommand{\dd}[1]{\,\diff{#1}}

\DeclareMathOperator*{\essinf}{ess\,inf}

\newcommand{\weakly}{\rightsquigarrow}

\newcommand{\al}{\alpha}
\newcommand{\ep}{\varepsilon}
\newcommand{\Rd}{\mathbb{R}^d}

\newcommand{\eps}{\epsilon}

\renewcommand{\epsilon}{\ep}

\def\qfa{\quad\text{for all}\quad}

\def\qas{\quad\text{as}\quad}
\def\qa{\quad\text{and}\quad}

\def\qw{\quad\text{where}\quad}

\newcommand{\set}[2]{{\left\{ #1 \,\middle|\, #2 \right\}}}
\newcommand{\slot}{{\,\cdot\,}}

\ifpdf
\DeclareGraphicsExtensions{.eps,.pdf,.png,.jpg,.jpeg}
\else
\DeclareGraphicsExtensions{.eps}
\fi
\graphicspath{ {./figures/} }

\usepackage[shortlabels]{enumitem}

 \setlist[itemize]{leftmargin=15pt}
\setlist[enumerate]{leftmargin=15pt}
\renewcommand{\tilde}{\widetilde}

\newtheorem{theorem}{Theorem}[section]
\newtheorem{corollary}[theorem]{Corollary}

\newtheorem{lemma}[theorem]{Lemma}

\theoremstyle{definition}
\newtheorem{definition}[theorem]{Definition}
\newtheorem{remark}[theorem]{Remark}
\newtheorem{example}[theorem]{Example}



\newcommand{\mytitle}{
    One Operator for Many Densities: Amortized \\
    Approximation of Conditioning by Neural Operators
}

\title{\mytitle}

\makeatletter
\let\inserttitle\@title
\makeatother

\author{
  Panos Tsimpos \\ 
  Operations Research Center \\
  Massachusetts Institute of Technology \\
  Cambridge, MA 02139 \\
  \texttt{ptsimpos@mit.edu}
  \And
  Edoardo Calvello\\
  Department of Computing and Mathematical Sciences \\
  California Institute of Technology \\
  Pasadena, CA 91125 \\
  \texttt{e.calvello@caltech.edu}
  \And
  Ayoub Belhadji\\
  Laboratory for Information and Decision Systems \\
  Center for Computational Science and Engineering\\
  Massachusetts Institute of Technology \\
  Cambridge, MA 02139 \\
  \texttt{abelhadj@mit.edu}
  \And
  Nicholas H. Nelsen\\
  Department of Mathematics \\
  Cornell University \\
  Ithaca, NY 14853 \\
  and\\
  Oden Institute for Computational Engineering and Sciences \\
  The University of Texas at Austin \\
  Austin, TX 78712 \\
  \texttt{nnelsen@oden.utexas.edu}
}

\ifpdf
\hypersetup{
	pdftitle={One Operator for Many Densities: Amortized Approximation of Conditioning by Neural Operators},
	pdfauthor={P. Tsimpos, E. Calvello, A. Belhadji, and N. H. Nelsen}
}
\fi

\begin{document}

\maketitle

\begin{abstract} 
Probabilistic conditioning is concerned with the identification of a distribution of a random variable $X$ given a random variable $Y$.
It is a cornerstone of scientific and engineering applications where modeling uncertainty is key. 
This problem has traditionally been addressed in machine learning by directly learning the conditional distribution of a fixed joint distribution. 
This paper introduces a novel perspective:
we propose to solve the conditioning problem by identifying a single operator that maps any joint density to its conditional, thus amortizing over joint-conditional pairs.
We establish that the conditioning operator can be approximated to arbitrary accuracy by neural operators. Our proof relies on new results establishing continuity of the conditioning operator over suitable classes of densities.
Finally, we learn the conditioning map for a class of Gaussian mixtures using neural operators, illustrating the promise of our framework.
This work provides the theoretical underpinnings for general-purpose, amortized methods for probabilistic conditioning, such as foundation models for Bayesian inference.
\end{abstract}

\section{Introduction}
Conditioning is a fundamental operation in probabilistic modeling. Given a joint distribution over random variables $(X, Y)$, the conditional distribution $p(X \condbar Y = y)$ for some observed $ y $ is the central object underlying prediction, inference, and decision-making. Modern machine learning systems traditionally solve the problem by learning the conditional distribution of a fixed joint, often from a statistical standpoint.
In this work, we adopt an operator-theoretic perspective on probabilistic conditioning. Specifically, we study the map that takes a joint probability density $\rho$ over $(X,Y)$ and a conditioning variable $y$ to the conditional density $\rho(\slot \condbar y)$. This defines a nonlinear operator
\begin{align*}
	\Psi^\star\colon (\rho, y) \mapsto \rho(\slot \condbar y)
\end{align*}
acting between function spaces.
Under natural regularity assumptions, we show that $ \Psi^\star $ is continuous between suitable metric spaces of probability densities. This seemingly simple observation has a powerful consequence: \emph{conditioning can be universally approximated by neural operators}.
Neural operators are learnable mappings between function spaces that serve as the natural generalization of deep neural networks to the infinite-dimensional setting considered in this paper.
A major benefit of our operator learning perspective is that it enables amortization across joint distributions, which eliminates the need to retrain the neural operator for each new input density.

\paragraph{Contributions.}
Our main contributions are as follows.
\begin{itemize}\item \textbf{Operator-theoretic formulation of conditioning.}  
    We formalize probabilistic conditioning as a nonlinear operator acting on spaces of continuous probability densities and identify two complementary formulations: kernel conditioning and in-context conditioning.

    \item \textbf{Stability of conditioning.}  
    We prove that the conditioning operator is continuous with respect to the supremum norm on suitable subsets of continuous densities. Under additional regularity assumptions, we establish quantitative H\"older stability bounds.

    \item \textbf{Universal approximation by neural operators.}  
    Leveraging the stability results, we show that both formulations of the conditioning operator can be uniformly approximated on compact sets of probability densities by neural operators. 
    In doing so, we propose practical neural operator architectures for conditioning.

    \item \textbf{Unbounded domains and extensions.}  
    We establish a general neural operator universal approximation theorem for continuous operators from $L^1(\R^d)$ to $L^1(\R^{d'})$. We then use it to prove the approximation of the kernel conditioning operator by neural operators over unbounded $L^1$ densities.
    Our analysis also goes beyond uniformly lower bounded marginal densities and characterizes when conditioning admits continuous extensions.

    \item \textbf{Numerical proof-of-concept.} We go beyond theory by numerically demonstrating that \emph{trained} neural operators can accurately implement conditioning in Gaussian mixture test problems.
\end{itemize}

\paragraph{Significance and related work.}

The approximation guarantees established in this paper pave the way for the development of \emph{foundation models for conditioning}: deep learning architectures that, upon being trained on a sufficiently large and diverse dataset, can rapidly perform conditional inference on new, unseen joint probability densities. This has immediate implications for sequential Bayesian inference \cite{stuart2010inverse} tasks such as nonlinear filtering in data assimilation \cite{sanz2023inverse,calvello2025ensemble} or optimal experimental design \cite{huan2024optimal,wu2023fast}, where conditioning is a challenging but key computational primitive.

Our work builds on and connects several lines of research.

\textit{Operator learning.} 
Operator learning was popularized in \cite{bhattacharya2021model,kovachki2021neural,lu2021learning,li2021fourier}. These architectures have been established as state-of-the-art methods for partial differential equations (PDEs)~\cite{li2021fourier}, weather forecasting~\cite{kurth2023fourcastnet,bonev2025fourcastnet}, and complex geometries \cite{zeng2025point,li2023geometry}, to name a few.
One approach called DeepONet \cite{lu2021learning,chen1995universal} approximates operators via a branch–trunk decomposition. Our in-context formulation can be viewed as a continuous analogue of this architecture, where the conditioning variable plays the role of a query input. Another line of work introduced neural operators \cite{kovachki2021neural, kovachki2024operator}, which lift neural networks to function spaces. They now have many theoretical guarantees \cite{lanthaler2025nonlocality,lanthaler2022error,kovachki2024operator,brugiapaglia2026short}. 
Recent work formulates continuum attention as an operator on function spaces, leading to the transformer neural operator \cite{calvello2024continuum}.
These methods have been successfully applied to PDEs \cite{boulle2024mathematical,li2021fourier} and inverse problems \cite{nelsen2026hna,de2025extension}. We extend this framework to probabilistic conditioning operators.

\textit{Conditional generative modeling and conditional kernel mean embeddings.} Conditional GANs \cite{mirza2014conditional}, conditional VAEs \cite{sohn2015learning}, and flow-based models \cite{dinh2017density, lipman2022flow,pierret2026flow} aim to model conditional distributions from data samples only. These approaches are often parametrized in finite-dimensional latent spaces and invoke measure transport \cite{marzouk2017sampling}. In contrast, we study conditioning at the level of function spaces and provide approximation theorems.
A different direction studies conditional kernel mean embeddings \cite{song2009hilbert,li2022optimal} that represent conditional distributions in reproducing kernel Hilbert spaces (RKHS). This motivated work on rigorous operator-theoretic conditional density estimation methods \cite{schuster2020kernel}.
Our operator learning formulation is related, but operates directly in spaces of probability densities rather than RKHS embeddings and amortizes over joint densities without retraining.

\textit{Learning on distribution-valued data and amortized inference.}
Machine learning in metric spaces such as the Wasserstein space of probability distributions \cite{moosmuller2023linear,panaretos2020invitation,lyu2026mvnn} is a contemporary research direction that aligns with our work. 
For distribution-to-distribution regression \cite{szabo2016learning}, measure-theoretic transformers are one prominent method \cite{furuya2025transformers1,furuya2025transformers2,bach2025learning,calvello2026operator,geshkovski2025mathematical,castin2024smooth,biswal2024universal,kawata2026transformers,fraiman2026expressive}. They have been used for amortized unconditional density/score estimation \cite{ilin2026discoformer}, amortized Bayesian inference \cite{whittle2026distribution, gloeckler2024all}, and amortized optimal transport \cite{cole2026context}. These and related deep learning methods have been applied to yield fast non-Gaussian data assimilation algorithms \cite{bach2025learning,calvello2026operator,cui2026amortized,al2025fast,binder2026closed,revach2022kalmannet,tong2026latent}, motivating our work.

\paragraph{Organization.}
The remainder of this paper is organized as follows. \Cref{sec:formulation} introduces the functional-analytic setup and defines the conditioning operators. \Cref{sec:approximation} presents the main approximation results. \Cref{sec:stability} establishes continuity and stability properties. 
\Cref{sec:numerics} presents numerical experiments. Last, \Cref{sec:discussion} concludes with a discussion of implications and future directions.

\section{Formulation}\label{sec:formulation}
Let $D\subset\R^m$ and $E\subset\R^r$ be compact sets with Lipschitz boundaries.
Consider the space of continuous probability densities on $D\times E$ given by
\begin{align*}
    \Cac(D \times E) &\defeq \set{\rho \in C(D \times E)}{\int_{D \times E} \rho(z) \dd{z} = 1 \qa \inf_{z \in D \times E} \rho(z) \geq 0}.
\end{align*}
Write $z\defeq  (x,y)$ for $x \in D$ the \emph{state} or \emph{target} variable and $y \in E$ the \emph{query} or \emph{conditioning} variable.
For each query $y \in E$, we can define the
\emph{data marginal density} by 
\begin{align}\label{eqn:marginal_data_setup}
	m_\rho(y)\defeq \int_D\rho(x,y)\dd{x}.
\end{align}
Finally, for a density $\rho \in \Cac(D \times E)$ we can introduce the collection of queries with non-degenerate marginals by $E_\rho\defeq\set{y\in E}{m_\rho(y)> 0}$,
which allows us to define the \emph{conditional density}
\begin{align}\label{eqn:conditional_pdf}
	\rho(x\condbar y)\defeq \frac{\rho(x,y)}{m_\rho(y)}
\end{align}
for all $x\in D$ and $y \in E_\rho$.
For reasons that will soon become clear we also need to introduce sets of queries with ``well-behaved'' data marginals.
For a strictly positive number $\delta > 0$, let
\begin{align*}
	\cX_\delta\defeq \set{\rho \in \Cac(D\times E)}{\inf_{y\in E} m_\rho(y)\geq \delta}.
\end{align*}
Writing $\sfd_{C(D\times E)}$ and $\sfd_{C(D)}$ for the metrics induced by the supremum norms on $C(D\times E)$ and $C(D)$, respectively, we can constuct metric spaces
\begin{equation}\label{eq:metric_spaces_M_delta_Y}
	\cM_\delta \defeq \bigl(\cX_\delta, \sfd_{C(D\times E)}\bigr)
\qa \cM_\delta^E \defeq  \bigl(\cX_\delta \times E, \sfd_{C(D\times E)\times E}\bigr)
\end{equation} 
with $\sfd_{C(D\times E)\times E}$ induced by the \emph{product norm} $\norm{(\rho,y)}_{C(D\times E)\times E}\defeq\max\bigl(\norm{\rho}_{C(D\times E)}, \| y \| \bigr)$, 
where $ \|\slot\| $ is the Euclidean norm on $ \R^r\supset E $.
Note that this is the natural norm corresponding to the direct sum of Banach spaces $ C(D\times E) \oplus \R^r $. \Cref{lem:completeness-of-metric-spaces} establishes that \eqref{eq:metric_spaces_M_delta_Y} are indeed metric spaces. 

Our goal in this section is to construct a well-behaved conditioning operator, that is, a mapping that takes any density $\rho \in \Cac(D \times E)$ as input and returns its conditional. There exist at least two natural ways to do this: one can either input a $\rho$ together with a query $y$ and output the conditional function $\rho(\slot\condbar y)$, or one can input $\rho$ alone and output the entire kernel function $\kappa_\rho(\slot,\slot)=\rho(\slot\condbar\slot)$. As we will see, both definitions are possible and intimately linked.

\subsection{The in-context conditioning operator}
First, we define the \emph{in-context conditioning operator} $\Psi^\star$ that takes a density $\rho$ and a query $y$ to produce the conditional density $\rho(\slot\condbar y)$.
Noting that for $\rho \in \cX_\delta$ we have $E_\rho = E$, we can define 
\begin{align}\label{eqn:operator_incontext}
	\begin{split}
		\Psi^\star\colon \domain(\Psi^\star)&\to C(D)\\
		(\rho,y)&\mapsto \rho(\slot\condbar y)
	\end{split}
\end{align}
with domain $\domain(\Psi^\star)\defeq \cX_{\delta}\times E$.
We equip $ \domain(\Psi^\star) $ with the metric $\sfd_{C(D\times E) \times E)}$ to obtain the metric space $\cM^E_\delta$, see \eqref{eq:metric_spaces_M_delta_Y}.

\subsection{The kernel conditioning operator}
Now we define the \emph{kernel conditioning operator} $\cG^\star$ that takes a density $\rho$ and produces the kernel function $(x,y) \mapsto \rho(x\condbar y)$.
To do so, we first define the \emph{kernel function} for any $ \rho\in\Cac(D\times E) $ by
\begin{align}\label{eqn:kernel}
	\begin{split}
		\kappa_\rho\colon D\times E_\rho&\to \R_+ \\
		(x,y)&\mapsto \rho(x\condbar y).
	\end{split}
\end{align}
We can now define the \emph{kernel conditioning operator} by
\begin{align}\label{eqn:operator_kernel}
	\begin{split}
		\cG^\star\colon \domain(\cG^\star) &\to C(D\times E)\\
		\rho&\mapsto \kappa_\rho,
	\end{split}
\end{align}
with domain $\domain(\cG^\star)\defeq\cX_{\delta}$.
We equip $ \domain(\cG^\star) $ with the metric induced by the supremum norm to obtain the metric space $\cM_\delta$ \eqref{eq:metric_spaces_M_delta_Y}. Last, we note that $\Psi^\star$ and $\cG^\star$ are related by the \emph{exponential map}; see \cref{rk:exponential-map}.

\section{Main results}\label{sec:approximation}
This section presents the central results of the paper: universal approximation of the conditioning operators $ \cG^\star $ and $ \Psi^\star $ by \emph{neural operators}.
First, we define neural operators in \cref{subsec:neural-operators} as learnable mappings between function spaces, implemented by neural networks.
Then in \cref{subsec:universal-approximation}, we discuss the first set of universal approximation results. These establish that $ \cG^\star $ and $ \Psi^\star $ can be approximated to arbitrary accuracy by neural operators, uniformly over compact sets of continuous densities; examples of the latter include balls of $\alpha$-H\"older densities.   
Last, in \cref{sec:unbounded}, we prove that an extension of $\cG^\star$ defined on all densities over the entire space $\R^{m + r}$ can also be universally approximated by neural operators.
In particular, this delivers approximation guarantees over compact sets of Gaussian mixtures.
We remark that the technical backbone of the approximation theorems in this section are the continuity and stability results that will be presented later in \cref{sec:stability}.

\subsection{Neural operators}\label{subsec:neural-operators}
First, we introduce neural operators following~\cite{lanthaler2025nonlocality}.
Let $\Omega_\textup{in} \subset \R^{d'_\textup{in}}$ and $\Omega_\textup{out} \subset \R^{d'_\textup{out}}$ be Lipschitz domains and $\cF_\textup{in}$ and $ \cF_\textup{out} $ be collections of functions
$\Omega_\textup{in} \to \R^{d_{\textup{in}}}$ and $\Omega_\textup{out} \to \R^{d_\textup{out}}$, respectively. 
Moreover, for some $L\in \N$ and natural numbers $d_1, \ldots, d_L$, we write $\mathcal{F}_i$ for a collection of functions $\Omega_{\textup{in}} \to \R^{d_i}$. We define the following objects.

\begin{definition}[Lifting layer]\label{def:lifting_layer}
	Let $R: \R^{d_\textup{in}' + d_{\textup{in}}} \to \R^{d_0}$ be a neural network.
	A \emph{lifting layer} is a map
	\begin{align*}
		\cR\colon \cF_\textup{in} \to \cF_0 \;\; \textup{with} \;\; \cR(f)(x) = R\bigl(x, f(x)\bigr).
	\end{align*}
\end{definition}
\begin{definition}[Hidden layer]\label{def:hidden_layer}
	The $i$th \emph{hidden layer} is a map of functions
	\begin{align*}
		&\mathcal{L}_\ell \colon \mathcal{F}_{\ell-1} \to \mathcal{F}_\ell, \\
		& \mathcal{L}_\ell(f)(x) \defeq \sigma\left(W_\ell f(x) + b_\ell + \sum_{m=1}^{M_\ell} \left\langle T_{\ell, m} f, \psi_{\ell, m} \right\rangle \phi_{\ell, m}(x)   \right) ,
	\end{align*}
	where $\sigma\colon \R \to \R$ is a nonlinearity applied pointwise, $W_\ell \in \R^{d_\ell \times d_{\ell-1}}$ and $b_\ell \in \R^{d_\ell}$ are weights and biases, and for each $m \in \{1, \ldots, M_\ell\}$, we have: the $T_{\ell, m}\colon \R^{d_{\ell-1}} \to \R^{d_\ell}$ are linear maps, the $\psi_{\ell, m} \in \mathcal{F}_{\ell-1}$ are \emph{encoding functions}, and the $\phi_{\ell, m} \in \mathcal{F}_\ell$ are \emph{decoding functions}.
	Finally, the brackets $\langle \slot, \slot \rangle$ denote the $L^2$ pairing in $\R^{d_{\ell-1}}$.
\end{definition}

The collections of functions $\{T_{\ell, m} f\}_{m=1}^{M_\ell}$, $\{\phi_{\ell, m}\}_{m=1}^{M_\ell}$, and $\{\psi_{\ell, m}\}_{m=1}^{M_\ell}$ could be either learnable or pre-determined. In the latter case, one merely hard-codes them into the architecture, e.g., one could take $\psi_{\ell, m}(x) = \phi_{\ell, m}(x) = e^{\iunit m \cdot x}$ to be Fourier basis functions. In the former case, one may parametrize these functions by linear maps or neural networks and train their parameters.

\begin{definition}[Projection layer]\label{def:projection_layer}
	Let $Q\colon \R^{d_L + d_L'} \to \R^{d_\textup{out}}$ be a neural network. A \emph{projection layer} is a map
	\begin{align*}
		\cQ\colon \cF_L \to \cF_\textup{out} \;\; \textup{with} \;\; \cQ(f)(x) = Q\bigl(x, f(x)\bigr).	
	\end{align*}
\end{definition}
\begin{definition}[Neural operator]\label{def:neural_operator}
	A \emph{neural operator (NO)} of depth $ L > 0 $ is a map of the form
	\begin{align*}
		\Psi\colon \cF_\textup{in} &\to \cF_\textup{out}\\
		f &\mapsto \bigl(\cQ\circ \mathcal{L}_L\circ \cdots \circ \mathcal{L}_1\circ \mathcal{R}\bigr)(f),
	\end{align*}
	where $\cR$ is a lifting layer as in \cref{def:lifting_layer}, each $\mathcal{L}_\ell$ is a hidden layer as in \cref{def:hidden_layer}, and $\cQ$ is a projection layer as in \cref{def:projection_layer}.
\end{definition}
\Cref{def:neural_operator} encompasses popular NOs such as Fourier neural operators~\cite{li2021fourier} and DeepONets~\cite{lu2021learning}. Key to our results is a generalization of NOs to mixed function-vector inputs \cite{huang2025operator,bhattacharya2025learning}. To this end, we require an additional layer type.
\begin{definition}[Vector-to-function layer]\label{def:vector-to-function}
	Fix $d_\textup{vec} \in \N$ and let $\Omega_{d_\textup{vec}} \subset \R^{d_\textup{vec}}$ be a subset. A \emph{vector-to-function layer} is a map of the form
	\begin{equation*}
		V\colon \Omega_{d_\textup{vec}} \to \cF_\textup{in} \;\; \textup{with} \;\; V(z)(x) \defeq h(x) \psi(z) ,
	\end{equation*}
	where $h \in \cF_\textup{in}$ and $\psi\colon \R^{d_\textup{vec}} \to \R$ is a neural network.
\end{definition}

As above, the function $h$ can be either learnable or pre-determined. We also accept the trivial linear neural network $\psi(z) = z$ as a valid choice for $\psi$.

\begin{definition}[Augmented neural operator]\label{def:augmented_neural_operator}
	An \emph{augmented neural operator} (AugNO) of depth $L > 0$ is a map of the form
	\begin{align*}
		\tilde \Psi\colon  \cF_{\textup{in}} \times \Omega_{d_\textup{vec}} &\to \cF_\textup{out}\\
		(f, z) &\mapsto \Psi\bigl(f \oplus V(z)\bigr),
	\end{align*}
	where $\Psi$ is a NO of depth $L$ with input space $\mathcal{F}_{\textup{in}}$ consisting of functions $\Omega_\textup{in} \to \R^{d_{\textup{in}} + d_{\textup{in}}}$ and all remaining parameters as in \cref{def:neural_operator}, $V$ is a vector-to-function layer as in \cref{def:vector-to-function}, and
	$f \oplus V(z)$ denotes the concatenation of the maps $f$ and $V(z)$, i.e., $(f \oplus V(z))(x) = (f(x), V(z)(x))$.
\end{definition}

\subsection{Universal approximation }\label{subsec:universal-approximation}
Our first main result is a universal approximation theorem for the kernel conditioning operator $ \cG^\star $. 
To prove it, we combine the stability results from \cref{sec:stability} with the Dugundji extension theorem~\cite{dugundji1951extension} and a universal approximation theorem for neural operators \cite[Thm.~1, p.~267]{lanthaler2025nonlocality}. All proofs can be found in \cref{app-subsec:proofs_universal_approximation}.
\begin{restatable}[Approximation of $\cG^\star$]{theorem}{thmApproxKernel}\label{thm:approx_kernel}
Fix
	$\delta > 0$ and let $K\subseteq \cX_{\delta}$ be any compact set.
	For any $\ep > 0$, there exists a neural operator
	$\Psi$ with input and output spaces $\cF_\textup{in} = \cF_\textup{out} = C(D\times E) $ such that
\begin{equation}\label{eqn:approx_kernel}
		\sup_{\rho\in K}\norm{\cG^\star(\rho)-\Psi(\rho)}_{C(D\times E)}\leq \ep.
	\end{equation}
\end{restatable}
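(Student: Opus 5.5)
The plan is to reduce the statement to a universal approximation theorem for neural operators acting on continuous operators $C(D\times E)\to C(D\times E)$, namely \cite[Thm.~1, p.~267]{lanthaler2025nonlocality}. That theorem requires a continuous operator defined on the whole Banach space $C(D\times E)$, or at least on a compact subset of it.

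\textbf{Step 1: continuity on $\cX_\delta$.} We first show that $\cG^\star\colon \cM_\delta\to C(D\times E)$ is continuous; this is the stability result of \cref{sec:stability}, which may be assumed. It can also be checked directly. For $\rho,\tilde\rho\in\cX_\delta$ we have $\abs{m_\rho(y)-m_{\tilde\rho}(y)}\le |D|\,\norm{\rho-\tilde\rho}_{C(D\times E)}$, and both marginals are bounded below by $\delta$. Writing
\[
\frac{\rho}{m_\rho}-\frac{\tilde\rho}{m_{\tilde\rho}}=\frac{\rho-\tilde\rho}{m_\rho}+\tilde\rho\,\frac{m_{\tilde\rho}-m_\rho}{m_\rho m_{\tilde\rho}}
\]
then gives a local Lipschitz bound of the form $\delta^{-1}\bigl(1+|D|\,\norm{\tilde\rho}_\infty\delta^{-1}\bigr)\norm{\rho-\tilde\rho}_\infty$. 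Continuity of $m_\rho$ in $y$ (uniform continuity of $\rho$ on a compact set) ensures that $\kappa_\rho\in C(D\times E)$, so $\cG^\star$ does map into $C(D\times E)$.

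\textbf{Step 2: extension to the whole space.} The set $\cX_\delta$ is a closed subset of the metric space $C(D\times E)$ (\cref{lem:completeness-of-metric-spaces}), and $C(D\times E)$ is a locally convex (normed) space. The Dugundji extension theorem \cite{dugundji1951extension} therefore gives a continuous extension $\overline{\cG}\colon C(D\times E)\to C(D\times E)$ with $\overline{\cG}|_{\cX_\delta}=\cG^\star$. Alternatively, one could apply Dugundji directly on the compact $K$, which is closed as well. We could also avoid Dugundji altogether by replacing $m_\rho$ with $\max(m_\rho,\delta)$, which already defines a continuous extension. Keeping Dugundji matches the paper's stated route.

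\textbf{Step 3: approximation.} We apply \cite[Thm.~1]{lanthaler2025nonlocality} to $\overline{\cG}$ on the compact set $K\subset C(D\times E)$, with $\Omega_{\rm in}=\Omega_{\rm out}=D\times E$. This domain is compact with Lipschitz boundary, since it is a product of Lipschitz domains. The theorem yields a neural operator $\Psi$ in the sense of \cref{def:neural_operator} with $\sup_{\rho\in K}\norm{\overline{\cG}(\rho)-\Psi(\rho)}_{C(D\times E)}\le\ep$. Since $K\subseteq\cX_\delta$, this is exactly \eqref{eqn:approx_kernel}.

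The main obstacle is Step 3's compatibility check: we must confirm that the cited theorem's hypotheses (input/output function spaces, Lipschitz-domain regularity, activation assumptions, and whether it needs continuity on all of $C(D\times E)$ or only on compacta) match our setting. This is the reason for Step 2's extension. The continuity estimate in Step 1 is routine once the lower bound $\delta$ on the marginals is available.
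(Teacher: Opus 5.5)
Your proposal is correct and follows the paper's proof essentially step for step. The paper gets continuity of $\cG^\star$ from the local Lipschitz bound of \cref{lem:local_lip_kernel_operator}, extends it by Dugundji to all of $C(D\times E)$, and applies \cite[Thm.~1]{lanthaler2025nonlocality} on the compact set $K$. The only differences are cosmetic: the paper extends from $K$ rather than from the closed set $\cX_\delta$, and your explicit extension $\rho\mapsto\rho/\max(m_\rho,\delta)$ is a valid Dugundji-free alternative.
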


Notice that the above approximation result holds uniformly over the set of densities $K$. 
That is, there exists a single neural operator that approximates conditioning on the entire compact set $K$ with arbitrarily high accuracy.
However, this engenders the question: what subsets $K$ of $\cX_\delta$ are compact? A representative example is given in \cref{cor:holder_markov}.

We now move on to our second main result.
It is a universal approximation theorem for the in-context conditioning operator, paralleling \cref{thm:approx_kernel}.
\begin{restatable}[Approximation of $ \Psi^\star $]{theorem}{thmApproxIncontext}
\label{thm:approx_incontext}
	For any $\delta > 0$, any compact set $K\subseteq \cX_{\delta}$, and any $\ep > 0$, there exists an augmented neural operator $\widetilde{\Psi}$ with input space $F_\textup{in} = C(D\times E)$ and $\Omega_{d_\textup{vec}} = E$ as well as output space $F_\textup{out} = C(D)$, as in \cref{def:augmented_neural_operator},
	such that
	\begin{align}\label{eqn:approx_incontext}
		\sup_{(\rho,y)\in K\times E}\norm{\Psi^\star(\rho,y)-\widetilde{\Psi}(\rho,y)}_{C(D)}\leq \ep.
	\end{align}
\end{restatable}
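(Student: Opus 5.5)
The plan is to reduce \cref{thm:approx_incontext} to \cref{thm:approx_kernel} by evaluating an approximate kernel at the query $y$, and to carry out that evaluation with an augmented neural operator. The map $\Psi^\star(\rho,y)=\cG^\star(\rho)(\slot,y)$ is the composition of the kernel operator with the evaluation map $(\kappa,y)\mapsto\kappa(\slot,y)$ (the exponential-map relation of \cref{rk:exponential-map}). If $\Psi$ is a neural operator with $\sup_{\rho\in K}\norm{\cG^\star(\rho)-\Psi(\rho)}_{C(D\times E)}\le\ep/2$, then $\sup_{y\in E}\norm{\cG^\star(\rho)(\slot,y)-\Psi(\rho)(\slot,y)}_{C(D)}\le\ep/2$ uniformly over $\rho\in K$. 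It therefore suffices to implement the operation $(\rho,y)\mapsto\Psi(\rho)(\slot,y)$, up to error $\ep/2$ and uniformly on $K\times E$, by an AugNO with output functions on $D$.

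The cleanest route avoids exact evaluation. Instead we directly apply a universal approximation theorem for operators with mixed function--vector input, in the form of \cite[Thm.~1]{lanthaler2025nonlocality} combined with the vector-to-function layer of \cref{def:vector-to-function}. The steps are as follows.

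\begin{enumerate}
\item Show that $\Psi^\star$ is continuous on $\cM^E_\delta$. This uses the stability results of \cref{sec:stability}, and can also be seen directly: $m_\rho\ge\delta$ and $\rho$ is uniformly continuous, so $(\rho,y)\mapsto\rho(\slot,y)/m_\rho(y)$ is jointly continuous in the sup/product metric.
\item Note that $K\times E$ is compact, since $E$ is compact.
\item Encode $y$ as a function on the common domain. Take $V(y)(x)=h(x)\psi(y)$ with $\psi$ the identity into $\R^r$, applied componentwise as a vector of channels. Because $h\equiv 1$ on $D\times E$, the map $y\mapsto V(y)$ is an isometric embedding of $E$ into constant functions, so $\{\rho\oplus V(y):(\rho,y)\in K\times E\}$ is a compact subset of $C(D\times E;\R^{1+r})$.
\item Define the target operator $\Phi(f)=\Psi^\star\bigl(f_1,\text{avg}(f_2)\bigr)$ on this compact set. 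It is continuous because the $y$-channel is recovered continuously, for instance by averaging. Extend it continuously to all of $C(D\times E;\R^{1+r})$ by the Dugundji extension theorem \cite{dugundji1951extension}, exactly as in the proof of \cref{thm:approx_kernel}.
\item Apply \cite[Thm.~1]{lanthaler2025nonlocality} to obtain a neural operator approximating the extension to accuracy $\ep$ uniformly on the compact set.
\end{enumerate}

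The main obstacle is the mismatch of domains: the input functions live on $D\times E$, while the output must live on $D$, but \cref{def:neural_operator} maps functions on one domain to functions on the same domain. I would handle this in one of two ways. One option is to view the output $\Psi^\star(\rho,y)$ as a function on $D\times E$ constant in the second variable, that is, approximate $\tilde\Phi(f)(x,y')=\Psi^\star(\ldots)(x)$, which is continuous into $C(D\times E)$, and then read off the output by restricting to any slice. The other option is to use the general domain-changing form of the universal approximation theorem, with a projection layer taking values in functions on $\Omega_{\mathrm{out}}=D$; the encoder/decoder structure $\langle T f,\psi\rangle\phi(x)$ permits decoding functions $\phi$ defined on a different domain. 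I would first try the second option, since the nonlocal encoder--decoder universal approximation in \cite{lanthaler2025nonlocality} is stated for distinct input and output domains. If that fails, I would fall back on the constant-extension trick, checking that restricting to a slice commutes with the sup-norm error bound.
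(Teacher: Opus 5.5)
Your proposal is correct and follows essentially the same route as the paper. Both lift $y$ to the constant function $c_y$ via a vector-to-function layer with $h\equiv 1$ and $\psi=\mathrm{id}$, restrict the lifted operator to the compact set $K\times\{c_y : y\in E\}$, extend it by Dugundji, and invoke the distinct-input/output-domain version of \cite[Thm.~1]{lanthaler2025nonlocality}. Your recovery of $y$ by averaging the constant channel usefully makes explicit the continuity of the lifted operator, which the paper states somewhat loosely as the composition $\Psi^\star\circ(\mathrm{id}\times\iota)$.
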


The proof of this result extends that of \cref{thm:approx_kernel}. 
The central difficulty is the fact that $\Psi^\star$ takes as input both a function and a vector. To address this, we \emph{lift} $\Psi^\star$ to the operator $\tilde \Psi^\star$ by identifying points $y \in \E$ with constant functions $c_y(z) = y$ for all $z \in \Rd$. We then use \cref{thm:continuity_incontext}, establishing the continuity of $\Psi^\star$, together with the Dugundji extension theorem~\cite{dugundji1951extension} and~\cite[Thm.~1]{lanthaler2025nonlocality} to obtain a neural operator that approximates $\tilde \Psi^\star$.

We close this subsection by giving a concrete example of a set of densities $K$ that is compact in $\cX_\delta$.
\begin{restatable}[Approximation on H\"older compact sets]
{corollary}{corHolderMarkov}\label{cor:holder_markov}
	Fix $ \al\in (0,1]$,  $R  > 0$, and $\delta > 0$. Define
	\begin{align}
		\cH^\alpha_R\defeq\set{f\in C^{0,\al}(D\times E)}{\norm{f}_{C^{0,\al}(D\times E)}\leq R} \qa \mathcal{S} \defeq \cX_\delta \cap \cH^\alpha_R .
	\end{align}
	For any $ \ep>0 $, there exist a neural operator $ \Psi\colon C(D\times E)\to C(D\times E) $ and an augmented neural operator $ \widetilde{\Psi}\colon C(D\times E)\times E\to C(D) $ such that
	\begin{align*}\sup_{\rho\in \mathcal{S}}\norm{\cG^\star(\rho)-\Psi(\rho)}_{C(D\times E)}\leq \ep \qa
		\sup_{(\rho,y)\in \mathcal{S}\times E}\norm{\Psi^\star(\rho,y)-\widetilde{\Psi}(\rho,y)}_{C(D)}\leq \ep.
	\end{align*}
\end{restatable}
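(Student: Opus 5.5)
The corollary follows from \cref{thm:approx_kernel} and \cref{thm:approx_incontext} once we show that $\mathcal{S}=\cX_\delta\cap\cH^\alpha_R$ is a compact subset of $\cX_\delta$ in the supremum-norm metric of $\cM_\delta$. Applying each theorem with $K=\mathcal{S}$ then yields the neural operator $\Psi$ and the augmented neural operator $\widetilde\Psi$ with the stated bounds. So the whole task is to prove compactness of $\mathcal{S}$ in $C(D\times E)$; if $\mathcal{S}$ is empty, the claim is vacuous.

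\textbf{Relative compactness.} I will use Arzel\`a--Ascoli on the compact metric space $D\times E$. Every $f\in\cH^\alpha_R$ satisfies $\norm{f}_{C(D\times E)}\le R$, so the family is uniformly bounded. It is also uniformly equicontinuous, since $\abs{f(z)-f(z')}\le R\norm{z-z'}^\alpha$. Hence $\cH^\alpha_R$, and therefore $\mathcal{S}$, is relatively compact in $C(D\times E)$.

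\textbf{Closedness.} I take a sequence $\rho_n\in\mathcal{S}$ with $\rho_n\to\rho$ uniformly and check each defining property of $\mathcal{S}$ for the limit.
\begin{itemize}
\item Nonnegativity is preserved by pointwise limits.
\item The normalization $\int\rho=1$ is preserved because $D\times E$ has finite Lebesgue measure, so uniform convergence passes under the integral.
\item Uniform convergence of $\rho_n$ gives $m_{\rho_n}\to m_\rho$ uniformly on $E$, with $\sup_y\abs{m_{\rho_n}(y)-m_\rho(y)}\le \abs{D}\,\norm{\rho_n-\rho}_\infty$. Therefore $\inf_E m_\rho\ge\delta$.
\item For the H\"older ball I pass to the limit pointwise in both $\abs{f(z)}\le\norm{f}_\infty$ and $\abs{f(z)-f(z')}/\norm{z-z'}^\alpha\le[f]_\alpha$. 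This gives $\norm{\rho}_{C^{0,\alpha}}\le\liminf_n\norm{\rho_n}_{C^{0,\alpha}}\le R$, i.e. lower semicontinuity of the H\"older norm under uniform convergence.
\end{itemize}
Thus $\rho\in\mathcal{S}$, so $\mathcal{S}$ is closed in $C(D\times E)$. A closed, relatively compact set is compact, and it sits inside $\cX_\delta$ with the induced metric.

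\textbf{Main obstacle.} The only delicate step is the lower semicontinuity argument for the H\"older ball. It must be checked against the precise norm convention, whether sum or max of $\norm{\cdot}_\infty$ and $[\cdot]_\alpha$. Both conventions work, because each piece is a supremum of functionals that are continuous under pointwise convergence, and a supremum of such functionals is lower semicontinuous. Once compactness of $\mathcal{S}$ is established, both approximation statements follow by direct invocation of the two theorems.
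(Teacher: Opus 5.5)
Your proposal is correct and follows the paper's proof. Both arguments show that $\mathcal{S}$ is compact in $C(D\times E)$ as a closed, relatively compact set, and then apply \cref{thm:approx_kernel,thm:approx_incontext} with $K=\mathcal{S}$; the differences are cosmetic, since you use Arzel\`a--Ascoli and check closedness by hand, whereas the paper cites the compact embedding $C^{0,\alpha}(D\times E)\hookrightarrow C^{0,\beta}(D\times E)$ for $0<\beta<\alpha$, together with \cref{lem:completeness-of-metric-spaces} and an external lemma for closedness of $\cH^\alpha_R$.
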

The proof combines a classical compact embedding theorem for H\"older spaces with the preceding two universal approximation theorems.
We note that H\"older densities form a rather large class.
Indeed, any Lipschitz density on a compact set is, in particular, $\alpha$-H\"older for any $\alpha \in (0, 1]$. Moreover, rational power functions $x \mapsto x^\alpha$ are $\alpha$-H\"older, for any $\alpha \in (0,1)$, whilst not being Lipschitz near the origin \cite[Chapter 4.1]{gilbarg2001elliptic}. Finally, sample paths of a Brownian motion are locally $\alpha$-H\"older for any $\alpha \in (0, \frac{1}{2})$ \cite[Chapter 2.2]{le2016brownian}.

\subsection{Unbounded domains}\label{sec:unbounded}
In this section, we extend the kernel conditioning operator $\cG^\star$ to densities defined over the entire space $\R^{m+r}$. To do so, we introduce some additional machinery.
For any $n \in \N$, write $\Lac^1(\R^n)$ for the space of all probability density functions. Further recall the notation $\essinf$ for the \emph{essential infimum} of a function in $\R^n$, i.e., the greatest lower bound that holds almost everywhere; see \cref{app-sec:notation}.
Now fix a bounded open set $B\subset\R^r$ and some $\delta>0$. Paralleling the construction of $\cX_\delta$, define
\begin{align}\label{eqn:defn_domain_set_unbounded}
    \cU_{\delta,B}\equiv \cU_\delta\defeq 
        \set{f \in \Lac^1(\R^{m}\times\R^r)}{\essinf_{y\in B} m_f(y)\geq \delta},
\end{align}
where $m_f(y)\defeq \int_{\R^m}f(x,y)\dd{x}$. 
Note that $B$ must be bounded, otherwise $\rho$ would not be a probability density because $\int \rho(x,y) \, dx \, dy = \int m_\rho(y) \, dy \geq \delta \cdot \infty$.
Now for $ \rho\in\Lac^1(\R^m\times \R^r) $, we extend $\kappa_\rho$ and $\cG^\star$ in the natural way.
That is, we let $\kappa_\rho \colon \R^m \times B \to \R_+$ with $\kappa_
\rho(x,y) \defeq \rho(x \condbar y) $ and\footnote{Note that  $\kappa_\rho\in L^1(\R^m\times B)$ due to the uniform lower bound on the marginals.} $\cG^\star\colon \cU_\delta \to L^1(\R^m \times B)$ with $\cG^\star(\rho) \defeq \kappa_\rho$.
Now, we define a class of neural operators that map between function spaces on unbounded domains.

\begin{definition}[FullNO]\label{eqn:defn_neural_op_unbounded}
For any $n \in N$ and any measurable set $\Omega\subset \R^n$, define the \emph{zero extension operator} $\cZ_{\Omega}\colon L^1(\Omega) \to L^1(\R^n)$ by 
$\cZ_\Omega(f)(x) = f(x)$ for $x \in \Omega$ and $\cZ_\Omega(f)(x) = 0$ for $x \not\in \Omega$. We also define the \emph{restriction operator} $\cR_\Omega\colon L^1(\R^n)\to L^1(\Omega)$ by $f\mapsto f|_\Omega$.
Fix $d$ and $ d'$ in $ \N $ and measurable sets $\Omega \subset \R^{d}$ and $\Omega' \subset \R^{d'}$. A \emph{full space neural operator (FullNO)} between $L^1(\R^{d})$ and $L^1(\R^{d'})$ with truncation sets $\Omega$ and $\Omega'$ is a map $\Psi\colon L^1(\R^{d})\to L^1(\R^{d'})$ of the form
\begin{align}\label{eqn:defn_full_no}
	\Psi = \cZ_{\Omega'} \circ \Psi_0 \circ S \circ \cR_{\Omega},
\end{align}
where $\Psi\colon L^1(\Omega)\to L^1(\Omega')$ is a NO in the sense of \cref{def:neural_operator} and $S\colon \R \to \R$ is a shallow ReLU neural network acting on an input function $f$ pointwise, i.e., $S(f)(x) = S(f(x))$ for each $x \in \Omega$.
\end{definition}

To prove a universal approximation theorem for $\cG^\star$ over $\R^{m + r}$, we combine \cref{lem:global_lip_kernel_operator_unbounded} with a novel whole space universal approximation theorem. Specifically, we prove that FullNOs can universally approximate continuous operators $ F^\star $ between $ L^1(\R^n) $ spaces; the estimate is uniform over an $ L^1 $-compact set $K$ of functions. The proof consists of five main steps. First, we cut off functions in $K$ at some $M > 0$ to obtain a set $K_M$ bounded in $L^\infty$. In the second and third steps, we show that we can introduce truncations sets $\Omega$ and $\Omega'$ and a map $\Psi_0\colon L^1(\Omega) \to L^1(\Omega')$ such that $\Psi_0$ is a good approximation of $F^\star$ uniformly over $K_M$. In the fourth step, we invoke an existing universal approximation theorem to obtain a NO $\Psi_{00}$ that approximates $\Psi_0$ well over $K_M$. In the last step, we compose $\Psi_{00}$ with appropriate truncation and zero extension operators to obtain a full space NO $\Psi$ that approximates $F^\star$ well over $K$. The details are in \cref{app-subsec:proofs_universal_approximation}.
\begin{restatable}[Universal approximation on whole space]{theorem}{thmUaMarkovUnbounded}\label{thm:ua_markov_unbounded}
	Let $F^\star\colon L^1(\R^d)\to L^1(\R^{d'})$ be continuous. Let $\ep>0$ be arbitrary. Let $K\subset L^1(\R^d)$ be any compact set. Then there exist $R>0$, $R'>0$, and a full space neural operator $\Psi\colon L^1(\R^d)\to L^1(\R^{d'})$ with $\Omega\defeq(-R,R)^d$ and $\Omega'\defeq(-R',R')^{d'}$ truncation sets as in \eqref{eqn:defn_neural_op_unbounded} such that
	\begin{align}
		\sup_{\rho\in K}\norm{F^\star(\rho)-\Psi(\rho)}_{L^1(\R^{d'})}\leq\ep.
	\end{align}
\end{restatable}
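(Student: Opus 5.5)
The plan follows the five-step outline described before the statement. Throughout, split the error into a truncation/cutoff part, a restriction part, and a finite-domain approximation part, each controlled by $\ep/5$.

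\textbf{Step 1 (cutoff).} Let $S_M(t)\defeq \max(0,\min(t,M))-\max(0,\min(-t,M))$, the clip of $t$ to $[-M,M]$. It is a shallow ReLU network, since $\min(t,M)=t-\mathrm{ReLU}(t-M)$. Because $K$ is $L^1$-compact, it is uniformly integrable, so $\sup_{\rho\in K}\norm{\rho-S_M(\rho)}_{L^1}\to 0$ as $M\to\infty$. To make this uniform, use a finite $\eta$-net together with the $1$-Lipschitz property of $S_M$ in $L^1$.

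\emph{Continuity.} $F^\star$ is uniformly continuous on the compact set $\widehat K\defeq K\cup\{S_M\rho,\ \rho\in K,\ M\geq1\}$. This set is compact: $\rho\mapsto S_M\rho$ is jointly continuous in $(\rho,1/M)$ when $M=\infty$ is included. Hence we can pick $M$ and $R$ with $\norm{F^\star(\rho)-F^\star(\one_\Omega S_M\rho)}_{L^1}\leq \ep/5$ for all $\rho\in K$. This uses tightness of $K$: $\sup_{\rho\in K}\int_{\Omega^c}\abs{\rho}\to0$ as $R\to\infty$.

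\textbf{Steps 2--3 (truncation sets and $\Psi_0$).} Since $F^\star(\widehat K)$ is compact in $L^1(\R^{d'})$, it is tight, so there is $R'$ with $\sup\norm{F^\star(g)-\one_{\Omega'}F^\star(g)}\leq\ep/5$ on it. Define
\[
\Psi_0\colon L^1(\Omega)\to L^1(\Omega'), \qquad \Psi_0(f)\defeq \cR_{\Omega'}F^\star(\cZ_\Omega f).
\]
This map is continuous, and $\cZ_{\Omega'}\Psi_0(S_M\cR_\Omega\rho)$ is $2\ep/5$-close to $F^\star(\rho)$ on $K$. The set $K_M\defeq\{S_M\cR_\Omega\rho:\rho\in K\}$ is compact in $L^1(\Omega)$ and bounded in $L^\infty$ by $M$.

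\textbf{Step 4 (main obstacle).} We must approximate $\Psi_0$ on $K_M$ by a NO in the sense of \cref{def:neural_operator}, invoking \cite[Thm.~1]{lanthaler2025nonlocality}. That theorem is stated for operators on Sobolev- or $C$-type spaces on bounded Lipschitz domains. $L^1(\Omega)$ is covered only if the theorem allows $L^p$ inputs, and since the NO's hidden layers act via $L^2$ pairings $\langle T f,\psi\rangle$, inputs must lie in $L^2$. This is exactly why the $L^\infty$ cutoff is needed: on $K_M$, the $L^1$ and $L^2$ topologies agree, by $\norm{f}_{L^2}^2\leq 2M\norm{f}_{L^1}$ for $\abs f\leq M$. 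So $K_M$ is compact in $L^2(\Omega)$, and $\Psi_0$ restricted to it is continuous into $L^2(\Omega')\hookrightarrow L^1(\Omega')$, since $\Omega'$ is bounded.

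First I would extend $\Psi_0|_{K_M}$ to a continuous map on all of $L^2(\Omega)$ by the Dugundji extension theorem, as in the proof of \cref{thm:approx_kernel}. Then I would apply the $L^2\to L^2$ universal approximation theorem to obtain a NO $\Psi_{00}$ with $\sup_{K_M}\norm{\Psi_{00}-\Psi_0}_{L^2(\Omega')}\leq \ep/(5\abs{\Omega'}^{1/2})$. If only continuous-function inputs are permitted, I would instead insert a fixed mollification, which is realizable inside the first hidden layer via encoding/decoding functions, at an extra $\ep/5$ cost using uniform continuity again.

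\textbf{Step 5.} Set $\Psi\defeq\cZ_{\Omega'}\circ\Psi_{00}\circ S_M\circ\cR_\Omega$. Note that $S_M$ commutes with restriction. Summing the error contributions by the triangle inequality gives at most $\ep$ uniformly on $K$.
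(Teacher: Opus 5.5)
\textbf{Steps 1--3 and 5.} These match the paper's proof in substance. The paper cuts off and restricts in two separate steps, each controlled by \cref{lem:map_continuous_compact_neighborhood}; doing both at once is fine. Note, however, that $\one_\Omega S_M\rho\notin\widehat K$ in general. What you need is therefore that lemma, i.e.\ uniform control of $F^\star$ on a neighborhood of $K$, not uniform continuity on $\widehat K$.

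\textbf{The gap is in Step~4.} You assert that $\Psi_0|_{K_M}$ is continuous into $L^2(\Omega')\hookrightarrow L^1(\Omega')$. That embedding points the wrong way for your purpose. We only know $\Psi_0(f)=\cR_{\Omega'}F^\star(\cZ_\Omega f)$ lies in $L^1(\Omega')$, because $F^\star$ takes values in $L^1(\R^{d'})$. Your $L^\infty$ cutoff controls the inputs, not the outputs. Hence neither a Dugundji extension with values in $L^2(\Omega')$ nor an $L^2\to L^2$ universal approximation theorem can be applied to $\Psi_0$. Indeed, the target bound $\sup_{K_M}\|\Psi_{00}-\Psi_0\|_{L^2(\Omega')}\le \ep/(5\abs{\Omega'}^{1/2})$ is infinite whenever $\Psi_0(f)\notin L^2(\Omega')$.

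\textbf{Repair.} The Sobolev-space theorem of \cite{lanthaler2025nonlocality} treats the input and output exponents $p$ and $p'$ separately, so you may take $p=2$, $p'=1$. Dugundji then becomes unnecessary, since $\Psi_0$ composed with $L^2(\Omega)\hookrightarrow L^1(\Omega)$ is already continuous on all of $L^2(\Omega)$. Alternatively, cut off the outputs as well: replace $\Psi_0$ by $T_N\circ\Psi_0$ and use uniform integrability of the compact set $\Psi_0(K_M)\subset L^1(\Omega')$.

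\textbf{The paper's route, and the role of the cutoff.} The paper does the simplest thing: it applies \cite[Thm.~2]{lanthaler2025nonlocality} directly with $s=s'=0$ and $p=p'=1$. This also shows that your diagnosis of the cutoff is off. $L^2$-membership of the inputs is not the issue, since pairings against bounded encoding functions make sense for $f\in L^1$. The cutoff is there to make $\cR_\Omega(K_M)$ bounded in $L^\infty(\Omega)$, which is the setting in which that theorem is invoked. Your mollification fallback is therefore unneeded. As described it is also not realizable exactly: a hidden layer with finitely many encoding/decoding pairs has finite rank and cannot implement a convolution.
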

\begin{restatable}[Approximation of $\cG^\star$ on whole space]{corollary}{corUaMarkovUnboundedConditioning}\label{cor:ua_markov_unbounded_conditioning}
	Let $\ep>0$ and $\delta>0$ be arbitrary. Let $B\subset\R^r$ be bounded, open, and measurable. Let $\cU_\delta=\cU_{\delta, B}$ be as in \eqref{eqn:defn_domain_set_unbounded}.  Let $K\subset\cU_\delta$ be compact in $L^1(\R^m\times\R^r)$. Then there exist $R>0$ and a full space neural operator $\Psi\colon L^1(\R^m\times\R^r)\to L^1(\R^m\times\R^r)$ depending on $\Omega=\Omega'\defeq (-R,R)^{m+r}$ as in \eqref{eqn:defn_neural_op_unbounded} such that
	\begin{align}\label{eqn:ua_markov_unbounded_conditioning}
		\sup_{\rho\in K}\norm{\cG^\star(\rho)-\Psi(\rho)}_{L^1(\R^m\times B)}\leq\ep.
	\end{align}
\end{restatable}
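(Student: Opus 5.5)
The plan is to reduce \cref{cor:ua_markov_unbounded_conditioning} to \cref{thm:ua_markov_unbounded} by building a continuous operator $F^\star\colon L^1(\R^m\times\R^r)\to L^1(\R^m\times\R^r)$ that agrees with $\cZ_{\R^m\times B}\circ\cG^\star$ on $\cU_\delta$. Granting this, fix $\ep>0$ and apply \cref{thm:ua_markov_unbounded} with $d=d'=m+r$ to $F^\star$ and the compact set $K\subset\cU_\delta$. This yields a FullNO $\Psi$ with truncation cubes $(-R,R)^{m+r}$ and $(-R',R')^{m+r}$ such that $\sup_{\rho\in K}\norm{F^\star(\rho)-\Psi(\rho)}_{L^1(\R^{m+r})}\le\ep$. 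Restricting the $L^1$ norm to $\R^m\times B$ can only decrease it, and there $F^\star(\rho)=\cG^\star(\rho)$. This gives \eqref{eqn:ua_markov_unbounded_conditioning}. Finally, replacing $R,R'$ by $\max(R,R')$ gives a single common cube. Enlarging the truncation sets should cost nothing: the proof of \cref{thm:ua_markov_unbounded} chooses $R,R'$ large enough that tail mass is small, and larger values still work. If one prefers not to reopen that proof, I would instead note that precomposing and postcomposing with restriction and zero extension to a bigger cube keeps the FullNO form.

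The construction of $F^\star$ starts from \cref{lem:global_lip_kernel_operator_unbounded}. I expect it to give (global) Lipschitz continuity of $\cG^\star\colon\cU_\delta\to L^1(\R^m\times B)$ in the $L^1$ metric, on the whole of $\cU_\delta$ and not only on compacts. I would then extend $\cZ_{\R^m\times B}\circ\cG^\star$ from the closed set $\cU_\delta\subset L^1(\R^{m+r})$ to all of $L^1(\R^{m+r})$ by the Dugundji extension theorem~\cite{dugundji1951extension}. This theorem applies because the target $L^1(\R^{m+r})$ is a locally convex (Banach) space, and $\cU_\delta$ is a closed subset of the metric space $L^1(\R^{m+r})$.

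Closedness of $\cU_\delta$ is a small check. Nonnegativity and unit mass pass to $L^1$ limits. If $f_n\to f$ in $L^1$, then $m_{f_n}\to m_f$ in $L^1(B)$ by Fubini, so along a subsequence the convergence is a.e. on $B$, and the bound $\essinf_B m_f\ge\delta$ is preserved. Alternatively, and avoiding Dugundji, one can write an explicit globally defined extension: take
\[
F^\star(\rho)(x,y)=\one_B(y)\,\frac{\rho_+(x,y)}{\max(m_{\rho_+}(y),\delta)}.
\]
This agrees with $\cG^\star$ on $\cU_\delta$, and its continuity on $L^1$ follows by the same quotient estimate as in the lemma, since the denominator is bounded below by $\delta$.

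The main obstacle is confirming that the stability estimate for $\cG^\star$ really holds in $L^1$ over the unbounded $x$-domain. This is the content of \cref{lem:global_lip_kernel_operator_unbounded}, which I take as given. If I had to reprove it, I would split
\[
\frac{\rho}{m_\rho}-\frac{\sigma}{m_\sigma}=\frac{\rho-\sigma}{m_\rho}+\sigma\,\frac{m_\sigma-m_\rho}{m_\rho m_\sigma},
\]
integrate in $x$ first, and use $\int_{\R^m}\sigma\,dx=m_\sigma$ together with $\norm{m_\rho-m_\sigma}_{L^1(B)}\le\norm{\rho-\sigma}_{L^1}$. This gives the Lipschitz constant $2/\delta$. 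Everything else is bookkeeping about truncation sets.
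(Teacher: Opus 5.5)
Your proposal is correct and follows the paper's proof. Extend $\cZ\circ\cG^\star$ to a continuous map on all of $L^1(\R^{m+r})$, apply \cref{thm:ua_markov_unbounded}, pass to the larger of the two radii, and restrict the norm to $\R^m\times B$. The differences are minor: the paper applies Dugundji directly from the compact (hence closed) set $K$, so your closedness check for $\cU_\delta$ is unnecessary, and your explicit extension $\one_B(y)\rho_+/\max(m_{\rho_+},\delta)$ is a valid constructive alternative that is globally $2/\delta$-Lipschitz on $L^1$ and avoids Dugundji altogether.
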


The above theorem allows us to obtain an approximation over sets of densities $K$ that embed compactly in $\cU_\delta$. One such example is given by bounded sets in \emph{weighted Sobolev spaces}.\footnote{
    Below we are using notation $L^1_\alpha(\R^{m+r}) \defeq \set{f \in L^1(\R^{m+r})}{\int |f(x) | \, (1 + \|x\|^\alpha) \dd{x} < \infty }$ for $\alpha > 0$.
}
Indeed, if $K$ is a bounded subset of $W^{1,1}_\alpha(\R^{m+r}) \defeq W^{1,1}(\R^{m+r}) \cap L^1_\alpha(\R^{m+r}) $ then by~\cite[Corollary 4.27]{brezis2011functional} $K \cap \cU_\delta \subset \cU_\delta$ is compact. In particular, the density of any Gaussian mixture lies in $W^{1,1}_\alpha(\R^{m+r})$ for any $\alpha > 0$. Thus, we obtain approximation of the conditioning operator over sets of Gaussian mixtures with bounded means and covariance matrices.

\section{Continuity of conditioning}\label{sec:stability}
The key to establishing the universal approximation arguments of \cref{sec:approximation} is to show that the conditioning operators $\cG^\star$ and $\Psi^\star$ are continuous. \Cref{subsec:stability-results} establishes stability results for the conditioning operators, in particular implying that they are continuous.
In \cref{sec:removing_unif_lower_bound}, the conditioning operators are continuously extended beyond $\cX_\delta$ to the space of \emph{positive densities}.
The reader can refer to \cref{sec:formulation} for a reminder on the domains and codomains of $\cG^\star$ and $\Psi^\star$  as well as for the topologies used.
All proofs may be found in \cref{app-subsec:proofs_stability}.
\subsection{Stability results}\label{subsec:stability-results}
First, we establish the local Lipschitz stability for the kernel conditioning operator $\cG^\star$ on the metric space $\cM_\delta$ defined in \eqref{eq:metric_spaces_M_delta_Y}. That is, $\cG^\star$ is Lipschitz continuous in a neighborhood of any point in $\cX_\delta$ equipped with the topology of uniform convergence. 
\begin{restatable}[$ \cG^\star $ is locally Lipschitz]{lemma}{lemLocalLipKernelOperator}
\label{lem:local_lip_kernel_operator}
	For any $ p\in\cX_{\delta} $ and $ q\in\cX_{\delta} $, it holds that
	\begin{align}
		\norm{\cG^\star(p)-\cG^\star(q)}_{C(D\times E)}\leq
		\frac{1}{\delta} \biggl(1+\frac{\abs{D}\min(\norm{p}_{C(D\times E)},\norm{q}_{C(D\times E)}}{\delta}\biggr)\norm{p-q}_{C(D\times E)}.
	\end{align}
\end{restatable}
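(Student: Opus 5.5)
We argue pointwise and then take a supremum. Fix $(x,y)\in D\times E$ and write $m_p \defeq m_p(y)$ and $m_q \defeq m_q(y)$, so that $m_p, m_q\geq\delta$ because $p,q\in\cX_\delta$. The plan is the standard add-and-subtract decomposition of a difference of quotients, choosing the intermediate term so that the smaller of the two sup norms appears. By symmetry of the claimed bound in $p$ and $q$, assume without loss of generality that $\norm{q}_{C(D\times E)}\leq\norm{p}_{C(D\times E)}$. We then write
\begin{align*}
	\frac{p(x,y)}{m_p}-\frac{q(x,y)}{m_q}
	= \frac{p(x,y)-q(x,y)}{m_p} + q(x,y)\,\frac{m_q-m_p}{m_p\, m_q}.
\end{align*}

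The two terms are bounded separately. The first is at most $\norm{p-q}_{C(D\times E)}/\delta$. For the second, the marginal difference satisfies
\begin{align*}
	\abs{m_q-m_p}=\abs*{\int_D (q-p)(x',y)\dd{x'}}\leq \abs{D}\,\norm{p-q}_{C(D\times E)},
\end{align*}
where $\abs{D}$ denotes the Lebesgue measure of $D$, which is finite since $D$ is compact. In addition, $\abs{q(x,y)}\leq\norm{q}_{C(D\times E)}$ and $m_p m_q\geq\delta^2$. Together these give a second-term bound of $\abs{D}\,\norm{q}_{C(D\times E)}\norm{p-q}_{C(D\times E)}/\delta^2$.

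Adding the two bounds gives
\begin{align*}
	\abs*{\frac{p(x,y)}{m_p}-\frac{q(x,y)}{m_q}} \leq \frac{1}{\delta}\biggl(1+\frac{\abs{D}\,\norm{q}_{C(D\times E)}}{\delta}\biggr)\norm{p-q}_{C(D\times E)}.
\end{align*}
Taking the supremum over $(x,y)$ yields the claim, since $\norm{q}_{C(D\times E)}=\min(\norm{p}_{C(D\times E)},\norm{q}_{C(D\times E)})$ under our assumption. If instead $\norm{p}_{C(D\times E)}\leq\norm{q}_{C(D\times E)}$, we repeat the argument with the roles of $p$ and $q$ swapped, i.e.\ we factor $p(x,y)$ out of the cross term and divide the direct difference by $m_q$.

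There is no real obstacle here. The only points needing care are choosing the decomposition so that the factor outside the cross term is the function with the smaller norm, which is what produces the $\min$, and noting that $\kappa_p$ and $\kappa_q$ are indeed in $C(D\times E)$. The latter holds because $y\mapsto m_p(y)$ is continuous: $p$ is uniformly continuous on the compact set $D\times E$, so the integral in $x$ depends continuously on $y$. Since $m_p$ is also bounded below by $\delta$, the quotient defining $\kappa_p$ is continuous.
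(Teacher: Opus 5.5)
Your proof is correct and is essentially the paper's argument. The paper uses the same add-and-subtract decomposition of the difference of quotients, with intermediate term $p/m_q$ rather than your $q/m_p$. It bounds the direct term by $\delta^{-1}\norm{p-q}_{C(D\times E)}$ and the cross term via $\norm{m_p-m_q}_{C(E)}\leq\abs{D}\norm{p-q}_{C(D\times E)}$ and $m_pm_q\geq\delta^2$, then gets the $\min$ by symmetry. Your remark that $\kappa_p\in C(D\times E)$ is a useful detail the paper leaves implicit.
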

The result is elementary, relying on the uniform lower bound $\delta > 0$ on densities in $\cX_\delta$ to avoid a division by zero in the definition of the conditional density.
The following corollary delivering global Lipschitz stability on a restricted domain is immediate; cf.~\cref{rmk:uniform_bound}.
\begin{corollary}[$ \cG^\star $ is Lipschitz on balls]\label{cor:local_lip_kernel_operator}
Fix $R > 0$ and define
	\begin{equation*}
		\mathcal{B}_R \defeq \set{f \in C(D\times E)}{\|f\|_{C(D\times E)} \leq R}.
	\end{equation*}
	Then, for any $p$ and $ q $ in $ \mathcal{B}_R \cap \cX_\delta$, it holds that
	\begin{align}
		\norm{\cG^\star(p)-\cG^\star(q)}_{C(D\times E)}\leq
		\frac{1}{\delta} \biggl(1+\frac{\abs{D}  R}{\delta}\biggr)\norm{p-q}_{C(D\times E)}.
	\end{align}
\end{corollary}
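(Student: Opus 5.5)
This follows directly from \cref{lem:local_lip_kernel_operator}, which we take as given. Fix $p,q\in\mathcal{B}_R\cap\cX_\delta$. Because both lie in $\cX_\delta$, the lemma gives
\begin{align*}
\norm{\cG^\star(p)-\cG^\star(q)}_{C(D\times E)}\leq \frac{1}{\delta}\biggl(1+\frac{\abs{D}\min(\norm{p}_{C(D\times E)},\norm{q}_{C(D\times E)})}{\delta}\biggr)\norm{p-q}_{C(D\times E)}.
\end{align*}
Because both also lie in $\mathcal{B}_R$, we have $\norm{p}_{C(D\times E)}\leq R$ and $\norm{q}_{C(D\times E)}\leq R$, so
\[
\min\bigl(\norm{p}_{C(D\times E)},\norm{q}_{C(D\times E)}\bigr)\leq R .
\]
The prefactor is nondecreasing in this minimum, since $\abs{D}\geq 0$ and $\delta>0$. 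Replacing the minimum by $R$ therefore gives the stated Lipschitz bound with constant $\delta^{-1}(1+\abs{D}R/\delta)$, uniformly over $\mathcal{B}_R\cap\cX_\delta$.

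The only point requiring care is that the constant must not depend on the pair $(p,q)$. The monotonicity observation above settles this.

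The result is only meaningful when $\mathcal{B}_R\cap\cX_\delta$ is nonempty. This needs $R\,\abs{D}\geq\delta$, because $\delta\leq m_\rho(y)\leq \abs{D}\norm{\rho}_{C(D\times E)}$. The estimate holds vacuously otherwise, so no case split is actually needed.
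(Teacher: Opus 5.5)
Your proposal is correct and matches the paper's approach: the paper calls this corollary immediate from \cref{lem:local_lip_kernel_operator}, obtained by bounding $\min(\norm{p}_{C(D\times E)},\norm{q}_{C(D\times E)})$ by $R$ in the nondecreasing prefactor. Your side remark that $\mathcal{B}_R\cap\cX_\delta\neq\emptyset$ forces $\abs{D}R\geq\delta$ is also correct.
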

Finally, we can get an analogous result to \cref{lem:local_lip_kernel_operator} by changing the underlying function spaces. 
\begin{restatable}[Whole space Lipschitz]
{lemma}{lemGlobalLipKernelOperatorUnbounded}\label{lem:global_lip_kernel_operator_unbounded}
	Let $\cU_\delta$ be as in \eqref{eqn:defn_domain_set_unbounded}. 
    For any $ p\in\cU_{\delta} $ and $ q\in\cU_{\delta} $, it holds that
	\begin{align}
		\norm{\cG^\star(p)-\cG^\star(q)}_{L^1(\R^m\times B)}\leq
		\frac{2}{\delta} \norm{p-q}_{L^1(\R^m\times\R^r)}.
	\end{align}
\end{restatable}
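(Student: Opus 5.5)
The plan is to split the conditional difference pointwise for almost every $y \in B$ and then integrate in $x$ and $y$. Write $m_p, m_q$ for the marginals, so that $m_p(y) \geq \delta$ and $m_q(y) \geq \delta$ for almost every $y\in B$ by the definition of $\cU_\delta$. For such $y$ and almost every $x\in\R^m$, add and subtract $q(x,y)/m_p(y)$:
\begin{align*}
	\frac{p(x,y)}{m_p(y)}-\frac{q(x,y)}{m_q(y)}
	= \frac{p(x,y)-q(x,y)}{m_p(y)} + q(x,y)\,\frac{m_q(y)-m_p(y)}{m_p(y)\,m_q(y)}.
\end{align*}
The choice of which density to put in the second term matters. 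Putting $q$ there produces a factor $q(x,y)/m_q(y) = q(x\condbar y)$, which is a conditional density and integrates to one in $x$. This is what removes the $\min(\norm{p},\norm{q})$ factor that appeared in \cref{lem:local_lip_kernel_operator}.

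Next, I fix such a $y$ and integrate the absolute value over $x\in\R^m$, using the triangle inequality. The first term contributes at most $\frac{1}{\delta}\int_{\R^m}\abs{p(x,y)-q(x,y)}\dd{x}$. The second term contributes $\frac{\abs{m_p(y)-m_q(y)}}{m_p(y)}\int_{\R^m} q(x\condbar y)\dd{x} = \frac{\abs{m_p(y)-m_q(y)}}{m_p(y)}$. Since $\abs{m_p(y)-m_q(y)}\leq\int_{\R^m}\abs{p(x,y)-q(x,y)}\dd{x}$, this is also at most $\frac{1}{\delta}\int_{\R^m}\abs{p-q}(x,y)\dd{x}$. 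Integrating over $y\in B$ then gives
\begin{align*}
	\norm{\cG^\star(p)-\cG^\star(q)}_{L^1(\R^m\times B)} \leq \frac{2}{\delta}\norm{p-q}_{L^1(\R^m\times B)} \leq \frac{2}{\delta}\norm{p-q}_{L^1(\R^m\times\R^r)}.
\end{align*}

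The only real care needed is measure-theoretic, since $p$ and $q$ are merely $L^1$ functions. By Fubini--Tonelli, $m_p$ and $m_q$ are measurable and finite for almost every $y$, and the sections $x\mapsto p(x,y)$ and $x\mapsto q(x,y)$ are integrable for almost every $y$. The essential-infimum condition holds only almost everywhere on $B$, so I would discard a null set of $y$ at the outset. Tonelli then justifies the iterated integration of the nonnegative integrand $\abs{p(x\condbar y)-q(x\condbar y)}$. I do not expect any genuine obstacle beyond choosing the add-and-subtract term correctly so that the conditional density normalizes to one.
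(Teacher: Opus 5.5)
Your proof is correct and is essentially the paper's argument. The paper adds and subtracts $p/m_q$ rather than $q/m_p$, so its marginal-difference term carries the factor $p(x,y)/m_p(y)$, which integrates to one in $x$ exactly as your $q(x\condbar y)$ does; the rest of the bound by $\frac{2}{\delta}\norm{p-q}_{L^1}$ proceeds identically.
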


We now turn our attention to the in-context conditioning operator $\Psi^\star$. The analysis of this operator is slightly more complicated than that of $ \cG^\star $. 
The next result shows that $\Psi^\star$ is continuous.

\begin{restatable}[Continuity of $ \Psi^\star $]{theorem}{thmContinuityIncontext}
\label{thm:continuity_incontext}
	The map $ \Psi^\star $ in \eqref{eqn:operator_incontext} is continuous.
\end{restatable}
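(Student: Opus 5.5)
The plan is to show sequential continuity at an arbitrary point $(p,y_0)\in\cX_\delta\times E$. Since $\cM^E_\delta$ is a metric space, this is enough. The natural approach is to factor $\Psi^\star$ through the kernel operator $\cG^\star$, whose local Lipschitz continuity is already available from \cref{lem:local_lip_kernel_operator}. For $(q,y)\in\cX_\delta\times E$ we have $\Psi^\star(q,y)=\cG^\star(q)(\slot,y)$. Inserting the intermediate term $\cG^\star(p)(\slot,y)$ and using the triangle inequality gives
\begin{align*}
\norm{\Psi^\star(q,y)-\Psi^\star(p,y_0)}_{C(D)}
\leq \norm{\cG^\star(q)-\cG^\star(p)}_{C(D\times E)} + \sup_{x\in D}\abs{\kappa_p(x,y)-\kappa_p(x,y_0)}.
\end{align*}
The two terms on the right are handled separately.

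\textbf{First term.} Restricting a function on $D\times E$ to the slice $D\times\{y\}$ cannot increase the supremum norm, so this term is bounded by the whole-domain difference. \Cref{lem:local_lip_kernel_operator} then bounds it by
\[
\frac{1}{\delta}\Bigl(1+\frac{\abs{D}\,\norm{p}_{C(D\times E)}}{\delta}\Bigr)\norm{p-q}_{C(D\times E)}.
\]
The constant depends only on $p$, so the first term goes to zero as $q\to p$, uniformly in $y$.

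\textbf{Second term.} This is the main point. We need $\kappa_p=\cG^\star(p)$ to be uniformly continuous on $D\times E$. By \eqref{eqn:operator_kernel} we know $\cG^\star(p)\in C(D\times E)$, but I will verify it directly.
\begin{itemize}
\item $p$ is continuous on the compact set $D\times E$, so it is uniformly continuous.
\item The marginal $m_p(y)=\int_D p(x,y)\dd{x}$ is continuous in $y$, with
\[
\abs{m_p(y)-m_p(y')}\le \abs{D}\sup_{x\in D}\abs{p(x,y)-p(x,y')}.
\]
This follows from dominated convergence, or directly from uniform continuity.
\item Since $m_p\ge\delta>0$ on $E$, the quotient $\kappa_p=p/m_p$ is continuous on the compact set $D\times E$, hence uniformly continuous.
\end{itemize}
Therefore $\sup_{x\in D}\abs{\kappa_p(x,y)-\kappa_p(x,y_0)}\to0$ as $\norm{y-y_0}\to0$.

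\textbf{Conclusion.} Given $\ep>0$, choose $\eta>0$ so that both terms are below $\ep/2$ whenever $\max(\norm{p-q}_{C(D\times E)},\norm{y-y_0})<\eta$. By the definition of the product norm, this is exactly the condition $\sfd_{C(D\times E)\times E}\bigl((q,y),(p,y_0)\bigr)<\eta$, which gives continuity.

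The only real obstacle is the uniform continuity of $\kappa_p$ in the second variable. It is resolved by compactness of $D\times E$ together with the lower bound $\delta$ on the marginal, which keeps the denominator away from zero. Everything else is a direct consequence of \cref{lem:local_lip_kernel_operator}.
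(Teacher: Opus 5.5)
Your proof is correct and follows essentially the same route as the paper. Both split the error through the intermediate term $\rho(\cdot \mid y_n)$ (your $\cG^\star(p)(\cdot,y)$) and control the first piece by taking the supremum over the query and applying \cref{lem:local_lip_kernel_operator}. The only cosmetic difference is in the second piece: the paper writes out explicit bounds via the modulus of continuity of $\rho$ and the marginal, whereas you appeal directly to uniform continuity of $\kappa_p = p/m_p$ on the compact set $D\times E$, which amounts to the same thing.
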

The main idea of the proof is to take a convergent sequence $(p_n,y_n)\to (p,y)$ in $\cX_\delta\times E$ and use a triangle inequality to bound the error $\| \Psi^\star(p_n,y_n)-\Psi^\star(p,y)\|_{C(D)}$ by the sum of two terms: $\| \Psi^\star(p_n,y_n)-\Psi^\star(p,y_n)\|_{C(D)}$ and $\|\Psi^\star(p,y_n)-\Psi^\star(p,y)\|_{C(D)}$. The first term can be made small by taking a supremum over all $y \in E$ and using \cref{lem:local_lip_kernel_operator}; the the second term can be made small by an elementary argument, using the uniform lower bound $\delta > 0$ as well as the fact that continuous densities $p_n$ on the compact set $D\times E$ are uniformly continuous and hence $p_n \to p$ implies that marginals converge $m_{p_n} \to m_p$ as $ n\to\infty $.

We can upgrade the qualitative continuity result of \cref{thm:continuity_incontext} to a quantitative H\"older stability estimate for $ \Psi^\star $ by further restricting its domain. This is the content of the following corollary.

\begin{restatable}[H\"older stability of $ \Psi^\star $ on balls]{corollary}{corHolderStabilityIncontext}
\label{cor:holder_stability_incontext}
	Let $ \al\in (0,1] $. Fix $ R>0 $ and define
	\begin{align*}
		\cH_R^\alpha\defeq\set{f\in C^{0,\al}(D\times E)}{\norm{f}_{C^{0,\al}(D\times E)}\leq R}.
	\end{align*}
 	For any $ (p,y) $ and $ (q, y') $ belonging to $ (\cX_{\delta}\cap \cH_R^\alpha)\times E $, it holds that
	\begin{align}\label{eqn:holder_stability_incontext}
		\norm{\Psi^\star(p,y)-\Psi^\star(q,y')}_{C(D)}\leq \frac{(1+2R)}{\delta}\biggl(1+\frac{\abs{D}R}{\delta}\biggr)\norm[\big]{(p,y)-(q,y')}^\al_{C(D\times E)\times E}\,.
	\end{align}
\end{restatable}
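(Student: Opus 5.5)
We split the error with the triangle inequality, as in the proof of \cref{thm:continuity_incontext}:
\begin{align*}
\norm{\Psi^\star(p,y)-\Psi^\star(q,y')}_{C(D)} \leq \norm{\Psi^\star(p,y)-\Psi^\star(q,y)}_{C(D)} + \norm{\Psi^\star(q,y)-\Psi^\star(q,y')}_{C(D)} .
\end{align*}
Write $\eta\defeq\norm{(p,y)-(q,y')}_{C(D\times E)\times E}$, and set $L\defeq\frac{1}{\delta}\bigl(1+\frac{\abs{D}R}{\delta}\bigr)$.

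\textbf{First term.} Since $\norm{\cdot}_{C(D\times E)}\leq\norm{\cdot}_{C^{0,\al}(D\times E)}$, both $p$ and $q$ lie in $\mathcal{B}_R\cap\cX_\delta$. Taking the supremum over $y\in E$ and applying \cref{cor:local_lip_kernel_operator} gives
\begin{align*}
\norm{\Psi^\star(p,y)-\Psi^\star(q,y)}_{C(D)}\leq L\,\norm{p-q}_{C(D\times E)}\leq L\eta .
\end{align*}

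\textbf{Second term.} For fixed $q$ we use the standard decomposition
\begin{align*}
\frac{q(x,y)}{m_q(y)}-\frac{q(x,y')}{m_q(y')} = \frac{q(x,y)-q(x,y')}{m_q(y)} + q(x,y')\,\frac{m_q(y')-m_q(y)}{m_q(y)\,m_q(y')} .
\end{align*}
The Hölder bound gives $\abs{q(x,y)-q(x,y')}\leq R\norm{y-y'}^\al$. Integrating over $D$ gives $\abs{m_q(y)-m_q(y')}\leq\abs{D}R\norm{y-y'}^\al$. Using $m_q\geq\delta$ and $\abs{q}\leq R$, the second term is therefore bounded by
\begin{align*}
\frac{R}{\delta}\Bigl(1+\frac{\abs{D}R}{\delta}\Bigr)\norm{y-y'}^\al\leq RL\,\eta^\al .
\end{align*}

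\textbf{Combining.} The two terms give at most $L\eta+RL\eta^\al$, but the target is $(1+2R)L\eta^\al$, so we must convert $\eta$ into $\eta^\al$. This is the main obstacle, since $\eta\leq\eta^\al$ holds only when $\eta\leq1$. We split into two cases.

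\emph{Case $\eta\leq1$.} Then $\eta\leq\eta^\al$, so the sum is at most $(1+R)L\eta^\al\leq(1+2R)L\eta^\al$.

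\emph{Case $\eta>1$.} We bound the left-hand side crudely instead. Each conditional satisfies $\abs{\Psi^\star(\cdot,\cdot)}\leq R/\delta$, so the difference is at most $2R/\delta\leq(1+2R)L\leq(1+2R)L\eta^\al$, using $L\geq1/\delta$ and $\eta^\al\geq1$.

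In both cases \eqref{eqn:holder_stability_incontext} holds. The extra factor $2R$ in the constant provides the slack needed for this case split.
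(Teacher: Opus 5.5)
Your proof is correct and follows essentially the same route as the paper: the same triangle-inequality split into a Lipschitz term controlled by \cref{cor:local_lip_kernel_operator} and a H\"older query-variation term, then a case split using the crude bound $2R/\delta$ in the large-distance case. The only differences are cosmetic: the paper splits on $\norm{p-q}_{C(D\times E)}\leq 1$ rather than on $\eta\leq 1$, and varies the query at fixed $p$ rather than at fixed $q$.
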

We note that the theorem remains true under slightly relaxed assumptions.
Namely, one can ask for a continuous $f \in C(D \times E)$ and for $(x,y) \in D\times E$, ask for $f$ to be $\alpha$-H\"older continuous in $y$ uniformly in $x$. See \cref{remark:relaxed-Holder-cont} for a more precise statement.

\subsection{Beyond uniform lower bounds}\label{sec:removing_unif_lower_bound}
First, let $\mathcal{X}_+ \defeq \bigcup_{\delta > 0} \mathcal{X}_\delta $ be the union of all $\cX_\delta$ for $\delta > 0$.
It is not hard to see that this is exactly the set of densities with everywhere positive marginals, i.e., $\mathcal{X}_+ = \set{\rho \in \Cac(D \times E)}{m_\rho > 0}$, see \cref{lemma:representation_of_X_plus}.
We can use this observation to show that $ \cG^\star $ and $ \Psi^\star $ extend continuously to $\mathcal{X}_+$.
\begin{restatable}[Continuous extension of conditioning operators]{theorem}{propExtensionOfOperators}
	\label{prop:extension-of-operators}
	The conditioning operators $ \cG^\star $ and $ \Psi^\star $ admit unique continuous extensions to the domain $ \mathcal{X}_+$ and $ \mathcal{X}_+ \times E $, respectively.
\end{restatable}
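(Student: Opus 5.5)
The plan is to define the extensions by the same formulas and check continuity locally. For $\rho \in \cX_+$, the marginal $m_\rho$ is strictly positive everywhere on $E$ by \cref{lemma:representation_of_X_plus}. So the formulas $\cG^\star(\rho) = \kappa_\rho$ and $\Psi^\star(\rho,y) = \rho(\slot\condbar y)$ make sense on $\cX_+$ and $\cX_+\times E$, and on each $\cX_\delta$ they agree with the original operators. Two facts then remain: these maps are continuous, and any continuous extension must coincide with them.

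\emph{Well-definedness and openness.} First, $m_\rho$ is continuous on the compact set $E$, because $\rho$ is uniformly continuous on $D\times E$. Hence $\delta_\rho \defeq \min_E m_\rho > 0$, so $\rho\in\cX_{\delta_\rho}$ and $\kappa_\rho\in C(D\times E)$. Next, for any $q\in\cX_+$ we have
\begin{equation*}
\sup_{y\in E}\abs{m_q(y)-m_\rho(y)}\leq \abs{D}\,\norm{q-\rho}_{C(D\times E)}.
\end{equation*}
So every $q\in\cX_+$ with $\norm{q-\rho}_{C(D\times E)}<\delta_\rho/(2\abs{D})$ lies in $\cX_{\delta_\rho/2}$. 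In other words, each point of $\cX_+$ has a relative neighborhood contained in a single $\cX_{\delta'}$. This is the key structural step.

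\emph{Continuity.} Continuity is a local property. On the neighborhood above, $\cG^\star$ is Lipschitz by \cref{lem:local_lip_kernel_operator} with $\delta$ replaced by $\delta_\rho/2$. For $\Psi^\star$, the corresponding neighborhood of $(\rho,y)$ in $\cX_+\times E$ is contained in $\cX_{\delta_\rho/2}\times E$, and \cref{thm:continuity_incontext} applied with $\delta_\rho/2$ gives continuity there. Together these give continuity of both extensions on all of $\cX_+$ and $\cX_+\times E$.

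\emph{Uniqueness.} I expect this to be the main obstacle, because the meaning of uniqueness must be made precise. The cleanest reading is that the extension is unique among continuous maps on $\cX_+$ agreeing with the original operators on every $\cX_\delta$. Since $\cX_+=\bigcup_\delta \cX_\delta$, such a map is then determined pointwise, and uniqueness is immediate. If instead one starts from a single fixed $\delta$, I would use a density argument: show that $\cX_\delta$ is dense in $\cX_+$ and invoke uniqueness of continuous extensions from dense subsets into the Hausdorff spaces $C(D\times E)$ and $C(D)$. This density generally fails in the sup topology, however, because the lower bound $\min m_\rho\geq\delta$ is preserved under uniform limits. So I would adopt the first reading, consistent with the phrasing that $\cG^\star$ and $\Psi^\star$ are defined for every $\delta>0$ by the same formula. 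Under that reading, uniqueness follows from the union representation in \cref{lemma:representation_of_X_plus}.
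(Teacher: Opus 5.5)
Your proposal is correct and follows essentially the paper's route. Each point of $\cX_+$ (the paper phrases this via convergent sequences) eventually lies in a single $\cX_{\delta'}$, so \cref{lem:local_lip_kernel_operator} and \cref{thm:continuity_incontext} give continuity. Uniqueness is read exactly as you read it, as agreement with the original operators on every $\cX_\delta$, which pins the extension down pointwise via $\cX_+=\bigcup_{\delta>0}\cX_\delta$.

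Your justification of the key step through $\norm{m_q-m_\rho}_{C(E)}\leq\abs{D}\,\norm{q-\rho}_{C(D\times E)}$ is in fact the sound one. The paper's version argues via pointwise lower bounds on the densities $\rho_n$ themselves, and members of $\cX_+$ need not satisfy such bounds.
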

The key proof idea is to recognize that any convergent sequence $\rho_n \to \rho$ in $\mathcal{X}_+$ must eventually be contained in some $\cX_\delta$ for $\delta > 0$. Then, we can use the continuity of $\cG^\star$ and $\Psi^\star$ on each $\cX_\delta$, shown in \cref{sec:stability}, to obtain the convergence $\Psi^\star(\rho_n,y_n) \to \Psi^\star(\rho,y)$ as $ n\to\infty $ and similarly for $\cG^\star$.

Finally, we can use the density of $\mathcal{X}_+$ in $C_{\textup{ac}}(D \times E)$, established in \cref{prop:density-of-X}, to give an explicit expression for any extension of $\Psi^\star$ beyond $\mathcal{X}_+$ in terms of convolutions.

\begin{restatable}[Extension as a limit]{theorem}{thmExtensionOfPsi}
	\label{thm:extension-of-Psi}
	Any continuous extension of $ \Psi^\star $ to a domain $S_1 \times S_2$ satisfying
	\begin{equation*}
		\mathcal{X}_+ \times E \subseteq S_1 \times S_2 \subseteq C_{\textup{ac}}(D \times E) \times E
	\end{equation*}
	must be of the form
	\begin{equation*}
		\Psi^\star(\rho,y)(x) = \lim_{\epsilon \to 0} \frac{\rho \ast \chi_\epsilon(x,y)}{m_{\rho} \ast m_{\chi_\epsilon}(y)},
	\end{equation*}
	where $\chi_\epsilon$ is the density of a centered Gaussian measure with covariance matrix $\epsilon I$.
\end{restatable}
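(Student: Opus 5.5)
The plan is to combine density of $\mathcal{X}_+$ (\cref{prop:density-of-X}) with an explicit approximating sequence built from Gaussian convolutions. Fix $(\rho,y)\in S_1\times S_2$ and any continuous extension $\Phi$ of $\Psi^\star$ to $S_1\times S_2$. We have $\Phi=\Psi^\star$ on $\mathcal{X}_+\times E$. If we can exhibit $\rho_\epsilon\in\mathcal{X}_+$ with $\rho_\epsilon\to\rho$ in $C(D\times E)$ as $\epsilon\to0$, then continuity of $\Phi$ gives
\[
\Phi(\rho,y)=\lim_{\epsilon\to0}\Psi^\star(\rho_\epsilon,y)\quad\text{in } C(D).
\]
So the limit exists, and the extension is determined by it. We therefore choose $\rho_\epsilon$ so that $\Psi^\star(\rho_\epsilon,y)$ is exactly the displayed ratio.

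\textbf{Construction.} Take $\rho_\epsilon\defeq(\rho\ast\chi_\epsilon)\big|_{D\times E}$, renormalized. Here $\rho$ is extended by zero outside $D\times E$ before convolving. Since $\chi_\epsilon$ is a product Gaussian $\chi_\epsilon(x,y)=\chi^{(m)}_\epsilon(x)\chi^{(r)}_\epsilon(y)$, Fubini gives
\[
\int_{\R^m}(\rho\ast\chi_\epsilon)(x,y)\dd{x}=(m_\rho\ast m_{\chi_\epsilon})(y).
\]
The ratio in the statement is therefore the conditional of $\rho\ast\chi_\epsilon$. The normalization constant cancels in the ratio, so it plays no role.

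There is one subtlety. The marginal of the restriction to $D$ is $\int_D$, not $\int_{\R^m}$. Two readings are possible:
\begin{itemize}
\item interpret the convolution with the marginal integral taken over $D$ (i.e.\ use $m_{\rho_\epsilon}$), or
\item note that the paper's formula is meant with the understanding that the difference vanishes as $\epsilon\to0$.
\end{itemize}
I would use $m_{\rho_\epsilon}$ in the argument. I would then show that $\sup_y\bigl|\int_{\R^m\setminus D}(\rho\ast\chi_\epsilon)(x,y)\dd{x}\bigr|\to0$ fails to matter in the limit, after the denominator is shown to be bounded below along the relevant sequence.

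\textbf{Key steps.} The argument proceeds in four steps.
\begin{enumerate}
\item \emph{Positivity.} Because $\chi_\epsilon>0$ everywhere and $\rho\ge0$ is nonzero, $\rho\ast\chi_\epsilon>0$ on $D\times E$. Hence its marginal is strictly positive on the compact set $E$, so $\rho_\epsilon\in\mathcal{X}_+$ (using \cref{lemma:representation_of_X_plus}).
\item \emph{Convergence.} Show that $\rho_\epsilon\to\rho$ uniformly on $D\times E$. This follows from uniform continuity of $\rho$ on the compact set together with the concentration of $\chi_\epsilon$.
\item \emph{Conclude.} Apply continuity of $\Phi$ at $(\rho,y)$ along $(\rho_\epsilon,y)$, which yields the displayed limit.
\item \emph{Uniqueness.} Since the limit is intrinsic to $\rho$ and $y$, every continuous extension coincides with it.
\end{enumerate}

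\textbf{Main obstacle.} The hardest step is the convergence $\rho\ast\chi_\epsilon\to\rho$ up to the boundary of $D\times E$. With zero extension, $\rho$ is discontinuous across $\partial(D\times E)$ wherever $\rho\neq0$ there. Near the boundary the convolution then captures only a fraction of the mass, roughly half at smooth boundary points, so uniform convergence fails.

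I would try first to replace zero extension by a continuous extension $\bar\rho\in C_c(\R^{m+r})$, via Tietze, or via a Lipschitz-boundary reflection so that the extension is nonnegative. With $\bar\rho$ in place, $\bar\rho\ast\chi_\epsilon\to\bar\rho$ uniformly on compacts. Restricting and renormalizing then gives a uniform approximation inside $\mathcal{X}_+$, and the displayed formula is read with $\rho$ denoting this extension.

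If the formula must be read literally with zero extension, a fallback is available. At points where the convolution ratio still converges to the correct value, the boundary deficiency factor may cancel between numerator and denominator. This would need to be checked using the Lipschitz boundary structure of $D\times E$.
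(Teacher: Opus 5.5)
Your main line is the paper's argument. The paper takes the renormalized zero-extension Gaussian mollifier $\rho_\epsilon$ from \cref{prop:density-of-X}, applies continuity of the extension along $(\rho_\epsilon,y)$, and identifies the denominator by Fubini. \textbf{The gap is the one your ``main obstacle'' paragraph names, and it breaks Step~2 and hence Step~3.} With zero extension, $\rho_\epsilon\not\to\rho$ in $C(D\times E)$ whenever $\rho\neq0$ somewhere on $\partial(D\times E)$. For example, with $D=E=[0,1]$ and $\rho\equiv1\in\mathcal{X}_+$ one gets $\rho_\epsilon(0,0)\to1/4$.

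The paper passes over both subtleties you raise. Its proof of \cref{prop:density-of-X} asserts this uniform convergence because $\rho$ is uniformly continuous on $D\times E$, which says nothing about the zero extension across the boundary. Its Fubini step writes $\int_D\chi_\epsilon(x-z,y-w)\,dx$ as a function of $y-w$ alone, i.e.\ it silently replaces $\int_D$ by $\int_{\R^m}$.

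Your Tietze/reflection repair is a valid density argument, but it proves a different identity, not the displayed formula. The numerator then depends on $\bar\rho$ outside $D\times E$. The denominator must also be changed (you propose $\int_D\bar\rho\ast\chi_\epsilon(x',y)\,dx'$).

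The fallback cannot be completed either, because the literal formula is false on $\partial D$. Both $D\times E$ and $\chi_\epsilon=\chi^{(m)}_\epsilon\otimes\chi^{(r)}_\epsilon$ factor, so set $a_\epsilon(y)=\int_E\chi^{(r)}_\epsilon(y-w)\,dw$ and $b_\epsilon(x)=\int_D\chi^{(m)}_\epsilon(x-z)\,dz$. Uniform continuity of $\rho$ and $\liminf_{\epsilon\to0}a_\epsilon(y)>0$ (Lipschitz $E$) give $\rho\ast\chi_\epsilon(x,y)=a_\epsilon(y)\bigl(b_\epsilon(x)\rho(x,y)+o(1)\bigr)$. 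Reading $m_{\chi_\epsilon}$ as the $\R^m$-marginal $\chi^{(r)}_\epsilon$, they also give $m_\rho\ast\chi^{(r)}_\epsilon(y)=a_\epsilon(y)\bigl(m_\rho(y)+o(1)\bigr)$.

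The $y$-deficiency cancels, as you hoped, but $b_\epsilon(x)$ does not. For $m_\rho(y)>0$ the ratio tends to $\rho(x\condbar y)$ when $x\in\operatorname{int}D$. In the example above, however, the ratio equals $b_\epsilon(x)$, which tends to $1/2$ at $x=0$ although $\Psi^\star(\rho,y)\equiv1$. The same happens with $\int_D$ in the denominator. So only the interior identity can be proved, with boundary values then fixed by continuity in $x$.

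The continuity/density route is also unnecessary. $S_2=E$ is forced. Suppose some $\rho\in S_1$ had $m_\rho(y_0)=0$; then $\rho(\slot,y_0)\equiv0$. For every $\sigma\in\mathcal{X}_+$, the densities $\rho_t=(1-t)\rho+t\sigma\in\mathcal{X}_+$ converge to $\rho$ as $t\to0$, while $\Psi^\star(\rho_t,y_0)=\sigma(\slot\condbar y_0)$ for all $t$, and this depends on $\sigma$. Hence a continuous extension exists only when $S_1=\mathcal{X}_+$. The correct content of the statement is then the direct interior computation above for $\rho\in\mathcal{X}_+$.
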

\begin{example}[Products]
	Consider the extended domain $S_1 \times S_2$ given by taking $S_2 \defeq E$ and 
	\begin{equation*}
		S_1 \defeq \mathcal{X}_+ \cup \set{\rho\colon (x,y) \mapsto f(x) \, g(y)}{ f \in C(D)\qa g \in C(E) } .
	\end{equation*}
	That is, we add to $\mathcal{X}_+$ all product densities, irrespective of positivity. We claim that for all $(\rho, y) \in S_1 \times S_2$, the limit in \cref{thm:extension-of-Psi} exists and equals $ \Psi^\star(\rho,y)(x) = f(x)$ for each $x$.
Refer to \cref{lemma:extension-of-Psi-to-indep-densitites} for a proof.
    Thus, we recover the well-known result that conditioning a product distribution by one of its marginals merely removes that marginal. In probabilistic language, if the random variables $(X, Y)$ have joint density $\rho(x,y) = f(x) \, g(y)$, then $X \condbar Y$ is distributed according to $X$ which has density $f$.

    Finally, we note that $\Psi^\star$ cannot be extended continuously to the full space of probability measures over $\R^{m + r}$ when equipped with the weaker topology of \emph{weak} or \emph{narrow} convergence.
One can show this by conditioning a sequence of two component mixtures of Dirac masses; see \cref{example:conditioning-cannot-be-cont-in-weak-conv} for details. 
    This suggests that departing from the setting of probability measures with Lebesgue densities poses fundamental difficulties even for the simplest of examples.
\end{example}

\section{Numerical experiments}\label{sec:numerics}

We train a Fourier neural operator (FNO) \citep{li2021fourier} and a transformer neural operator (TNO) \citep{calvello2024continuum} to approximate the kernel conditioning operator $\cG^\star$. We consider $K$-component Gaussian mixture joint distributions, for which a closed-form conditional Markov kernel may be found. \Cref{subsec:description-of-experiments} discusses our experimental setup, \cref{subsec:data-generation} discusses the data generation process, and \cref{subsec:architectural} discusses architectural and training details. In \Cref{tab:L1_results}, we report the results obtained by applying a trained TNO and FNO to a test set not used for training; \Cref{fig:TNO} illustrates the result of TNO on the median error sample, for the experiment corresponding to $K=3$.
The operator learning framework discussed in this paper leads to accuracy benefits compared to a traditional plug-in estimation approach, involving numerical integration for marginalization. This is demonstrated by deploying the trained models on kernel density estimate (KDE) joints in both experimental settings.
\Cref{fig:TNO_corr} showcases, in the context of $K=1$, both the improvement in accuracy given by the TNO over the plug-in estimator in the test set for the KDE setting, as well as the predictions for the median error sample. \Cref{tab:L1_results_kde} summarizes the accuracy results in both KDE setups. In the $K=1$ setting, the learned models significantly outperform the plug-in baseline. In the more challenging $K=3$ setting, learned models achieve lower median error but higher maximum error than the plug-in baseline. We expect more improvement with larger training sets and training data that includes KDEs as input functions. These results suggest that the conditioning operator may be learned with potential accuracy gains to traditional statistical approaches. We note, however, that these experiments are intended as a proof of concept rather than a practical proposal. Their purpose is twofold: first, to numerically illustrate the universal approximation theory developed, confirming that operator learning is a principled approach to kernel conditioning, and second, to motivate further methodological development. Indeed, grid discretization and the required density access pose limitations to high-dimensional scalability. This is discussed further in \cref{sec:discussion}. 

\begin{figure}[tb!]\centering
    \includegraphics[width=0.6\linewidth]{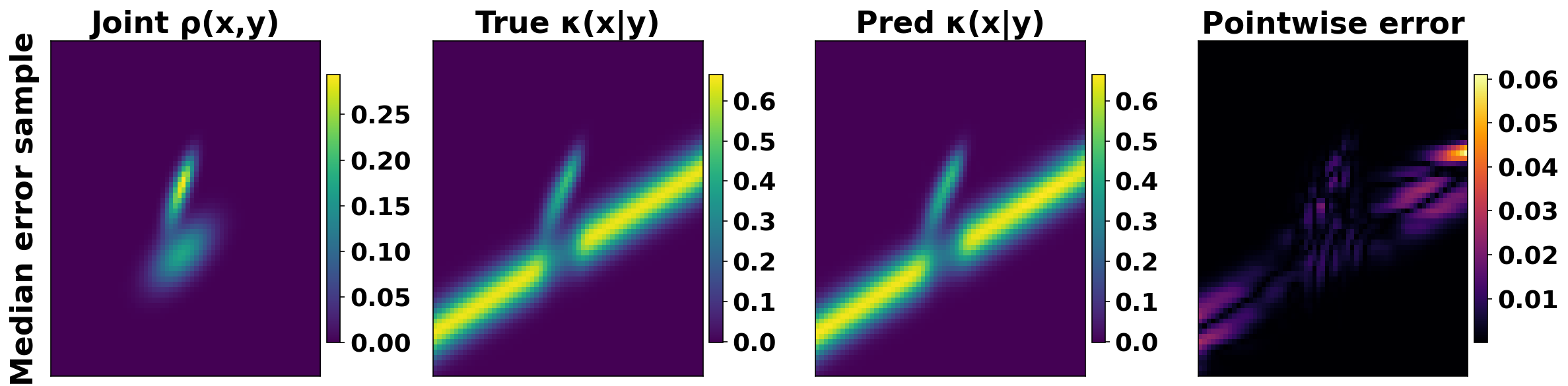}
    \vspace{-1mm}
    \caption{\small Trained TNO prediction on the median relative $L^1$ error sample from the Gaussian mixture test set.}
    \label{fig:TNO}
\end{figure}

\begin{table}[tb!]\centering
\caption{\small Accuracy in relative $L^1$ error achieved by the models on a test set not used for training.}
\label{tab:L1_results}
\resizebox{\textwidth}{!}{\begin{tabular}{l|cc|cc|cc}
\toprule
\textbf{Joint Class} 
& \textbf{FNO Med.} & \textbf{TNO Med.}
& \textbf{FNO Max.} & \textbf{TNO Max.}
& \textbf{FNO Params} & \textbf{TNO Params} \\
\midrule
Corr. Gaussian ($K=1$)  & 0.0004 & 0.0006 & 0.0299 & 0.2683 & 9.6M & 1.6M \\
Gaussian Mixture ($K=3$) & 0.0158 & 0.0251 & 0.7384 & 0.8938 & 9.6M & 3.2M \\
\bottomrule
\end{tabular}
}
\end{table}

\begin{figure}[tb!]
    \centering
    \begin{subfigure}[c]{0.33\textwidth}
        \centering
        \includegraphics[width=0.8\linewidth]{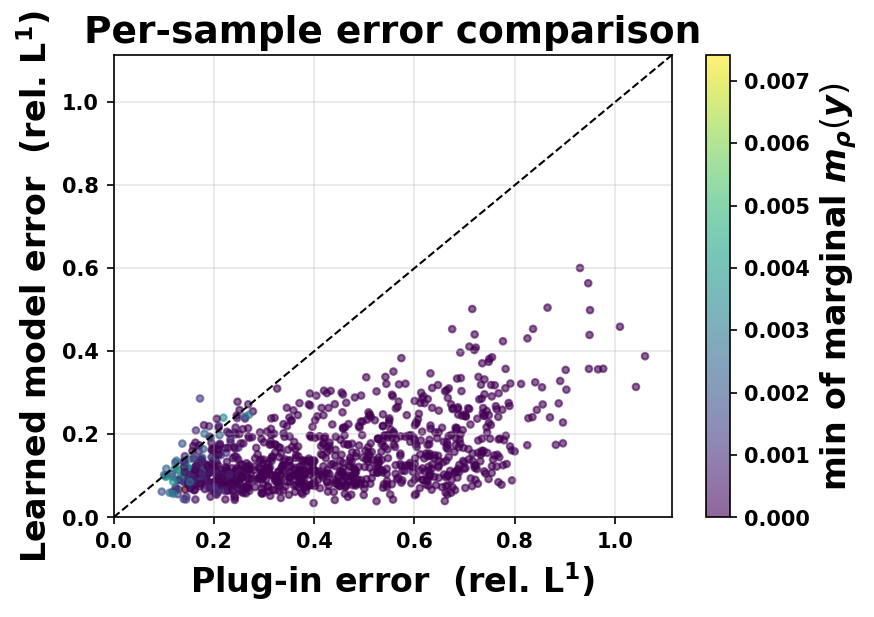}
        \label{fig:plot1}
    \end{subfigure}
    \hfill
    \begin{subfigure}[c]{0.65\textwidth}
        \centering
        \includegraphics[width=\linewidth]{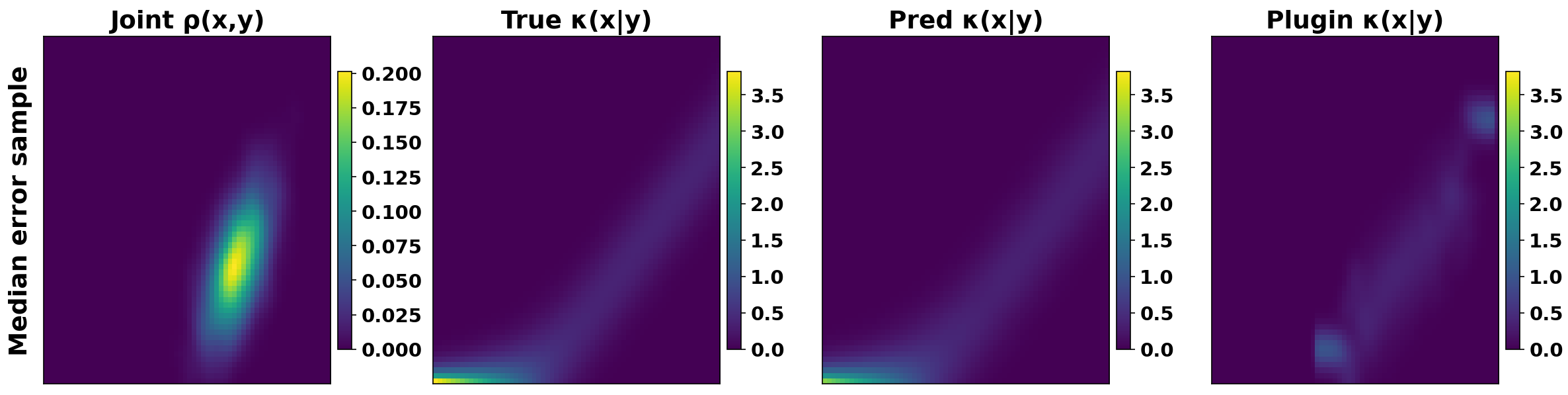}
        \label{fig:plot2}
    \end{subfigure}
    \caption{\small Per-sample relative $L^1$ test error comparison between plug-in estimator and learned model on the correlated Gaussian KDE test set (left). Predictions on the learned model median test error sample (right).}
    \label{fig:TNO_corr}
\end{figure}

\begin{table}[tb!]\centering
\caption{\small Accuracy in relative $L^1$ error achieved by models and the plugin estimator on a KDE test set.}
\label{tab:L1_results_kde}
\resizebox{\textwidth}{!}{\begin{tabular}{l|ccc|ccc}
\toprule
\textbf{Joint Input} 
& \textbf{FNO Med.} & \textbf{TNO Med.} & \textbf{Plugin Med.}
& \textbf{FNO Max.} & \textbf{TNO Max.} & \textbf{Plugin Max.}  \\
\midrule
KDE Corr. Gaussian    & 0.1183 & 0.1244 & 0.4012 & 0.5822 & 0.6004 & 1.0597 \\
KDE Gaussian Mixture  & 0.4829 & 0.4446 & 0.5079 & 1.1455 & 1.2511 & 0.9993 \\
\bottomrule
\end{tabular}
}
\end{table}

\section{Discussion}\label{sec:discussion}
In this paper, we introduce an operator learning framework for the conditioning of probability densities.
Namely, we identify metric spaces of functions such that the maps $\rho \mapsto \kappa_\rho$, the \emph{kernel conditioning operator}, and $(\rho, y) \mapsto \rho(\slot\condbar y)$, the \emph{in-context conditioning operator}, are continuous. Continuity allows us to approximate these maps by neural operators up to arbitrary accuracy and uniformly over compact sets of densities. Our results lay the groundwork for a foundation model for probability density conditioning.
That is, a model that has been trained on a large corpus of joint and conditional density pairs and can perform conditional inference on any unseen density within a class of admissible inputs, e.g., a bounded set of H\"older densities over a compact domain.

\textbf{Open questions} include:
\begin{enumerate}
    \item \emph{Development of scalable algorithms for operator learning in high dimensions}:
    Since neural operators receive and return density functions, they must be combined with methods that allow for efficient function representation in high dimensions, beyond simple grid evaluation.
    \item \emph{Learning from samples}: Commonly one has \emph{sample access} as opposed to \emph{density access} to the probability distribution of interest. Thus, classical density estimation or modern generative methods can be combined with neural operators to obtain pipelines for conditioning from samples.
\item \emph{Conditioning of generic probability measures}: With an eye towards further universal approximation theorems, we wish to study the continuity of the conditioning operators over classes of generic probability measures, e.g., discrete measures or singular measures.
\end{enumerate}

\begin{ack}
PT acknowledges support from the US Air Force Office of Scientific Research (AFOSR) MURI program, under award number FA9550-20-1-0397, and from the ExxonMobil Technology and Engineering Company. PT also acknowledges support from the Onassis Foundation PhD Fellowship.
EC acknowledges support from the Department of Defense (DOD) Vannevar Bush Faculty Fellowship (award N00014-22-1-2790) held by Prof. Andrew Stuart; EC also acknowledges support from the Resnick Sustainability Institute at Caltech. 
AB is grateful for support from the US Department of Energy (DOE), Office of Advanced Scientific Computing Research (ASCR), under awards DE-SC0023188 and DE-SC0026245 held by Prof. Youssef Marzouk. AB also acknowledges support from the ExxonMobil Technology and Engineering Company. 
NHN acknowledges support from a Klarman Fellowship through Cornell University's College of Arts \& Sciences.
The computations presented in this paper were partially conducted on the Resnick High Performance Computing Center, a facility supported by the Resnick Sustainability Institute at the California Institute of Technology.
\end{ack}

{
	\bibliographystyle{abbrvnat}

}

\appendix

\clearpage
\begin{center}
	{\bf Technical Appendices and Supplementary Material for:} \\ 
	\inserttitle \\
\end{center}

\section{Notation}\label{app-sec:notation}
This appendix introduces notation from analysis and probability.
\subsection{Analysis}\label{app-subsec:functional_analysis}
Let $Y$ be a normed vector space and let $C(X;Y)$ denote the vector space of continuous functions from $X$ to $Y$. For any $f$, define the supremum norm
\begin{align}\label{eqn:sup_norm}
	\norm{f}_{C(X;Y)}\defeq\sup_{x\in X}\norm{f(x)}_Y.
\end{align}
The norm \eqref{eqn:sup_norm} induces the metric
\begin{align}\label{eqn:sup_metric}
	\sfd_{C(X;Y)}(f,g)\defeq \norm{f-g}_{C(X;Y)}
\end{align}
for any $f$ and $g$. 
When $Y=\R$, we write $C(X;\R)=C(X)$. 
When $ Y $ is a normed vector space, we equip the product space $ X\times Y$ with the norm $ \norm{(x,y)}_{X\times Y}\defeq\max(\norm{x}_X,\norm{y}_Y) $. 
H\"older spaces play an important role in this work. For $ \al\in (0,1] $ and a compact set $ B\subset\R^k $, define the norm
\begin{align}\label{eqn:holder_norm}
	\norm{f}_{C^{0,\al}(B)}\defeq \max\Biggl(\norm{f}_{C(B)}, \sup_{\substack{(x,x')\in B\times B\\x\neq x'}}\frac{\abs{f(x)-f(x')}}{\abs{x-x'}^\al}\Biggr).
\end{align}
Then define
\begin{align}\label{eqn:holder_space}
	C^{0,\al}(B)\defeq \set{f\in C(B)}{\norm{f}_{C^{0,\al}(B)}<\infty}.
\end{align}
Moreover, define the \emph{essential infimum} as follows: fix $n \in \N$ and a measurable function $f\colon \R^n \to \R$.
For $b \in \R$ write $L_b \defeq \set{x}{f(x) < b}$ for a sub-level set of $f$. Then the essential infimum is
\begin{equation*}
    \essinf_{x \in \R^n} f(x) \defeq \sup \set{b \in \R}{ | L_b | = 0} ,
\end{equation*}
where $| \slot | $ indicates the Lebesgue measure of a set.

\subsection{Additional notation}\label{app-subsec:problem_specific_notation}
Let $D\subset\R^m$ and $E\subset\R^r$ be compact sets. Then $(C(D), \norm{\slot}_{C(D)})$ and $(C(D\times E), \norm{\slot}_{C(D\times E)})$ are Banach spaces. For any compact $B\subset\R^k$, define the subset
\begin{align}\label{eqn:ac_set}
	\Cac(B)\defeq\set{\rho\in C(B)}{\int_B\rho(u)\dd{u}=1\qa \inf_{u\in B} \rho(u)\geq 0}\subset C(B).
\end{align}
We view $\Cac(B)$ in \eqref{eqn:ac_set} as the set of continuous Lebesgue probability density functions on $B$. 

\section{Proofs for Section~\ref{sec:formulation}:~\nameref*{sec:formulation}}\label{app-subsec:proofs_formulation}
The following lemma shows that $\cM_{\delta}$ and $\cM_\delta^E$ as defined in \eqref{eq:metric_spaces_M_delta_Y} are complete. 
In other words, $\cX_\delta$ and $\cX_\delta\times E$ are closed in $C(D \times E)$ and $C(D \times E) \times E$, respectively, with respect to the supremum norm and the product norm, discussed beneath \eqref{eq:metric_spaces_M_delta_Y}.
In fact, we only show that $\cM_\delta$ is closed; then $\cM_\delta^E$ is closed as the Cartesian product of closed sets.
\begin{lemma}[Completeness]\label{lem:completeness-of-metric-spaces}
	It holds that $\cM_{\delta}$ as defined in \eqref{eq:metric_spaces_M_delta_Y} is a complete metric spaces.
\end{lemma}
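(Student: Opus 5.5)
Since $C(D\times E)$ with the supremum norm is a Banach space, it suffices to show that $\cX_\delta$ is a closed subset of $C(D\times E)$; a closed subset of a complete metric space is complete. The set $\cX_\delta$ is the intersection of three sets:
\begin{itemize}
\item $A_1\defeq\set{\rho\in C(D\times E)}{\inf_{z}\rho(z)\geq 0}$,
\item $A_2\defeq\set{\rho\in C(D\times E)}{\int_{D\times E}\rho(z)\dd{z}=1}$,
\item $A_3\defeq\set{\rho\in C(D\times E)}{\inf_{y\in E}m_\rho(y)\geq\delta}$.
\end{itemize}
I will show each is closed.

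\textbf{Step 1: $A_1$ is closed.} We have $A_1=\bigcap_{z\in D\times E}\set{\rho}{\rho(z)\geq 0}$. Each point evaluation $\rho\mapsto\rho(z)$ satisfies $|\rho(z)-\sigma(z)|\leq\norm{\rho-\sigma}_{C(D\times E)}$, so it is $1$-Lipschitz and hence continuous. Each set in the intersection is therefore the preimage of the closed set $[0,\infty)$, and an intersection of closed sets is closed.

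\textbf{Step 2: $A_2$ is closed.} The functional $I(\rho)\defeq\int_{D\times E}\rho$ satisfies
\[
|I(\rho)-I(\sigma)|\leq|D\times E|\,\norm{\rho-\sigma}_{C(D\times E)}.
\]
Here $|D\times E|<\infty$ because the domain is compact. Thus $I$ is Lipschitz, and $A_2=I^{-1}(\{1\})$ is closed.

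\textbf{Step 3: $A_3$ is closed.} For each fixed $y\in E$, the map $\rho\mapsto m_\rho(y)=\int_D\rho(x,y)\dd{x}$ satisfies
\[
|m_\rho(y)-m_\sigma(y)|\leq|D|\,\norm{\rho-\sigma}_{C(D\times E)},
\]
so it is $|D|$-Lipschitz. Hence $A_3=\bigcap_{y\in E}\set{\rho}{m_\rho(y)\geq\delta}$ is an intersection of closed sets. Equivalently, with a sequence: if $\rho_n\to\rho$ uniformly, then $m_{\rho_n}\to m_\rho$ pointwise, and the inequality $m_{\rho_n}(y)\geq\delta$ passes to the limit for every $y$.

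There is no real obstacle here; the only point needing care is the finiteness of $|D|$ and $|D\times E|$, which follows from compactness. One might also note that $m_\rho$ is itself continuous, by uniform continuity of $\rho$ on the compact set $D\times E$, but closedness does not require this.

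Combining the three steps, $\cX_\delta=A_1\cap A_2\cap A_3$ is closed in the Banach space $C(D\times E)$, so $\cM_\delta$ is complete. The corresponding statement for $\cM_\delta^E$ then follows as remarked before the lemma. $E$ is compact, hence closed in $\R^r$, so $\cX_\delta\times E$ is closed in the Banach space $C(D\times E)\oplus\R^r$ equipped with the max norm.
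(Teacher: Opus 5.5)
Your proof is correct and follows essentially the same route as the paper, which also writes $\cX_\delta$ as the intersection of $\Cac(D\times E)$ with the marginal-lower-bound set and checks that each piece is closed in the Banach space $C(D\times E)$. The only difference is technique: the paper argues with sequences and invokes dominated convergence to pass to the limit in the integrals, whereas you use the slightly cleaner observation that point evaluation, the total integral, and $\rho\mapsto m_\rho(y)$ are Lipschitz with constants $1$, $|D\times E|$, and $|D|$.
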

\begin{proof}
    Fix $\delta > 0$ and set
    \begin{equation*}
        \mathcal{L}_\delta \defeq \set{f \in C(D \times E) }{\inf_{x \in D \times E} f(x) \geq \delta} .
    \end{equation*}
    Note that we can write
    \begin{equation*}
        \cX_\delta = \Cac(D \times E) \cap \mathcal{L}_\delta .
    \end{equation*}
    It suffices to show that both sets in the above intersection are closed in $C(D \times E)$ with respect to the topology of uniform convergence.
	First, we show that $\mathcal{L}_\delta$ is sequentially closed in $C(D\times E)$. Let $\{f_n\}_{n\in\N}\subset \mathcal{L}_\delta$ be a sequence such that $f_n\to f \in C(D\times E)$ in $C(D\times E)$ as $ n\to\infty $. Fix $y\in E$. It follows that $f_n(\slot, y)\to f(\slot,y)$ uniformly and hence pointwise in $D$. Since $\sup_{n\in\N}\norm{f_n}_{C(D\times E)}<\infty$, the dominated convergence theorem applied to $\{f_n(\slot,y)\}_{n\in\N}$ gives
	\begin{align*}
		m_f(y)=\int_Df(x,y)\dd{x}&=\liminf_{n\to\infty}\int_Df_n(x,y)\dd{x}\\
		&\geq \liminf_{n\to\infty} \inf_{y'\in E} \int_Df_n(x,y')\dd{x}\\
		&\geq \delta.
	\end{align*}
	Since $y$ is arbitrary, it follows that $ \mathcal{L}_\delta $ is closed.
	
	Next, for any $n \in \N$ and compact $B \subset \R^n$ we show that the set
    \begin{equation*}
        \Cac(B) = \set{f \in C(B)}{\int_B f(x) \, \dd{x} \qa \inf_{x \in B} f(x) \geq 0 } \, .
    \end{equation*}
    is closed in $C(B)$. The assertion of the lemma follows by taking $B=D\times E$. To this end, let $\{\rho_n\}_{n\in\N}\subset \Cac(B)$ be a sequence such that $\rho_n\to\rho \in C(B)$ in $C(B)$ as $ n\to\infty $. We must show that $\rho\in \Cac(B)$. Since $\rho_n$ converges to $\rho$ uniformly, it also converges pointwise. Moreover, there exists $0<c<\infty$ such that $\sup_{n\in\N}\norm{\rho_n}_{C(B)}\leq c$.
	Now fix $u\in B$. Then for all $n\in\N$, it holds that
	\begin{align*}
		\rho(u)\geq \rho_n(u) - \abs{\rho_n(u)-\rho(u)}\geq  - \abs{\rho_n(u)-\rho(u)}.
	\end{align*}
	We deduce that $\rho(u)\geq -\limsup_{n\to\infty}\abs{\rho_n(u)-\rho(u)}=0$ as required. Next, we see that $\rho_n$ is pointwise uniformly bounded in $n$ by $c$. An application of the dominated convergence theorem yields
	\begin{align*}
		\int_B\rho(u)\dd{u}=\lim_{n\to\infty}\int_B\rho_n(u)\dd{u}=1.
	\end{align*}
	This completes the proof.
\end{proof}

\begin{remark}[Uniform $L^\infty$ upper bound]\label{rmk:uniform_bound}
	A similar argument shows that for any $B$ and any $R>0$, the set $\mathcal{B}_R(B)\defeq \set{\rho\in C(B)}{\sup_{x\in B}\abs{\rho(x)}\leq R}$ is closed in $C(B)$. Thus, $\cX_{\delta}\cap \mathcal{B}_R(D\times E)$ is also closed in $ C(D\times E) $. This set is useful because it will be shown that one formulation of distribution conditioning is Lipschitz on $\cX_{\delta}\cap \mathcal{B}_R(D\times E)$.
\end{remark}

\section{Proofs for Section~\ref{sec:approximation}:~\nameref*{sec:approximation}}\label{app-subsec:proofs_universal_approximation}

This appendix collects the proofs of the main results.
\subsection{Proofs for Subsection~\ref{subsec:universal-approximation}:~\nameref*{subsec:universal-approximation}}

\thmApproxKernel*
\begin{proof}
	By \cref{lem:local_lip_kernel_operator}, $ \cG^\star\colon\cX_{\delta}\subset C(D\times E)\to C(D\times E) $ is continuous. Since $ K\subseteq\cX_{\delta} $, the map $ F^\star\defeq \cG^\star|_{K}\colon K\subset C(D\times E)\to C(D\times E)  $ is also continuous. Since $ K $ is compact, $ K $ is closed.
	By the Dugundji extension theorem \cite[Thm.~4.1, p.~357]{dugundji1951extension}, there exists a continuous map $ \widetilde{F}^\star\colon C(D\times E)\to C(D\times E) $ that extends $ F^\star $. An application of \cite[Thm.~1 and Cor.~1, p.~267]{lanthaler2025nonlocality} to $  \widetilde{F}^\star $ delivers a neural operator $ \Psi $ such that \eqref{eqn:approx_kernel} holds.
\end{proof}

\thmApproxIncontext*
\begin{proof}
	For any $y \in E$, consider the constant function
	\begin{align*}
		c_y\colon D \times E \to \R^r , \qw c_y(x,z) = y \qfa (x,z) \in D \times E .
	\end{align*}
	Now define
	\begin{equation*}
		\domain(\widetilde \Psi^\star) \defeq \set{ \rho \oplus c_y}{(\rho, y) \in \domain(\Psi^\star)} \subset C(D\times E; \R^{r + 1}),
	\end{equation*}
	where we are using notation $\oplus$ to denote concatenation of functions, that is\footnote{\label{footnote:concat}
	Here, and for the rest of the proof we identity $C(D\times E) \times C(D \times E; \R^r)$ with $C(D\times E; \R^{r+1})$ via the continuous embedding
	\begin{align*}
		J\colon C(D\times E) \times C(D \times E; \R^r) &\to C(D\times E ; \R^{r+1}) \\
		(\rho, c_y) &\mapsto  \rho \oplus c_y \colon \bigl((x,z) \mapsto (\rho(x,z), c_y(x, z))\bigr) .
	\end{align*}
	}
	\begin{equation*}
		(\rho \oplus c_y)(x,z) \defeq \bigl(\rho(x,z), c_y(x, z)\bigr) .
	\end{equation*}
Now let the lift $\tilde \Psi^\star$ of our operator $\Psi^\star$ be given by
	\begin{align*}
		\tilde \Psi^\star\colon  \domain(\tilde \Psi^\star)& \to C(D) \\
		\rho \oplus c_y &\mapsto \Psi^\star(\rho,y) .
	\end{align*}
	Since Theorem~\ref{thm:continuity_incontext} furnishes the continuity of $\Psi^\star$ and since the mapping
	\begin{align*}
		\iota\colon E &\to C(D \times E; \R^r) \\
		y &\mapsto c_y
	\end{align*}
	is continuous, the lift
	\begin{equation*}
		\tilde \Psi^\star = \Psi^\star \circ (\mathrm{id}\times \iota)
	\end{equation*}
	is continuous as a composition of continuous maps, where we have introduced the map 
    \begin{equation*}
		\mathrm{id}\times \iota\colon (\rho, y) \mapsto \rho \oplus c_y .
	\end{equation*}
	Now, define the lift $\tilde E$ of the compact set $E$ by
	\begin{equation*}
		\tilde E \defeq \iota(E) = \set{c_y}{y \in E} \subset C(D \times E; \R^r) .
	\end{equation*}
	Since $\iota$ is continuous and $E$ is compact, so is $\tilde E$ and hence the Cartesian product
	\begin{equation*}
		K \times \tilde E \subset C(D \times E; \R^{r+1})
	\end{equation*}
	is also compact; see \cref{footnote:concat} for a note on the implicit function space identification occurring here.
Following the proof of Theorem~\ref{thm:approx_kernel}, we can first restrict $\tilde \Psi^\star$ to the compact set $K \times \tilde E$ and then apply the Dugundji extension theorem to obtain a continuous extension
	\begin{equation*}
		\tilde \Psi^\star_0\colon C(D\times E; \R^{r+1}) \to C(D)
	\end{equation*}
	satisfying
	\begin{equation}
		\label{eq:ext-matches-on-restriction}
		\tilde \Psi^\star_0 \big|_{K \times \tilde E} = \tilde \Psi^\star \big|_{K \times \tilde E} .
	\end{equation}
	We can finally apply a modification to~\cite[Theorem 1, p.~267]{lanthaler2025nonlocality} for distinct input and output domains, discussed in~\cite[p.~270]{lanthaler2025nonlocality}, to obtain a neural operator $\tilde \Psi_0$ such that
	\begin{equation*}
		\sup_{(\rho,c_y)\in K\times \tilde E}\norm{\tilde \Psi^\star_0(\rho, c_y)-\tilde \Psi_0(\rho, c_y)}_{C(D)}\leq \ep .
	\end{equation*}
	Finally, consider the vector-to-function layer $V$ given by
	\begin{align*}
		V\colon C(D\times E)\times E &\to C(D\times E ; \R^{r+1}) \\
		(\rho,y) &\mapsto (\rho, c_y)
	\end{align*}
	and note that it satisfies \cref{def:vector-to-function} with $h \equiv 1$ and $\psi = \textup{id}$ the constant and identity maps, respectively.  
	Now, define the augmented neural operator $\tilde\Psi$ by
	\begin{equation*}
		\widetilde{\Psi} = \tilde \Psi_0 \circ V .
	\end{equation*}
	By \eqref{eq:ext-matches-on-restriction} and the construction of $\tilde E$, $\tilde \Psi^\star$, and $V$, we have
	\begin{align*}
		\sup_{(\rho,y)\in K\times E}\norm{ \Psi^\star(\rho, y) - \tilde \Psi(\rho, y)}_{C(D)} 
		&= \sup_{(\rho,y)\in K\times E}\norm{ \Psi^\star(\rho, y) - (\tilde \Psi_0 \circ L)(\rho, y)}_{C(D)} \\
		&= \sup_{(\rho,c_y)\in K\times \tilde E}\norm{\tilde \Psi^\star(\rho, c_y)-\tilde \Psi_0(\rho, c_y)}_{C(D)} \\
		& = \sup_{(\rho,c_y)\in K\times \tilde E}\norm{\tilde \Psi^\star_0(\rho, c_y)-\tilde \Psi_0(\rho, c_y)}_{C(D)} \\
		&\leq \ep .
	\end{align*}
	The proof is complete.
\end{proof}

\corHolderMarkov*
\begin{proof}
	We show that $ K\defeq \cX_{\delta}\cap \cH^\alpha_R\subset C(D\times E) $ is compact and then apply \cref{thm:approx_kernel,thm:approx_incontext}.
	By \cref{lem:completeness-of-metric-spaces}, $ \cX_{\delta} $ is closed in $ C(D\times E) $. An argument similar to the one used to prove \cite[Lem.~4.4, pp.~25--26]{de2025extension} shows that the set $ \cH^\alpha_R $ is also closed in $ C(D\times E) $. Thus, $ \cX_{\delta}\cap \cH^\alpha_R $ is closed. It remains to show that $ \cH^\alpha_R\supseteq \cX_{\delta}\cap \cH^\alpha_R $ is relatively compact in $ C(D\times E) $. But this is a consequence of the compactness of the embedding $ C^{0,\al}(D\times E)\hookrightarrow C^{0,\beta}(D\times E) $ for any $ 0<\beta<\al $ \cite[Thm. 1.34, pp. 11--12]{adams2003sobolev}.
\end{proof}

\subsection{Proofs for Subsection~\ref{sec:unbounded}:~\nameref*{sec:unbounded}}
\thmUaMarkovUnbounded*
\begin{proof}
The proof is divided into several steps.
\begin{enumerate}[
    label=\textbf{Step~\arabic*.},
    leftmargin=*,
    labelsep=0.75em,
    topsep=1.67ex,
    itemsep=0.33ex,
    partopsep=1ex,
    parsep=1ex
]
    \item \textit{Approximate $K$ by another compact set $K_M$ that is bounded in $L^\infty$}.\label{item:step1}
    \item \textit{Truncate the output space $\R^{d'}$ to a bounded domain $\Omega'$}.\label{item:step2}
    \item \textit{Truncate the input space $\R^{d}$ to a bounded domain $\Omega$}.\label{item:step3}
    \item \textit{Apply an existing universal approximation theorem for operators between bounded domain function spaces}.\label{item:step4}
    \item \textit{Combine the estimates into a single error bound}.\label{item:step5}
\end{enumerate}

We now proceed with this multi-step proof procedure.
\paragraph*{\ref{item:step1}}
    By \cref{lem:map_continuous_compact_neighborhood,lem:trunc_bounds}, we may choose $M=M(\ep)>0$ such that
    \begin{align}\label{eqn:proof_temp_trunc_truth}
        \sup_{\rho\in K}\norm{F^\star(\rho)-F^\star(T_M(\rho))}_{L^1(\R^{d'})}\leq\frac{\ep}{2},
    \end{align}
    where $T_M$ is as in \eqref{eqn:trunc_defn}.
    Let $K_M\defeq T_M(K)\subset L^1(\R^d)$, which is compact because $T_M\colon L^1(\R^d)\to L^1(\R^d)$ is continuous by \cref{lem:trunc_bounds}. Moreover, $K_M$ is bounded in $L^\infty(\R^d)$ by \eqref{eqn:trunc_norm}.

\paragraph*{\ref{item:step2}}
    Since $F^\star$ is continuous, the set $K_M^\star\defeq F^\star(K_M)\subset L^1(\R^{d'})$ is compact. By a corollary to the Frech\'et--Kolmogorov--Riesz (FKR) compactness theorem \cite[Exercise~4.34(3), p.~128--129]{brezis2011functional}, $K_M^\star$ is uniformly tight. Thus, we can find a compact set $A'=A'(\ep)\subset \R^{d'}$ such that
    \begin{align}\label{eqn:proof_temp_tight_truth}
        \sup_{g\in K_M^\star}\int_{\R^{d'}\setminus A'}\abs{g(x)}\dd{x}\leq\frac{\ep}{4}.
    \end{align}
    By the compactness of $A'$, we may choose $R'=R'(\ep)$ large enough such that $\Omega'\defeq (-R',R')^{d'}\supset A'$. Note that $\Omega'$ is an open bounded Lipschitz domain.

\paragraph*{\ref{item:step3}}
    By the same FKR compactness argument, for any $\delta>0$, we can find a compact set $A=A(\delta,\ep)\subset\R^d$ such that
    \begin{align}\label{eqn:temp_proof_tight_set}
        \sup_{h \in K_M}\int_{\R^{d}\setminus A}\abs{h(x)}\dd{x}\leq\delta .
    \end{align}
    By the compactness of $A$, we may choose $R=R(\delta, \ep)$ large enough such that $\Omega\defeq (-R,R)^{d}\supset A$. Notice that $\norm{h-\cZ_\Omega\cR_\Omega h}_{L^1(\R^d)}=\int_{\R^{d}\setminus \Omega}\abs{h(x)}\dd{x}\leq \int_{\R^{d}\setminus A}\abs{h(x)}\dd{x}$. Then \cref{lem:map_continuous_compact_neighborhood} implies that we can choose $\delta=\delta(\ep)$ in \eqref{eqn:temp_proof_tight_set} such that for every $h\in K_M$, it holds that
    \begin{align}\label{eqn:temp_proof_input_trunc_domain}
        \norm{\cR_{\Omega'}F^\star(h)-\cR_{\Omega'}F^\star(\cZ_\Omega\cR_\Omega h)}_{L^1(\Omega')}\leq \norm{F^\star(h)-F^\star(\cZ_\Omega\cR_\Omega h)}_{L^1(\R^{d'})}\leq \frac{\ep}{8}.
    \end{align}

\paragraph*{\ref{item:step4}}
    Now define $\Psi_{0}\defeq \cR_{\Omega'}\circ F^\star \circ \cZ_{\Omega}\colon L^1(\Omega)\to L^1(\Omega')$, which is a composition of continuous maps and is hence continuous itself. The set $\cR_\Omega (K_M)\subset L^1(\Omega)$ is compact and is also bounded in $L^\infty(\Omega)$. By the universal approximation theorem for neural operators \cite[Thm.~2, Sec.~2.3, p.~267, applied with $s=s'=0$, $p=p'=1$, $k=k'=1$]{lanthaler2025nonlocality}, there exists a neural operator $\Psi_{00}\colon L^1(\Omega)\to L^1(\Omega')$ such that
    \begin{align}\label{eqn:temp_proof_ua_bounded}
        \sup_{f\in \cR_\Omega (K_M)}\norm{\Psi_{0}(f)-\Psi_{00}(f)}_{L^1(\Omega')}\leq \frac{\ep}{8}.
    \end{align}

\paragraph*{\ref{item:step5}}
    Define $\Psi\defeq \cZ_{\Omega'}\circ\Psi_{00}\circ\cR_\Omega\circ T_M$. Recall that $T_M(f)(x)=t_M(f(x))$, where $t_M(z)\defeq \min(z, M\, \mathrm{sgn}(z))$ because $f$ is real-valued. A direct calculation shows that
    \begin{align*}
       z\mapsto t_M(z)=\max(-M,\min(z,M))=(z+M)_+-(z-M)_+-M
    \end{align*}
    is a shallow ReLU neural network. We claim that $\Psi$ has the form \eqref{eqn:defn_neural_op_unbounded} with operator $\cS$ associated to function $S\defeq t_M$. Indeed, for any $x\in\Omega$, it holds that $(\cR_\Omega\circ T_M)(f)(x)=t_M(f(x))=t_M(R_\Omega f(x))=(\cS\circ \cR_\Omega)(f)(x)$. The claim follows.

    To complete the proof, we estimate the total error by
    \begin{align*}
		\sup_{\rho\in K}\norm{F^\star(\rho)-\Psi(\rho)}_{L^1(\R^{d'})}&\leq
        \sup_{\rho\in K}\norm{F^\star(\rho)-F^\star(T_M(\rho))}_{L^1(\R^{d'})} \\
        &\qquad + \sup_{h\in K_M}\norm{F^\star(h)-(\cZ_{\Omega'}\circ\Psi_{00}\circ\cR_\Omega)(h)}_{L^1(\R^{d'})}\\
        &\leq  \frac{\ep}{2} + \sup_{h\in K_M} \int_{\R^{d'}\setminus \Omega'}\abs{F^\star(h)(x)}\dd{x}\\
        &\qquad\qquad + \sup_{h\in K_M} \int_{\Omega'}\abs{F^\star(h)(x)-\Psi_{00}(\cR_\Omega h)(x)}\dd{x}\\
        &\leq \frac{3\ep}{4} + \sup_{h\in K_M} \int_{\Omega'}\abs{F^\star(h)(x)-\Psi_{00}(\cR_\Omega h)(x)}\dd{x}.
    \end{align*}
    We invoked \cref{eqn:proof_temp_tight_truth,eqn:proof_temp_trunc_truth} in the preceding display. Moreover,
    \begin{align*}
        \sup_{h\in K_M} \int_{\Omega'}\abs{F^\star(h)(x)-\Psi_{00}(\cR_\Omega h)(x)}\dd{x}
        &\leq \sup_{h\in K_M}\norm{\cR_{\Omega'}F^\star(h)-\Psi_{0}(\cR_\Omega h)}_{L^1(\Omega')}\\
        &\qquad + \sup_{h\in K_M}\norm{\Psi_{0}(\cR_\Omega h)-\Psi_{00}(\cR_\Omega h)}_{L^1(\Omega')}\\
        &\leq \frac{\ep}{8} + \frac{\ep}{8}=\frac{\ep}{4}
    \end{align*}
    by \cref{eqn:temp_proof_ua_bounded,eqn:temp_proof_input_trunc_domain}.
    Combining the estimates concludes the proof.
\end{proof}

\corUaMarkovUnboundedConditioning*
\begin{proof}
    Define $\cZ_B^{(2)}\in\cL(L^1(\R^m\times B); L^1(\R^m\times\R^r))$ by zero extension in the second coordinate. Also define $F^\star\defeq \cZ_B^{(2)}\circ \cG^\star|_K\colon K\subset L^1(\R^m\times\R^r)\to L^1(\R^m\times\R^r)$, which is continuous by \cref{lem:global_lip_kernel_operator_unbounded}.
	By the Dugundji extension theorem \cite[Thm.~4.1, p.~357]{dugundji1951extension}, there exists a continuous map $ \widetilde{F}^\star\colon L^1(\R^m\times\R^r)\to L^1(\R^m\times\R^r) $ that extends $ F^\star $. With $ \widetilde{F}^\star\colon L^1(\R^{m+r})\to L^1(\R^{m+r}) $, application of \cref{thm:ua_markov_unbounded} (taking the larger of the two radii) delivers $R>0$ and a whole space neural operator $\cG\colon L^1(\R^{m+r})\to L^1(\R^{m+r})$ as in \eqref{eqn:defn_neural_op_unbounded} with $\Omega=\Omega'\defeq (-R,R)^{m+r}$ such that
    \begin{align*}
        \sup_{\rho\in K}\norm{\widetilde{F}^\star(\rho)-\cG(\rho)}_{L^1(\R^{m+r})}\leq\ep.
    \end{align*}
    The assertion \eqref{eqn:ua_markov_unbounded_conditioning} follows from the fact that
    \begin{align*}
        \norm{\cG^\star(\rho)-\cG(\rho)}_{L^1(\R^{m}\times B)}\leq
        \norm{{F}^\star(\rho)-\cG(\rho)}_{L^1(\R^{m}\times\R^r)}=\norm{\widetilde{F}^\star(\rho)-\cG(\rho)}_{L^1(\R^{m+r})}
    \end{align*}
    for any $\rho\in K$.
\end{proof}

\begin{restatable}[Truncation operator]{lemma}{lemTruncBounds}\label{lem:trunc_bounds}
	Let $U\subseteq\R^d$ be measurable and open. Fix $M>0$. Define the truncation operator $T_M\colon L^1(U)\to L^1(U)\cap L^\infty(U)\subseteq L^1(U)$ by
	\begin{align}\label{eqn:trunc_defn}
		T_M(f)(x)\defeq
		\begin{cases}
			f(x), &\quad \abs{f(x)}\leq M,\\
			M\frac{f(x)}{\abs{f(x)}},&\quad \abs{f(x)}>M
		\end{cases}
	\end{align}
	for each $f\in L^1(U)$ and $x\in U$. Then $T_M$ is continuous. Moreover, if $K\subset L^1(U)$ is compact, then
	\begin{subequations}\label{eqn:trunc_both}
		\begin{align}
			\sup_{f\in K}\norm{T_M(f)}_{L^\infty(U)}&\leq M\qa \label{eqn:trunc_norm}\\
			\lim_{M\to\infty}\sup_{f\in K}\norm{f-T_M(f)}_{L^1(U)}&=0.\label{eqn:trunc_converge}
		\end{align}
	\end{subequations}
\end{restatable}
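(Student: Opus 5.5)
The plan is to write $T_M(f)=t_M\circ f$ with the scalar clipping map $t_M(z)=\max(-M,\min(z,M))$, which is $1$-Lipschitz with $t_M(0)=0$ and $\abs{t_M(z)}\le\min(\abs{z},M)$. These three facts give everything.

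\emph{Well-definedness and continuity.} Since $\abs{t_M(z)}\le\abs{z}$, we have $\abs{T_M(f)}\le\abs{f}$ pointwise, so $T_M(f)\in L^1(U)$; also $\abs{T_M(f)}\le M$, so $T_M(f)\in L^\infty(U)$. This already yields \eqref{eqn:trunc_norm}. The $1$-Lipschitz property of $t_M$ gives $\abs{T_M(f)(x)-T_M(g)(x)}\le\abs{f(x)-g(x)}$. Integrating, $\norm{T_M(f)-T_M(g)}_{L^1(U)}\le\norm{f-g}_{L^1(U)}$, so $T_M$ is $1$-Lipschitz and in particular continuous.

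\emph{Uniform convergence \eqref{eqn:trunc_converge}.} For a single $f$, note that $\abs{f-T_M(f)}=(\abs{f}-M)_+\le\abs{f}\one_{\{\abs{f}>M\}}$. This tends to $0$ a.e.\ as $M\to\infty$ and is dominated by $\abs{f}$, so dominated convergence gives $\norm{f-T_M(f)}_{L^1}\to0$. To upgrade this to uniformity over the compact set $K$, I would use an $\ep/3$ argument. Cover $K$ by finitely many $L^1$-balls of radius $\ep/3$ centered at $f_1,\dots,f_N$, and choose $M_0$ such that $\norm{f_i-T_M(f_i)}_{L^1}\le\ep/3$ for all $i$ and all $M\ge M_0$. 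For $f\in K$ with $\norm{f-f_i}\le\ep/3$, the triangle inequality together with the $1$-Lipschitz bound for $T_M$ gives
\begin{align*}
\norm{f-T_M(f)}_{L^1}\le\norm{f-f_i}_{L^1}+\norm{f_i-T_M(f_i)}_{L^1}+\norm{T_M(f_i)-T_M(f)}_{L^1}\le\ep.
\end{align*}

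The only mildly delicate point is this uniformity step, and the nonexpansiveness of $T_M$ is exactly what makes the finite-net argument close. Alternatively, $M\mapsto\norm{f-T_M f}_{L^1}$ is monotone decreasing and continuous in $f$, so Dini's theorem on the compact set $K$ also gives uniform convergence. A further alternative is to use the uniform integrability of compact sets in $L^1$. Measurability of $T_M(f)$ is immediate, since $t_M$ is continuous.
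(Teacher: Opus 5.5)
Your proof is correct. For continuity and the $L^\infty$ bound it coincides with the paper's, which likewise just notes that $T_M$ is the Nemytskii operator of a $1$-Lipschitz scalar map.

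The two routes differ only in the uniformity step. The paper bounds $\norm{f-T_M(f)}_{L^1(U)}\le\int_U\one_{\{\abs{f}>M\}}\abs{f}\,dx$. It then cites the fact that $L^1$-compact sets are uniformly integrable (Brezis, Exercise~4.36), so the right-hand side vanishes uniformly over $K$; this is the option you list last. Your main argument instead gets convergence for each fixed $f$ by dominated convergence. You then upgrade it via a finite $\varepsilon/3$-net, with the nonexpansiveness of $T_M$ closing the estimate.

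Your route is self-contained and uses only total boundedness of $K$. It is the general principle that an equicontinuous family converging pointwise converges uniformly on compacta. Here $f\mapsto f-T_M(f)$ is even uniformly $1$-Lipschitz, since $z\mapsto z-t_M(z)$ is. So the argument would transfer unchanged to any uniformly Lipschitz approximation of the identity.

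The paper's version is shorter and exhibits the standard uniform-integrability modulus $\sup_{f\in K}\int_{\{\abs{f}>M\}}\abs{f}\,dx$ as an explicit control. However, it outsources the compactness argument to a cited result whose usual proof is itself a finite-net argument. Your Dini alternative is also sound, since $M\mapsto\norm{f-T_M(f)}_{L^1(U)}$ is nonincreasing and continuous in $f$.
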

\begin{proof}
	The continuity of $T_M\colon L^1(U)\to  L^1(U)$ follows from the fact that $T_M$ is a Nemytskii operator associated to a $1$-Lipschitz function from $U$ into itself. The uniform bound \eqref{eqn:trunc_norm} follows by the definition \eqref{eqn:trunc_defn} of $T_M$. For the assertion \eqref{eqn:trunc_converge}, we compute
	\begin{align*}
		\int_{U}\abs{f(x)-T_M(f)(x)}\dd{x}&=\int_{U}\one_{\{\abs{f(x)}>M\}}\abs[\bigg]{f(x)\biggl(1-\frac{M}{\abs{f(x)}}\biggr)}\dd{x}\\
		&\leq \int_{U}\one_{\{\abs{f(x)}>M\}}\abs{f(x)}\dd{x}
	\end{align*}
	for any $f\in K$ and $M>0$. 
    Using the fact that $K\subset L^1(U)$ compact implies that $K$ is uniformly integrable \cite[Exercise~4.36(1--2), Eqn.~(d), p.~129]{brezis2011functional}, the result \eqref{eqn:trunc_converge} follows.
\end{proof}

\begin{restatable}[Uniform continuity near compacta]{lemma}{lemMapContinuousCompactNeighborhood}\label{lem:map_continuous_compact_neighborhood}
	Let $(X, \sfd_X)$ and $ (Y,\sfd_Y) $ be metric spaces. Let $F\colon X\to Y$ be continuous. Let $K\subset X$ be compact. Then for every $\ep>0$, there exists $\delta>0$ such that for all $k\in K$ and all $x\in X$ satisfying $\sfd_X(k,x)\leq \delta$, it holds that $ \sfd_Y(F(k),F(x))\leq\ep $.
\end{restatable}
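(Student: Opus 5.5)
This is a Heine--Cantor type statement, and I will argue by contradiction using sequential compactness of $K$. Suppose the conclusion fails for some $\ep>0$. Then for every $n\in\N$, taking $\delta=1/n$, there exist $k_n\in K$ and $x_n\in X$ with $\sfd_X(k_n,x_n)\leq 1/n$ but $\sfd_Y(F(k_n),F(x_n))>\ep$.

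Since $K$ is a compact subset of a metric space, it is sequentially compact. So there is a subsequence $(k_{n_j})_{j\in\N}$ converging to some $k\in K$. By the triangle inequality, $\sfd_X(x_{n_j},k)\leq \sfd_X(x_{n_j},k_{n_j})+\sfd_X(k_{n_j},k)\leq 1/n_j+\sfd_X(k_{n_j},k)\to 0$, so $x_{n_j}\to k$ as well.

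Because $F$ is continuous at $k$, both $F(k_{n_j})\to F(k)$ and $F(x_{n_j})\to F(k)$ in $Y$. Hence
\begin{equation*}
\sfd_Y\bigl(F(k_{n_j}),F(x_{n_j})\bigr)\leq \sfd_Y\bigl(F(k_{n_j}),F(k)\bigr)+\sfd_Y\bigl(F(k),F(x_{n_j})\bigr)\to 0,
\end{equation*}
which contradicts the lower bound $\ep$ for all large $j$.

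The one point to check is that the points $x_n$ are not required to lie in $K$. This is harmless, since they are forced to converge to the limit point $k\in K$, and continuity is only used at $k$.

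An alternative, direct route would use the Lebesgue number lemma applied to the open cover of $K$ by the balls where $F$ oscillates by less than $\ep/2$. I prefer the sequential argument because it is shorter.
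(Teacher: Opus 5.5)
Your proof is correct and follows essentially the same argument as the paper. Both argue by contradiction with pairs $(k_n,x_n)$ satisfying $\sfd_X(k_n,x_n)\leq 1/n$, extract a subsequence $k_{n_j}\to k\in K$ by compactness, deduce $x_{n_j}\to k$, and use continuity of $F$ at $k$ to force $\sfd_Y(F(k_{n_j}),F(x_{n_j}))\to 0$.
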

\begin{proof}
	Suppose not. Then there exists $\ep>0$ such that for every $n\in\N$, there exists $(k_n,x_n)\in K\times X$ such that $\sfd_X(k_n,x_n)\leq 1/n$ and $\sfd_Y(F(k_n),F(x_n))>\ep$. By the compactness of $K$ in $X$, there exists a subsequence $\{k_{n_j}\}_{j\in\N}$ and a limit $k\in K$ such that $\sfd_X(k_{n_j},k)\to 0$ as $j\to\infty$. Consequently, $\sfd_X(x_{n_j}, k)\leq 1/n_j +\sfd_X(k_{n_j}, k)\to 0$. By continuity of $F$, we see that $\sfd_Y(F(k_{n_j}), F(x_{n_j}))\leq \sfd_Y(F(k_{n_j}), F(k)) + \sfd_Y(F(k), F(x_{n_j}))\to 0$ as $j\to\infty$. This contradicts the fact that $\sfd_Y(F(k_n),F(x_n))>\ep$ for all $n\in\N$.
\end{proof}

\section{Proofs for Section~\ref{sec:stability}:~\nameref*{sec:stability}}\label{app-subsec:proofs_stability}
This appendix collects the proofs of the stability results for the conditioning operators.

\subsection{Proofs for Subsection~\ref{subsec:stability-results}:~\nameref*{subsec:stability-results}}

\lemLocalLipKernelOperator*
\begin{proof}
	By definition of $ \cG^\star $ and the triangle inequality,
	\begin{align*}
		\norm{\cG^\star(p)-\cG^\star(q)}_{C(D\times E)}=\norm[\bigg]{\frac{p}{m_p}-\frac{q}{m_q}}_{C(D\times E)}\leq \norm[\bigg]{\frac{p}{m_p}-\frac{p}{m_q}}_{C(D\times E)} + \norm[\bigg]{\frac{p}{m_q}-\frac{q}{m_q}}_{C(D\times E)} .
	\end{align*}
	Since both $ p $ and $ q $ belong to $ \cX_{\delta}\subset \cM_\delta $, the second term on the right-hand side of the previous display is bounded above by $ \delta^{-1}\norm{p-q}_{C(D\times E)} $.
	The first term is bounded above by
	\begin{align*}
		\norm{p}_{C(D\times E)}\norm[\bigg]{\frac{m_q-m_p}{m_pm_q}}_{C(D\times E)}\leq \frac{\norm{p}_{C(D\times E)}}{\delta^2}\norm{m_p-m_q}_{C(E)}.
	\end{align*}
	The data marginals satisfy
	\begin{align*}
		\norm{m_p-m_q}_{C(E)}=\sup_{y\in E}\abs[\bigg]{\int_D \bigl(p(x,y)-q(x,y)\bigr)\dd{x}}\leq \abs{D}\, \norm{p-q}_{C(D\times E)} .
	\end{align*}
	The asserted stability bound follows by collecting terms and invoking symmetry.
\end{proof}

\lemGlobalLipKernelOperatorUnbounded*
\begin{proof}
	By definition of $ \cG^\star $ and the triangle inequality,
	\begin{align*}
		\norm{\cG^\star(p)-\cG^\star(q)}_{L^1(\R^m\times B)}\leq \norm[\bigg]{\frac{p}{m_p}-\frac{p}{m_q}}_{L^1(\R^m\times B)} + \norm[\bigg]{\frac{p}{m_q}-\frac{q}{m_q}}_{L^1(\R^m\times B)}.
	\end{align*}
	Since both $ p $ and $ q $ belong to $ \cU_{\delta}$, the second term on the right-hand side of the previous display is bounded above by $ \delta^{-1}\norm{p-q}_{L^1(\R^m\times\R^r)} $.
	For the first term,
	\begin{align*}
		\norm[\bigg]{\frac{p}{m_p}-\frac{p}{m_q}}_{L^1(\R^m\times B)}&=\int_{\R^m\times B}\frac{\abs{m_q(y)-m_p(y)}}{m_p(y)m_q(y)} p(x,y)\dd{x}dy\\
        &=\int_B \frac{\abs{m_q(y)-m_p(y)}}{m_q(y)} \dd{y}\\
        &\leq \frac{1}{\delta}\norm{m_p-m_q}_{L^1(B)}.
	\end{align*}
	The second equality in the preceding display invoked Tonelli's theorem to exchange the order of integration. The data marginals satisfy
	\begin{align*}
		\norm{m_p-m_q}_{L^1(B)}=\int_B\abs[\bigg]{\int_{\R^m} \bigl(p(x,y)-q(x,y)\bigr)\dd{x}}\dd{y}\leq \norm{p-q}_{L^1(\R^m\times\R^r)}.
	\end{align*}
	The asserted Lipschitz continuity follows.
\end{proof}

\thmContinuityIncontext*
\begin{proof}
	We use the sequential definition of continuity. Let $ (\rho,y)\in\cX_{\delta}\times E $ and $ \{(\rho_n,y_n)\}_{n\in\N}\subset \cX_{\delta}\times E $ be any sequence such that
	\begin{align*}
		\norm{(\rho_n,y_n)-(\rho,y)}_{C(D\times E)\times E}=\max\bigl(\norm{\rho_n-\rho}_{C(D\times E)}, \abs{y_n-y}\bigr)\to 0
	\end{align*}
	as $ n\to\infty$. We must prove that $ \Psi^\star(\rho_n,y_n)\to\Psi^\star(\rho,y) $ in $ C(D) $. To this end, we use the error decomposition
	\begin{align*}
		\norm{\Psi^\star(\rho_n,y_n)-\Psi^\star(\rho,y)}_{C(D)} &\leq \underbrace{\norm{\rho_n(\slot\condbar y_n)-\rho(\slot\condbar y_n)}_{C(D)}}_{\mathrm{(I)}}\\[1mm]
		&\qquad\qquad +
		\underbrace{\norm{\rho(\slot\condbar y_n)-\rho(\slot\condbar y)}_{C(D)}}_{\mathrm{(II)}}.
	\end{align*}

	For the first term $ \mathrm{(I)} $, it holds that
	\begin{align*}
		\mathrm{(I)}&\leq \sup_{y'\in E}\norm{\rho_n(\slot\condbar y')-\rho(\slot\condbar y')}_{C(D)}\\
		&=\norm{\cG^\star(\rho_n)-\cG^\star(\rho)}_{C(D\times E)}\\
		&\leq
		\frac{1}{\delta} \biggl(1+\frac{\abs{D}\min(\sup_{n'\in\N}\norm{\rho_{n'}}_{C(D\times E)},\norm{\rho}_{C(D\times E)})}{\delta}\biggr)\norm{\rho_n-\rho}_{C(D\times E)}
	\end{align*}
	by \cref{lem:local_lip_kernel_operator}. The last line of the preceding display tends to zero as $ n\to\infty $.

	It remains to show that term $ \mathrm{(II)} $ converges to zero. It holds that
	\begin{align*}
		\mathrm{(II)}&\leq \norm[\bigg]{\frac{\rho(\slot, y_n)}{m_\rho(y_n)}-\frac{\rho(\slot, y_n)}{m_\rho(y)}}_{C(D)} + \norm[\bigg]{\frac{\rho(\slot, y_n)}{m_\rho(y)}-\frac{\rho(\slot, y)}{m_\rho(y)}}_{C(D)}\\
		&\leq
		\frac{\norm{\rho}_{C(D\times E)}\abs{m_\rho(y_n)-m_\rho(y)}}{\delta^2} + \frac{1}{\delta}\norm{\rho(\slot, y_n)-\rho(\slot, y)}_{C(D)}.
	\end{align*}
	Since $ \rho\in C(D\times E) $ is continuous on the compact set $ D\times E $, the function $ \rho $ is in fact uniformly continuous with modulus of continuity $ \omega_\rho $. Therefore,
	\begin{align*}
		\abs{m_\rho(y_n)-m_\rho(y)}\leq \int_D\abs{\rho(x,y_n)-\rho(x,y)}\dd{x} \leq \abs{D}\, \omega_\rho\bigl(\max(0,\abs{y_n-y})\bigr)\to 0
	\end{align*}
	as $ n\to\infty $. Similarly,
	\begin{align*}
		\norm{\rho(\slot, y_n)-\rho(\slot, y)}_{C(D)}=\sup_{x\in D} \abs{\rho(x,y_n)-\rho(x,y)}\leq \omega_\rho(\abs{y_n-y})\to 0
	\end{align*}
	by the convergence of $ y_n\to y $. We deduce that $ \mathrm{(II)}\to 0 $ as required.
\end{proof}

\corHolderStabilityIncontext*
\begin{proof}
	First suppose that $ \norm{p-q}_{C(D\times E)}\leq 1 $. By following the proof of \cref{thm:continuity_incontext}, we have
	\begin{align*}
		\norm{\Psi^\star(p,y)-\Psi^\star(q,y')}_{C(D)}&\leq \frac{1}{\delta} \biggl(1+\frac{\abs{D}R}{\delta}\biggr)\norm{p-q}_{C(D\times E)}\\
		&\qquad + \frac{\abs{D}R}{\delta^2}\omega_p(\abs{y-y'})
		+
		\frac{1}{\delta}\omega_p(\abs{y-y'})\\
		&\leq \frac{1}{\delta} \biggl(1+\frac{\abs{D}R}{\delta}\biggr)\norm{p-q}_{C(D\times E)}\\
		&\qquad + \frac{\abs{D}R^2}{\delta^2}\abs{y-y'}^\al
		+
		\frac{R}{\delta}\abs{y-y'}^\al.
	\end{align*}
	Since $ \norm{p-q}_{C(D\times E)}\leq 1 $, it holds that
	\begin{align*}
		\norm{p-q}_{C(D\times E)}\leq \norm{p-q}_{C(D\times E)}^\al\leq \norm{(p,y)-(q,y')}^\al_{C(D\times E)\times E}\,.
	\end{align*}
	Using $\abs{y-y'}\leq \norm{(p,y)-(q,y')}_{C(D\times E)\times E}$ and combining terms yield the bound
	\begin{align*}
		\norm{\Psi^\star(p,y)-\Psi^\star(q,y')}_{C(D)}&\leq \frac{(1+R)}{\delta}\biggl(1+\frac{\abs{D}R}{\delta}\biggr)\norm[\big]{(p,y)-(q,y')}^\al_{C(D\times E)\times E}\\
		&\leq \frac{(1+2R)}{\delta}\biggl(1+\frac{\abs{D}R}{\delta}\biggr)\norm[\big]{(p,y)-(q,y')}^\al_{C(D\times E)\times E}\,.
	\end{align*}
	We enlarged $ R $ to $ 2R $ in the first parenthesis of the last line.

	If $ \norm{p-q}_{C(D\times E)}> 1 $, then $\norm{(p,y)-(q,y')}^\al_{C(D\times E)\times E}\geq  \norm{p-q}_{C(D\times E)}^\al > 1 $. Using this fact, it holds that
	\begin{align*}
		\norm{\Psi^\star(p,y)-\Psi^\star(q,y')}_{C(D)}&\leq \norm{\Psi^\star(p,y)}_{C(D)} + \norm{\Psi^\star(q,y')}_{C(D)}\\
		&\leq \frac{2R}{\delta}\\
		&< \frac{2R}{\delta} \norm{(p,y)-(q,y')}^\al_{C(D\times E)\times E}\\
		&\leq \frac{(1+2R)}{\delta}\biggl(1+\frac{\abs{D}R}{\delta}\biggr)\norm[\big]{(p,y)-(q,y')}^\al_{C(D\times E)\times E}\,.
	\end{align*}
	Thus, the estimate \eqref{eqn:holder_stability_incontext} is always valid as asserted.
\end{proof}

\subsection{Proofs for Subsection~\ref{sec:removing_unif_lower_bound}:~\nameref*{sec:removing_unif_lower_bound}}\label{app-subsec:proofs_generalizations}

\begin{restatable}[Marginal continuity]{lemma}{lemContinuityMarginalFunction}
	\label{lem:continuity_marginal_function}
	The function $ m_\rho \colon E \to \R $ is continuous for any $ \rho \in \Cac(D \times E) $.
\end{restatable}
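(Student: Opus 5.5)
The plan is to use uniform continuity of $\rho$ on the compact set $D\times E$, exactly as in the treatment of term (II) in the proof of \cref{thm:continuity_incontext}. Fix $\rho\in\Cac(D\times E)$. Since $\rho$ is continuous on the compact set $D\times E$, it is uniformly continuous. We therefore have a modulus of continuity $\omega_\rho$ with $\omega_\rho(t)\to 0$ as $t\to 0$, such that
\begin{equation*}
\abs{\rho(x,y)-\rho(x',y')}\leq \omega_\rho\bigl(\max(\norm{x-x'},\norm{y-y'})\bigr).
\end{equation*}

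Now take $y,y'\in E$. Since $D$ is compact, $\abs{D}<\infty$, and the marginal $m_\rho(y)$ is well defined as the integral of a bounded continuous function over a set of finite measure. We estimate directly:
\begin{equation*}
\abs{m_\rho(y)-m_\rho(y')}\leq \int_D \abs{\rho(x,y)-\rho(x,y')}\dd{x}\leq \abs{D}\,\omega_\rho(\norm{y-y'}).
\end{equation*}
The right-hand side tends to zero as $y'\to y$. This shows that $m_\rho$ is in fact uniformly continuous on $E$, and in particular continuous.

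A second route, if one prefers to avoid moduli of continuity, is sequential. Take $y_n\to y$. Then $\rho(\slot,y_n)\to\rho(\slot,y)$ pointwise on $D$, and the family is dominated by the constant $\norm{\rho}_{C(D\times E)}$, which is integrable on the bounded set $D$. The dominated convergence theorem then gives $m_\rho(y_n)\to m_\rho(y)$.

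There is no real obstacle here. The only point that needs care is that the bound is uniform in $x$, and this is exactly what compactness of $D\times E$ provides. Note also that neither nonnegativity of $\rho$ nor the normalization $\int\rho=1$ is needed; continuity of $\rho$ alone suffices.
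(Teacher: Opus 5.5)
Your proof is correct and follows essentially the same route as the paper: uniform continuity of $\rho$ on the compact set $D\times E$ controls $\sup_{x\in D}\abs{\rho(x,y_n)-\rho(x,y)}$, and integrating over $D$ gives $\abs{m_\rho(y_n)-m_\rho(y)}\leq \abs{D}\sup_{x\in D}\abs{\rho(x,y_n)-\rho(x,y)}$. The paper phrases this sequentially with an $\epsilon$--$n_0$ argument instead of a modulus of continuity, which is only a cosmetic difference, and your dominated convergence alternative is also valid.
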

\begin{proof}
	Since the function $ \rho \colon D \times E \to \R $ is continuous on the compact set $ D \times E $, it is in fact uniformly continuous.
	Take a sequence $ \{y_n\}_{n \in \N} \subset E $ such that $ y_n \to y $ as $ n \to \infty $. By uniform continuity, there is some $n_0 \in \N$ such that for all $ n \geq n_0 $, we have
	\begin{equation*}
		\sup_{x \in D} |\rho(x,y_n) - \rho(x,y)| < \epsilon / |D| ,
	\end{equation*}
	where $|D|$ is the Lebesgue measure of $ D $.
	Then for all $ n \geq n_0 $, we have
	\begin{align*}
		|m_\rho(y_n) - m_\rho(y)| & = \left| \int_D \rho(x,y_n) \, dx - \int_D \rho(x,y) \, dx \right| \\
		& \leq \int_D |\rho(x,y_n) - \rho(x,y)| \, dx \\
		& \leq |D| \, \sup_{x \in D} |\rho(x,y_n) - \rho(x,y)| \\
		& < \epsilon ,
	\end{align*}
	concluding the proof.
\end{proof}

\begin{restatable}[Positive marginal characterization]{lemma}{lemmaRepresentationOfXPlus}
	\label{lemma:representation_of_X_plus}
	Let $\cX_+\defeq \bigcup_{\delta > 0} \mathcal{X}_\delta$. Then it holds that
    \begin{equation*}
		\mathcal{X}_+ = \set{\rho \in \Cac(D \times E)}{m_\rho > 0} .
	\end{equation*}
\end{restatable}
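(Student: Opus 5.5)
We prove the two inclusions separately; the substance is in the reverse one. Here $m_\rho>0$ means $m_\rho(y)>0$ for every $y\in E$.

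\emph{Forward inclusion.} Take $\rho\in\mathcal{X}_+$. Then $\rho\in\cX_\delta$ for some $\delta>0$. Hence $\rho\in\Cac(D\times E)$ and $m_\rho(y)\geq\delta>0$ for every $y\in E$, which is exactly membership in the right-hand side.

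\emph{Reverse inclusion.} Take $\rho\in\Cac(D\times E)$ with $m_\rho(y)>0$ for all $y\in E$. By \cref{lem:continuity_marginal_function}, $m_\rho$ is a continuous real-valued function on $E$. Since $E$ is compact, the extreme value theorem gives a point $y_0\in E$ with $m_\rho(y_0)=\inf_{y\in E}m_\rho(y)$. Set $\delta\defeq m_\rho(y_0)$; by hypothesis $\delta>0$. Then $\inf_{y\in E}m_\rho(y)\geq\delta$, so $\rho\in\cX_\delta\subseteq\mathcal{X}_+$.

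The only step needing care is that the infimum is attained, so that it is strictly positive rather than merely nonnegative. This uses compactness of $E$ together with continuity of $m_\rho$, which is supplied by \cref{lem:continuity_marginal_function}. Pointwise positivity alone would not suffice without compactness. Otherwise the argument is a direct unpacking of definitions.
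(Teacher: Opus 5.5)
Your proposal is correct and follows the paper's proof essentially verbatim: the forward inclusion is immediate from the definition of $\cX_\delta$. For the reverse inclusion, continuity of $m_\rho$ (\cref{lem:continuity_marginal_function}) and compactness of $E$ give an attained minimum $\delta = m_\rho(y_0) > 0$, so $\rho \in \cX_\delta$.
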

\begin{proof}
	Since for any $\delta > 0$ we have
	\begin{equation*}
		\mathcal{X}_\delta \subseteq \set{\rho \in \Cac(D \times E)}{m_\rho > 0} ,
	\end{equation*}
	we get the trivial inclusion
	\begin{equation*}
		\mathcal{X}_+ \subseteq \set{\rho \in \Cac(D \times E)}{m_\rho > 0} .
	\end{equation*}
	For the other inclusion, take any $\rho \in C(D \times E)$ such that $m_\rho > 0$. By compactness of $E$ and \cref{lem:continuity_marginal_function},
	there is some $y_0 \in E$ such that
	\begin{equation*}
		m_\rho(y_0) = \min_{y' \in E} m_\rho(y') .
	\end{equation*}
	Thus, if we set $\delta \defeq m_\rho(y_0) > 0$, then we obtain $\rho \in \mathcal{X}_\delta \subseteq \mathcal{X}_+$ as required.
\end{proof}

\propExtensionOfOperators*
\begin{proof}
	For any $\delta > 0$, \cref{eqn:operator_kernel,eqn:operator_incontext} define operators
	\begin{equation*}
		\cG^\star_\delta \colon \mathcal{X}_\delta \to C(D \times E) \qa \Psi^\star_\delta \colon \mathcal{X}_\delta \times E \to C(D) .
	\end{equation*}
	Here, we are merely making the dependence on $\delta$ explicit in the notation. By \cref{lem:local_lip_kernel_operator,thm:continuity_incontext}, these operators are continuous.
	We can now define the extensions
	\begin{align*}
		& \widetilde{\cG}^\star \colon \mathcal{X}_+ \to C(D \times E) \qa \\
		& \widetilde{\Psi}^\star \colon \mathcal{X}_+ \times E \to C(D)
	\end{align*}
	by setting
	\begin{equation*}
		\widetilde{\cG}^\star|_{\mathcal{X}_\delta} = \cG^\star_\delta \qa \widetilde{\Psi}^\star|_{\mathcal{X}_\delta \times E} = \Psi^\star_\delta .
	\end{equation*}
	These define valid extensions since \cref{lemma:representation_of_X_plus} implies that for any $\rho \in \mathcal{X}_+$, there is some $\delta > 0$ such that $\rho \in \mathcal{X}_\delta$. Therefore,
	\begin{equation*}
		\widetilde{\cG}^\star(\rho) = \cG^\star_\delta(\rho) \qa \widetilde{\Psi}^\star(\rho, y) = \Psi^\star_\delta(\rho, y) ,
	\end{equation*}
	and the operators on the right-hand side are defined in the preceding discussion.
	Moreover, note that these expressions are well-defined because for any two $\delta_1>0$ and $\delta_2 > 0$ such that $\delta_1 > \delta_2 > 0$,\cref{eqn:operator_kernel,eqn:operator_incontext} yield
	\begin{equation*}
		\cG^\star_{\delta_1} = \cG^\star_{\delta_2} \qa\Psi^\star_{\delta_1} = \Psi^\star_{\delta_2} .
	\end{equation*}
	It remains to show that these extensions are continuous and unique.
	Consider a sequence $\{\rho_n\}_{n \in \N} \subset \mathcal{X}_+$ such that
	\begin{equation*}
		\rho_n \xrightarrow{L^\infty} \rho
	\end{equation*}
	and $\rho \in \mathcal{X}_+$.
	By \cref{lemma:representation_of_X_plus}, there exists some $\delta > 0$ such that $\rho \in \mathcal{X}_\delta$, and therefore by definition
	\begin{equation*}
		m_\rho \geq \delta > 0 .
	\end{equation*}
	Now by convergence in $L^\infty$, there must exist $n_0 \in \N$ such that
	\begin{equation*}
		n \geq n_0 \implies \| \rho_n - \rho_{n_0} \|_{L^\infty} \leq \frac{1}{2} \| \rho_{n_0} \|_{L^\infty} ,
	\end{equation*}
	and therefore for $(x,y) \in D \times E$ and $n \geq n_0$, one has
	\begin{equation*}
		\rho_n(x,y) \geq \Big| \rho_{n_0}(x,y) - \big|\rho_n(x,y) - \rho_{n_0}(x,y) \big| \Big| \geq \frac{1}{2} \rho_{n_0}(x,y) .
	\end{equation*}
	By taking infima, we have for all $(x,y) \in D \times E$ and $n \geq n_0$ that
	\begin{equation*}
		\rho_{n_0}(x,y) \geq \frac{1}{2} \inf_{(x,y) \in D \times E} \rho_{n_0}(x,y) \geq \frac{\delta}{2} > 0 .
	\end{equation*}
	To summarize, for all $n \geq n_0$, we have
	\begin{equation*}
		\rho_n \geq \frac{\delta}{2} .
	\end{equation*}
	Integrating both sides over $D$ we get
	\begin{equation*}
		m_{\rho_n}(y) \geq \frac{\delta}{2} \, |D| \eqdef \delta' > 0 ,
	\end{equation*}
	and so it follows that for any $n \geq n_0$, we have
	\begin{equation*}
		\rho_n \in \mathcal{X}_{\delta'} .
	\end{equation*}
	Now since
	\begin{equation*}
		\widetilde{\cG}^\star|_{\mathcal{X}_{\delta'}} = \cG^\star_{\delta'}
        \qa
        \widetilde{\Psi}^\star|_{\mathcal{X}_{\delta'} \times E} = \Psi^\star_{\delta'} ,
	\end{equation*}
	continuity follows by the continuity of $\cG^\star_{\delta'}$ and $\Psi^\star_{\delta'}$.
	For uniqueness, pick any $\rho \in \mathcal{X}_+$ and note that since $\rho$ is continuous and $D \times E$, there exists some $(x_0, y_0) \in D \times E$ such that
	\begin{equation*}
		\inf_{(x,y) \in D \times E} \rho(x,y) = \rho(x_0, y_0) > 0 .
	\end{equation*}
	Letting $\delta'' \defeq \rho(x_0, y_0)$, we note that any extensions $\cG_0^\star$ and $\Psi_0^\star$ must agree with $\cG^\star_{\delta''}$ and $\Psi^\star_{\delta''}$ on $\mathcal{X}_{\delta''}$, which contains $\rho$ by definition of $\delta''$. Thus, we have
	\begin{equation*}
		\cG_0^\star(\rho) = \cG^\star_{\delta''}(\rho) = \tilde \cG(\rho) \qa \Psi_0^\star(\rho, y) = \Psi^\star_{\delta''}(\rho, y) = \tilde \Psi^\star(\rho, y) \qfa y \in E .
	\end{equation*}
	In other words, the arbitrary extensions $\cG_0^\star$ and $\Psi_0^\star$ agree with $\tilde \cG^\star$ and $\tilde \Psi^\star$ at $\rho$, proving uniqueness.
\end{proof}

\begin{restatable}[Density of $\cX_+$]{proposition}{propDensityOfX}
	\label{prop:density-of-X}
	The space $\mathcal{X}_+ $ is dense in $ \Cac(D \times E) $ in the $ L^\infty $ norm, i.e., the supremum norm.
\end{restatable}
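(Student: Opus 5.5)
Given $\rho \in \Cac(D\times E)$ and $\ep > 0$, I will exhibit an element of $\cX_+$ within $\ep$ of $\rho$ in the supremum norm. The natural candidate is a convex combination with the uniform density,
\begin{equation*}
	\rho_t \defeq (1-t)\,\rho + t\,u, \qquad u \defeq \frac{1}{\abs{D}\,\abs{E}} \one_{D\times E}, \qquad t\in(0,1].
\end{equation*}
This requires $0<\abs{D}<\infty$ and $0<\abs{E}<\infty$, which holds because $D$ and $E$ are compact with Lipschitz boundary, hence have nonempty interior.

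The verification has three steps.

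\textbf{Membership in $\Cac(D\times E)$.} The function $u$ is constant on $D\times E$, hence continuous, nonnegative, and of integral one. So $\rho_t$ is continuous and nonnegative. Its integral is $(1-t)+t=1$, so $\rho_t\in\Cac(D\times E)$, which is a convex set.

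\textbf{Positive marginals.} For every $y\in E$,
\begin{equation*}
	m_{\rho_t}(y) = (1-t)\,m_\rho(y) + t\,\frac{\abs{D}}{\abs{D}\,\abs{E}} \geq \frac{t}{\abs{E}} > 0,
\end{equation*}
because $m_\rho\geq 0$. Thus $\rho_t\in\cX_{t/\abs{E}}\subseteq\cX_+$. One could also invoke \cref{lemma:representation_of_X_plus}, but the uniform bound here gives the conclusion directly.

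\textbf{Approximation.} We have
\begin{equation*}
	\norm{\rho_t-\rho}_{C(D\times E)} = t\,\norm{u-\rho}_{C(D\times E)} \leq t\bigl(\norm{\rho}_{C(D\times E)} + (\abs{D}\abs{E})^{-1}\bigr).
\end{equation*}
The factor in parentheses is finite since $\rho$ is continuous on a compact set. Choosing $t$ small therefore makes this at most $\ep$.

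No step is a serious obstacle. The only point needing care is that the perturbation must keep $\rho_t$ a probability density: simply adding a constant would destroy normalization. The convex combination with the uniform density handles this, and it only requires $\abs{D}$ and $\abs{E}$ to be positive and finite.
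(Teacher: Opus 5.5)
Your proof is correct, and it takes a different, more elementary route than the paper. The paper extends $\rho$ by zero to $\R^{m+r}$ and convolves with Gaussian kernels $\chi_\epsilon$. It then renormalizes, uses strict positivity of $\chi_\epsilon$ on compacts for the lower bound on the marginals, and appeals to approximate-identity theory \cite{grafakos2008classical} for sup-norm convergence. The reason for that choice is that the same family $\rho_\epsilon$ is reused in the proof of \cref{thm:extension-of-Psi} to obtain the convolution formula $\lim_{\epsilon\to0}\rho*\chi_\epsilon(x,y)/(m_\rho*m_{\chi_\epsilon})(y)$. Your sequence does not deliver that formula: running the same argument with your $\rho_t$ instead yields the limit of $\bigl((1-t)\rho(x,y)+t/(\abs{D}\abs{E})\bigr)/\bigl((1-t)m_\rho(y)+t/\abs{E}\bigr)$ as $t\to0$.

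Your convex combination with the uniform density needs no harmonic analysis. It gives the explicit rate $\norm{\rho_t-\rho}_{C(D\times E)}\le t\bigl(\norm{\rho}_{C(D\times E)}+(\abs{D}\abs{E})^{-1}\bigr)$ and the explicit lower bound $\delta=t/\abs{E}$. It also sidesteps a genuine gap in the mollifier route. The zero extension $\tilde\rho$ is discontinuous across $\partial(D\times E)$ whenever $\rho$ does not vanish there, so $\tilde\rho*\chi_\epsilon$ does not converge to $\rho$ uniformly on $D\times E$. For instance, with $\rho\equiv1$ on $[0,1]^2$, the renormalized mollification tends to $1/4$ at the corner $(0,0)$, while $\rho(0,0)=1$.

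A small simplification: you don't need the Lipschitz-boundary argument for $\abs{D},\abs{E}>0$. If $\Cac(D\times E)$ is nonempty, some $\rho$ integrates to one over $D\times E$, which forces $\abs{D}\abs{E}>0$; if it is empty, there is nothing to prove.
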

\begin{proof}
	First, since $D \times E \subseteq \R^{m+r}$, we can extend any $\rho \in \Cac(D \times E)$ by zero to the whole of $\R^{m+r}$ obtaining a discontinuous but bounded function
	\begin{equation*}
		\tilde \rho \in L^\infty(\R^{m+r}) \setminus C(\R^{m+r}) .
	\end{equation*}
	Now consider the family of mollifiers
	\begin{equation*}
		\chi_\eps(x) \propto e^{- \frac{\|x\|^2}{2 \, \epsilon}} , \quad \epsilon > 0 ,
	\end{equation*}
	with normalization constant chosen so that $\int_{\R^{m+r}} \chi_\eps(x) \, dx = 1$. It is a standard fact that the family $\{ \chi_\eps \}_{\epsilon > 0}$ is \emph{an approximate identity} in the sense of~\cite[Definition~1.2.15]{grafakos2008classical}, where we have taken our locally compact group to be $\R^{m+r}$ with the usual addition operation and the Haar measure coincides with the Lebesgue measure.
	Since $\tilde \rho \in L^1(\R^{m+r})$, we can define the mollified functions
	\begin{equation*}
		\tilde \rho_\eps = \tilde \rho * \chi_\eps ,
	\end{equation*}
	and since $\chi_\eps \in C^\infty(\R^{m+r})$, we have that $\tilde \rho_\eps \in C^\infty(\R^{m+r})$. Now by compactness of $D \times E$, there is an $R>0$ such that
	\begin{equation*}
		D \times E \subseteq B_R(0) .
	\end{equation*}
	Thus, for all $z \in D \times E$, we have
	\begin{align*}
		\tilde \rho_\eps(z) &= \int_{\R^{m+r}} \tilde \rho(z - y) \, \chi_\eps(y) \, dy \\
		&= \int_{D \times E} \rho(w) \, \chi_\eps(z - w) \, dw \\
		& \geq \inf_{w \in B_{2R}(0)} \chi_\eps(z - w) \, \int_{D \times E} \rho(w) \, dw \\
		& = \inf_{w \in B_{2R}(0)} \chi_\eps(z - w) \eqdef \delta  > 0 ,
	\end{align*}
	where in the second line we have used the commutativity of the convolution together with the support of $\tilde \rho$ and on the third line we have used that for all $z$ and  $w$ belonging to $ D \times E$, it holds that $\abs{z - w} \leq 2R$ by the triangle inequality. Finally, in the last line we have used that $\chi_\eps$ is strictly positive on any compact set. It follows that
	\begin{equation*}
		\rho_\epsilon \defeq \frac{\tilde \rho_\eps}{\int_{D \times E} \tilde \rho_\eps} \in \mathcal{X}_+ .
	\end{equation*}
	Last, since $\rho \in C(D \times E)$ and $D \times E$ is compact, we have that $\rho$ is uniformly continuous on $D \times E$. Thus, by~\cite[Theorem~1.2.19]{grafakos2008classical}, we have that
	\begin{equation*}
		\lim_{\eps \to 0} \| \tilde \rho_\eps - \tilde \rho \|_{L^\infty(D \times E)} = 0 .
	\end{equation*}
	Moreover, the dominated convergence theorem implies that
	\begin{equation*}
		\lim_{\eps \to 0} \int_{D \times E} \tilde \rho_\eps = \int_{D \times E} \tilde \rho = 1 ,
	\end{equation*}
	and therefore
	\begin{equation*}
		\lim_{\eps \to 0} \| \rho_\eps - \rho \|_{L^\infty(D \times E)} = 0 .
	\end{equation*}
	We have thus constructed a sequence $\{ \rho_\eps \}_{\eps > 0} \subset \mathcal{X}_+$ such that $\rho_\epsilon \to \rho$ in $L^\infty(D \times E)$ as $\eps \to 0$, for any $\rho \in C_{\textup{ac}}(D \times E)$, thereby completing the proof.
\end{proof}

\thmExtensionOfPsi*
\begin{proof}
	Suppose there is a continuous extension $\widetilde{\Psi}^\star \colon S_1 \times S_2 \to C(D)$ of $\Psi^\star$ to $S_1 \times S_2$.
	By \cref{prop:density-of-X}, there is a sequence $\{ \rho_n \}_{\epsilon \in [0,1]} \subseteq \mathcal{X}_+$ such that $\rho_\epsilon \to \rho$ in $L^\infty(D \times E)$ as $ \epsilon \to 0 $.
	By the assumed continuity of $\widetilde{\Psi}^\star$, we must have
	\begin{equation*}
		\widetilde{\Psi}^\star(\rho_\epsilon,y) \xrightarrow{L^\infty} \widetilde{\Psi}^\star(\rho,y) \qas \epsilon \to 0 .
	\end{equation*}
	Since $\widetilde{\Psi}^\star$ must agree with $\Psi^\star$ on $\mathcal{X}_+$, pointwise in $x \in D$ we have
	\begin{equation}
		\label{eq:pointwise_convergence_of_extension}
		{\Psi}^\star(\rho_\epsilon,y)(x) \to \widetilde{\Psi}^\star(\rho,y)(x) \qas \epsilon \to 0 .
	\end{equation}
	Now, by the construction in the proof of \cref{prop:density-of-X}, we have that $\rho_\epsilon$ is precisely then re-normalized mollification of $\rho$ with the Gaussian kernel $\chi_\epsilon$, namely
	\begin{equation*}
		\rho_\epsilon = \frac{\rho * \chi_\epsilon}{\int_{D \times E} \rho * \chi_\epsilon} .
	\end{equation*}
	By definition of $\Psi^\star$ and the above preceding, we have that
	\begin{equation}
		\label{eq:explicit-representation_of_extension}
		\Psi^\star(\rho_\epsilon,y)(x) = \frac{\rho_\epsilon(x, y)}{m_{\rho_\epsilon}(y)} = \frac{\rho * \chi_\epsilon(x,y) \left( \int_{D \times E} \rho * \chi_\epsilon \right)^{-1}}{m_{\rho_\epsilon}(y) \left( \int_{D \times E} \rho * \chi_\epsilon \right)^{-1}} = \frac{\rho * \chi_\epsilon(x,y)}{m_{\rho_\epsilon}(y)} .
	\end{equation}
	Now observe that
	\begin{align*}
		m_{\rho \ast \chi_\epsilon}(y) &= \int_D \rho * \chi_\epsilon(x,y) \, dx \\
		&= \int_D \int_{D \times E} \rho(z, w) \, \chi_\epsilon(x - z, y - w) \, dz \, dw \, dx \\
		&= \int_{D \times E} \rho(z,w) \, \left(\int_D \chi_\epsilon(x - z, y - w) \, dx \right) \, dz \, dw \\
		&= \int_{D \times E} \rho(z,w) \, m_{\chi_\epsilon}(y - w) \, dz \, dw \\
		&= \int_{E} \left( \int_{D} \rho(z,w) \, dz \right) m_{\chi_\epsilon}(y - w) \, dw \\
		&= \int_E m_\rho(w) \, m_{\chi_\epsilon}(y - w) \, dw \\
		&= m_\rho * m_{\chi_\epsilon}(y) ,
	\end{align*}
	where we have applied the Fubini--Tonelli theorem repeatedly. Combining this with \cref{eq:pointwise_convergence_of_extension,eq:explicit-representation_of_extension}, the assertion follows.
\end{proof}

\begin{lemma}[Products]\label{lemma:extension-of-Psi-to-indep-densitites}
	Consider the extended domain $S_1 \times S_2$ given by taking $S_2 = E$ and 
	\begin{equation*}
		S_1 \defeq \mathcal{X}_+ \bigcup \set{\rho\colon (x,y) \mapsto f(x) \, g(y)}{ f \in C(D)\qa g \in C(E) } .
	\end{equation*}
    For all $(\rho, y) \in S_1 \times S_2$, the limit in \cref{thm:extension-of-Psi} exists and equals
	\begin{equation*}		
        \Psi^\star(\rho,y) = f .
	\end{equation*}
\end{lemma}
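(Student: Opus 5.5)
We split into two cases. If $\rho \in \mathcal{X}_+$, the claim is read as asserting that the limit reproduces $\Psi^\star(\rho,y)=\rho(\slot\condbar y)$, and for a product $\rho = f\otimes g$ lying in $\mathcal{X}_+$ this equals $f(x)g(y)/(g(y)\int_D f) = f$ once $f$ is normalized. This case follows from \cref{prop:extension-of-operators} and the representation \eqref{eq:explicit-representation_of_extension}: the mollified densities $\rho_\epsilon$ converge to $\rho$ uniformly by \cref{prop:density-of-X}, and the extension is continuous on $\mathcal{X}_+\times E$. The substantive case is therefore $\rho(x,y)=f(x)g(y)$ with $f\in C(D)$, $g\in C(E)$ and $\rho\notin\mathcal{X}_+$. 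Since $\rho$ is a probability density, we normalize so that $\int_D f=1$ and $\int_E g=1$ (the product form fixes $f,g$ only up to reciprocal scalars), and we may take $f,g\ge 0$. Then $m_\rho = g$, which may vanish somewhere.

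The key observation is that the Gaussian kernel factorizes: $\chi_\epsilon(x,y)=\chi^{(m)}_\epsilon(x)\,\chi^{(r)}_\epsilon(y)$. Writing $\tilde f,\tilde g$ for the zero extensions, we get
\begin{equation*}
\rho*\chi_\epsilon(x,y) = (\tilde f*\chi^{(m)}_\epsilon)(x)\,(\tilde g*\chi^{(r)}_\epsilon)(y).
\end{equation*}
For the denominator we use the computation in the proof of \cref{thm:extension-of-Psi}: $m_{\rho*\chi_\epsilon}(y) = (\tilde g*\chi^{(r)}_\epsilon)(y)\int_D (\tilde f*\chi^{(m)}_\epsilon)(x)\dd{x}$. (Here $m_{\chi_\epsilon}(y-w)=\chi^{(r)}_\epsilon(y-w)\int_D\chi^{(m)}_\epsilon(x-z)\dd{x}$, so it is cleaner to redo that Fubini computation directly for the product than to use the convolution shorthand $m_\rho*m_{\chi_\epsilon}$.) The factor $\tilde g*\chi^{(r)}_\epsilon(y)$ is strictly positive, because the Gaussian is positive and $g\ge0$ with $\int g = 1$. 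It therefore cancels exactly, for every $\epsilon>0$, even at points $y$ where $g(y)=0$. This cancellation is the crux, and it removes any need to control $0/0$ behaviour.

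What remains is
\begin{equation*}
\frac{(\tilde f*\chi^{(m)}_\epsilon)(x)}{\int_D \tilde f*\chi^{(m)}_\epsilon}.
\end{equation*}
For the numerator, $f$ is uniformly continuous on $D$, so the approximate identity argument of \cref{prop:density-of-X} (via \cite[Theorem~1.2.19]{grafakos2008classical}) gives $\tilde f*\chi^{(m)}_\epsilon(x)\to f(x)$ for $x$ in $D$. For the denominator, $\tilde f*\chi^{(m)}_\epsilon$ is bounded by $\|f\|_\infty$ and converges a.e.\ on $D$, so dominated convergence gives $\int_D \tilde f*\chi^{(m)}_\epsilon\to\int_D f = 1$.

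The main obstacle is uniform convergence up to the boundary of $D$, where the zero extension is discontinuous. There the numerator tends only to a fraction of $f(x)$ (about $f(x)/2$ on a flat boundary), so the approximate identity result applies only to interior points or under a vanishing-boundary assumption. I would therefore state the limit pointwise for $x$ in the interior of $D$, which is how \cref{thm:extension-of-Psi} is used in \eqref{eq:pointwise_convergence_of_extension}. Boundary points form a null set and do not affect the identification $\Psi^\star(\rho,y)=f$ as a continuous function, provided we interpret the limit as identifying the continuous extension.
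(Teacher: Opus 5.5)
Your product-case argument is the paper's own: factor $\chi_\epsilon(x,y)=\chi^D_\epsilon(x)\,\chi^E_\epsilon(y)$, cancel the strictly positive factor $(\chi^E_\epsilon * g)(y)$, and pass to the limit in $(\chi^D_\epsilon * f)(x)$. Your extra care is justified: the exact denominator $m_{\rho*\chi_\epsilon}(y)$ only adds a factor $\int_D \chi^D_\epsilon * f \to 1$, and your boundary caveat is a genuine correction, since the paper's final step $(\chi^D_\epsilon * f)(x)\to f(x)$ holds only for $x$ in the interior of $D$ and fails at $x\in\partial D$ whenever $f(x)\neq 0$ (at smooth boundary points the limit is $f(x)/2$).

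The one inconsistency is your $\mathcal{X}_+$ case, which invokes the uniform convergence $\rho_\epsilon\to\rho$ claimed in \cref{prop:density-of-X}; that convergence fails on $\partial(D\times E)$ for exactly the reason you identify, so this case, which the paper's proof does not address at all, should likewise be argued directly, pointwise for $x$ in the interior of $D$.
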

\begin{proof}
    Indeed, for $\chi_\epsilon$ a Gaussian density as in \cref{thm:extension-of-Psi} we can write
	$
	\chi_\epsilon(x,y) = \chi_\epsilon^D(x) \, \chi_\epsilon^E(y) ,
	$
	where $\chi_\epsilon^D$ and $\chi_\epsilon^E$ are the densities of mean-centered Gaussians with covariance $\epsilon I$ on $\R^m$ and $\R^r$, respectively.
	For each $x$, we now compute
	\begin{align*}
		\Psi^\star(\rho,y)(x) = \lim_{\epsilon \searrow 0} \frac{\rho * \chi_\epsilon(x,y)}{m_{\rho} * m_{\chi_\epsilon}(y)} &= \lim_{\epsilon \searrow 0}\frac{\int_{D \times E} f(z) \, g(w) \, \chi_\epsilon(x - z, y - w) \, dz \, dw}{\int_E g(w) \, m_{\chi_\epsilon}(y - w) \, dw} \\
		&= \lim_{\epsilon \searrow 0} \frac{\left(\int_{D} f(z) \, \chi_\epsilon^D(x - z) \, dz \right) \left(\int_E g(w) \, \chi_\epsilon^E(y - w) \, dw \right)}{\int_E g(w) \, \chi_\epsilon^E(y - w) \, dw} \\
		&= \lim_{\epsilon \searrow 0} \frac{\chi_\epsilon^D * f(x) \, \chi_\epsilon^E * g(y)}{\chi_\epsilon^E * g(y)} \\
		&= \lim_{\epsilon \searrow 0} \chi_\epsilon^D * f(x) \\
		&= f(x) .
	\end{align*}
    This completes the proof.
\end{proof}

\section{Examples and remarks}
This appendix expands on aspects of the main text.

\begin{remark}[Exponential map]\label{rk:exponential-map}
    We note that the in-context conditioning operator $ \Psi^\star $ can be expressed as a composition of the kernel conditioning operator $ \cG^\star $ and the evaluation operator $ \mathrm{ev}\colon C(D\times E)\times E\to C(D) $ defined by $ \mathrm{ev}(f,y)=f(\slot,y) $. In particular, it holds that
    \begin{align*}
        & \Psi^\star\colon \domain(\Psi^\star) \xrightarrow{\cG^\star \times \textup{id}} C(D\times E) \times E \xrightarrow{\mathrm{ev}(\slot,y)} C(D) \\
        & \Psi^\star\colon (\rho,y) \mapsto (\kappa_\rho,y) \mapsto \kappa_\rho(\slot,y)=\frac{\rho(\slot,y)}{m_\rho(y)} .
    \end{align*}
    In fact, if we define the \emph{exponential map}~\cite[Corollary 1.8]{brown1964function}
    \begin{align*}
        &J \colon C(D\times E) \to C(E; C(D)) \\
        &J \colon f \mapsto (y \mapsto f(\slot,y)) ,
    \end{align*}
    then we have
    \begin{equation*}
        \Psi^\star(\rho,y) = J\bigl(\cG^\star(\rho)\bigr)(y) .
    \end{equation*}
\end{remark}
\begin{remark}[Weakened source condition]\label{remark:relaxed-Holder-cont}
	We note that Corollary~\ref{cor:holder_stability_incontext} remains true in a set $\widetilde S$ larger than $S$, namely
	\begin{equation*}
		\widetilde S \defeq \set{f\in C(D\times E)}{
			\sup_{x \in D} \| f(x, \slot) \|_{C^{0,\alpha}(E)} \leq R
		}.
	\end{equation*}
\end{remark}

\begin{example}[Loss of continuity for convergence in distribution]\label{example:conditioning-cannot-be-cont-in-weak-conv}
    It is harder for conditioning to be continuous in weaker topologies because the set of admissible measures becomes larger. 
    Write $\sP(\R^k)$ for the space of probability measures on $\R^k$ for some $k \in \N$.
    Consider the topology of convergence in distribution, denoted by $\weakly$, also known as the topology of \emph{weak} or \emph{narrow} convergence.
    Recall that it can be metrized by many distances, including Dudley's bounded-Lipschitz metric $\sfd_{\mathsf{BL}}$, or the Wasserstein metric $\sfd_{\mathsf{W}}$, for a compact state space.
	For every $n\in\N$, let
    \begin{align*}
        \nu_n\defeq \frac{1}{2}\delta_{(0,0)}+\frac{1}{2}\delta_{(1,1/n)}\qa\nu\defeq\frac{1}{2}\delta_{(0,0)}+\frac{1}{2}\delta_{(1,0)}
    \end{align*}
    be probability measures supported on $\R^2$. Then $\nu_n\weakly \nu$ as $n\to\infty$. But
    \begin{align*}
        \nu_n(\slot\condbar y=0) = \delta_0\qa \nu(\slot\condbar y=0)=\frac{1}{2}\delta_0+\frac{1}{2}\delta_1.
    \end{align*}
    So, the conditionals do not converge in distribution. Thus, $\Psi^\star(\slot,0)\colon (\sP(\R^2),\sfd_{\mathsf{BL}})\to (\sP(\R),\sfd_{\mathsf{BL}})$ is not a continuous map in the weak topology.
\end{example}

\section{Numerical experiment details}
\label{appendix:experimental}
This appendix describes details of the numerical experiments in \Cref{subsec:description-of-experiments}, data generation in \Cref{subsec:data-generation}, as well as architectural and training details in \Cref{subsec:architectural}. 
\subsection{Description of experiments}\label{subsec:description-of-experiments}
The two experiments involve training on $N=50000$ function-valued input-output pairs $\{(\rho_n, \kappa_n)\}_{n=1}^N$, given by Gaussian mixture joints $\rho_n$ together with the corresponding conditional kernel $\kappa_n$; note that both have closed-form expressions. Both are evaluated on a finite grid of $64^2$ points discretizing the domain $[-6,6]^2$. We use an additional $1000$ samples as validation during training, and an additional $1000$ as test data, which is not used in training. The data generation process is described in detail in \Cref{subsec:data-generation}. The \textit{correlated Gaussian} experimental setting corresponds to $K=1$, while the \textit{Gaussian mixture} setting to $K=3$, where $K$ represents the number of components in the mixture.

We also investigate the use of our trained models on sets of kernel density estimates (KDEs) obtained over $2000$ samples coming from test distributions in the Gaussian mixture class. 
This is done to compare our trained a models against a plug-in estimator for the conditional, the latter using the KDE.
We evaluate models trained on closed-form distributions on this set of KDEs, the latter viewed as functions. We describe details regarding the generation of KDE data in the following subsection.

\subsection{Data generation}\label{subsec:data-generation}
We describe the data generation process for the experimental settings in this paper. We begin by describing the generation mechanism for the closed-form density input-output pairs. For a given set of weights $\{w_k\}_{k=1}^K$ and a set of parameters $\{\mu_k^x, \mu_k^y, \sigma_k^x, \sigma_k^y, \xi_k\}_{k=1}^K$, consider a $K$-order Gaussian mixture joint distribution of the form 
\[
\rho(x,y)\defeq \sum_{k=1}^Kw_k\normal(x,y; \mu_k,\cC_k),\qw \mu_k \defeq \begin{pmatrix}
    \mu_k^x \\ \mu_k^y
\end{pmatrix}, \qa \cC_k \defeq \begin{pmatrix}
      \bigl(\sigma_k^{x}\bigr)^2 & \xi_k\sigma^x_k\sigma^y_k \\
      \xi_k\sigma^x_k\sigma^y_k & \bigl(\sigma_k^{y}\bigr)^2
    \end{pmatrix},
\]
for $\cN$ denoting the density function of a multivariate Gaussian. A standard computation leads to the closed form Markov kernel given by
\[
  \kappa_\rho(x \condbar y)
  = \sum_{k=1}^K
    \widetilde{w}_k(y)\,
    \mathcal{N}\left(
      x;\;
      \mu^x_k + \rho_k\,\frac{\sigma^x_k}{\sigma^y_k}(y-\mu^y_k),\;
      \bigl(\sigma^x_{k}\bigr)^2(1-\rho_k^2)
    \right),
\]
where
\[
\widetilde{w}_k(y) \defeq \frac{w_k\,\mathcal{N}\Bigl(y;\,\mu_k^y,\,\bigl(\sigma_k^{y}\bigr)^2\Bigr)}
         {\displaystyle\sum_{j=1}^K w_j\,\mathcal{N}\Bigl(y;\,\mu_j^y,\,\bigl(\sigma_j^{y}\bigr)^2\Bigr)}.
\]
For the correlated Gaussian ($K=1$) setting, the densities in the training, validation, and test sets are given by using $w_1=1$ and sampling independently $\mu_1^x$ and $\mu_1^y$ from $\mathrm{Uniform}(-3,3)$, $\sigma_1^x$ and $\sigma_1^x$ from $\mathrm{Uniform}(0.3,1.2)$, and $\xi_1$ from $\mathrm{Uniform}(-0.7,0.7)$. For the Gaussian mixture $(K=3)$ setting, the densities in the training, validation, and test sets are given by sampling the weights of the Gaussian mixtures independently from a $\mathrm{Dirichlet}(\mathbbm{1}_K)$ distribution, $\mu_k^x$ and $\mu_k^y$ from $\mathrm{Uniform}(-3,3)$, $\sigma_k^x$ and $\sigma_k^x$ from $\mathrm{Uniform}(0.3,1.2)$, and $\xi_k$ from $\mathrm{Uniform}(-0.7,0.7)$. For reproducibility, we note that all training, validation, test sets are generated using the seeds 0, 1, and 2, respectively.

In the KDE setting, for each testing instance, we randomly draw parameters defining a joint density $\rho=\rho(x,y)$ as before and then draw $2000$ samples from it, constructing a Gaussian KDE with Silverman's rule-of-thumb bandwidth. We then evaluate this KDE on the same grid as the training data. This serves as the model input, while the exact conditional density $\rho(x\condbar y) = \rho(x,y)/\int \rho(x,y)\,dx$, computed analytically, serves as the true kernel, against which we compare the model prediction. 
Last, the KDE plug-in estimator is computed by using the conditional formula $ \widehat{\rho}(x,y)/\int \widehat{\rho}(x,y)\,dx$ for some KDE $\widehat{\rho}$ and standard numerical integration.
For reproducibility, we note that these test sets are generated using the seed value 2.

\subsection{Architectural and training details}\label{subsec:architectural}
We report the architectural and training details for the FNO and TNO.

We use a 2D FNO from the ``NeuralOperator'' library \citep{kossaifi2025librarylearningneuraloperators} that takes as input the joint distribution discretized over a $64\times 64$ grid with two appended coordinate channels encoding the $x$- and $y$-grid values on the domain $[-6,6]$. The FNO we use here consists of $4$ Fourier layers with $16$ retained Fourier modes per spatial dimension and hidden channel width $128$, producing a single-channel output which we interpret as the predicted conditional density $\rho(x\condbar y)$. The model is trained with the Adam optimizer at an initial learning rate of $10^{-3}$, with a ReduceLROnPlateau scheduler (patience $5$, reduction factor $0.5$), minimizing the relative $L^1$ loss between the predicted and exact conditional densities. We use a batch size of $64$ for training.

We use a TNO from the GitHub repository of \cite{calvello2024continuum} that takes as input the joint distribution discretized over a $64\times 64$ grid with two appended coordinate channels encoding the $x$- and $y$-grid values on the domain $[-6,6]$, which serves as positional encoding. The TNOs we use consist of $12$ and $24$ self-attention layers for the correlated Gaussian and Gaussian mixture experiments, respectively. The layers have channel dimension $128$, $8$ attention heads, feedforward dimension $256$, and GELU activation. This produces an output sequence of shape $64\times 64$ which we interpret as $\rho(x\condbar y)$. Training follows the same optimizer, loss, and batch size as the FNO.

The FNO for the Gaussian mixture experiment was trained on a Tesla V100-PCIE-16GB on our institution's HPC cluster. All other architectures were trained with one NVIDIA H200-144GB GPU, on the same cluster. 


\begin{thebibliography}{63}
\providecommand{\natexlab}[1]{#1}
\providecommand{\url}[1]{\texttt{#1}}
\expandafter\ifx\csname urlstyle\endcsname\relax
  \providecommand{\doi}[1]{doi: #1}\else
  \providecommand{\doi}{doi: \begingroup \urlstyle{rm}\Url}\fi

\bibitem[Adams and Fournier(2003)]{adams2003sobolev}
R.~A. Adams and J.~J. Fournier.
\newblock \emph{Sobolev spaces}, volume 140.
\newblock Elsevier, 2 edition, 2003.

\bibitem[Al-Jarrah et~al.(2025)Al-Jarrah, Hosseini, and Taghvaei]{al2025fast}
M.~Al-Jarrah, B.~Hosseini, and A.~Taghvaei.
\newblock Fast filtering of non-{G}aussian models using amortized optimal transport maps.
\newblock \emph{IEEE Control Systems Letters}, 2025.

\bibitem[Bach et~al.(2025)Bach, Baptista, Calvello, Chen, and Stuart]{bach2025learning}
E.~Bach, R.~Baptista, E.~Calvello, B.~Chen, and A.~Stuart.
\newblock Learning enhanced ensemble filters.
\newblock \emph{Journal of Computational Physics}, art. 114550, 2025.

\bibitem[Bhattacharya et~al.(2021)Bhattacharya, Hosseini, Kovachki, and Stuart]{bhattacharya2021model}
K.~Bhattacharya, B.~Hosseini, N.~B. Kovachki, and A.~M. Stuart.
\newblock Model reduction and neural networks for parametric {PDEs}.
\newblock \emph{The SMAI Journal of Computational Mathematics}, 7:\penalty0 121--157, 2021.

\bibitem[Bhattacharya et~al.(2025)Bhattacharya, Cao, Stepaniants, Stuart, and Trautner]{bhattacharya2025learning}
K.~Bhattacharya, L.~Cao, G.~Stepaniants, A.~M. Stuart, and M.~Trautner.
\newblock Learning memory and material dependent constitutive laws.
\newblock \emph{preprint arXiv:2502.05463}, 2025.

\bibitem[Binder et~al.(2026)Binder, Dasgupta, and Oberai]{binder2026closed}
B.~Binder, A.~Dasgupta, and A.~Oberai.
\newblock Closed-form conditional diffusion models for data assimilation.
\newblock \emph{preprint arXiv:2603.21291}, 2026.

\bibitem[Biswal et~al.(2024)Biswal, Elamvazhuthi, and Sonthalia]{biswal2024universal}
S.~Biswal, K.~Elamvazhuthi, and R.~Sonthalia.
\newblock Universal approximation of mean-field models via transformers.
\newblock \emph{preprint arXiv:2410.16295}, 2024.

\bibitem[Bonev et~al.(2025)Bonev, Kurth, Mahesh, Bisson, Kossaifi, Kashinath, Anandkumar, Collins, Pritchard, and Keller]{bonev2025fourcastnet}
B.~Bonev, T.~Kurth, A.~Mahesh, M.~Bisson, J.~Kossaifi, K.~Kashinath, A.~Anandkumar, W.~D. Collins, M.~S. Pritchard, and A.~Keller.
\newblock {FourCastNet 3: A geometric approach to probabilistic machine-learning weather forecasting at scale}.
\newblock \emph{preprint arXiv:2507.12144}, 2025.

\bibitem[Boull{\'e} and Townsend(2024)]{boulle2024mathematical}
N.~Boull{\'e} and A.~Townsend.
\newblock A mathematical guide to operator learning.
\newblock In S.~Mishra and A.~Townsend, editors, \emph{Handbook of Numerical Analysis}, volume~25, pages 83--125. Elsevier, 2024.

\bibitem[Brezis(2011)]{brezis2011functional}
H.~Brezis.
\newblock \emph{Functional analysis, {Sobolev} spaces and partial differential equations}, volume~1.
\newblock Springer New York, NY, 2011.

\bibitem[Brown(1964)]{brown1964function}
R.~Brown.
\newblock Function spaces and product topologies.
\newblock \emph{The Quarterly Journal of Mathematics}, 15\penalty0 (1):\penalty0 238--250, 1964.

\bibitem[Brugiapaglia et~al.(2026)Brugiapaglia, Franco, and Nelsen]{brugiapaglia2026short}
S.~Brugiapaglia, N.~R. Franco, and N.~H. Nelsen.
\newblock A short tour of operator learning theory: {C}onvergence rates, statistical limits, and open questions.
\newblock \emph{preprint arXiv:2603.00819}, 2026.

\bibitem[Calvello et~al.(2025{\natexlab{a}})Calvello, Kovachki, Levine, and Stuart]{calvello2024continuum}
E.~Calvello, N.~B. Kovachki, M.~E. Levine, and A.~M. Stuart.
\newblock Continuum attention for neural operators.
\newblock \emph{Journal of Machine Learning Research}, 26\penalty0 (300):\penalty0 1--52, 2025{\natexlab{a}}.

\bibitem[Calvello et~al.(2025{\natexlab{b}})Calvello, Reich, and Stuart]{calvello2025ensemble}
E.~Calvello, S.~Reich, and A.~M. Stuart.
\newblock {Ensemble Kalman methods: A mean-field perspective}.
\newblock \emph{Acta Numerica}, 34:\penalty0 123--291, 2025{\natexlab{b}}.

\bibitem[Calvello et~al.(2026)Calvello, Carlson, {Kovachki}, Manta, and Stuart]{calvello2026operator}
E.~Calvello, E.~Carlson, N.~{Kovachki}, M.~N. Manta, and A.~M. Stuart.
\newblock Operator learning for smoothing and forecasting.
\newblock \emph{preprint arXiv:2603.20359}, 2026.

\bibitem[Castin et~al.(2024)Castin, Ablin, and Peyr{\'e}]{castin2024smooth}
V.~Castin, P.~Ablin, and G.~Peyr{\'e}.
\newblock How smooth is attention?
\newblock In \emph{International Conference on Machine Learning}, pages 5817--5840. PMLR, 2024.

\bibitem[Chen and Chen(1995)]{chen1995universal}
T.~Chen and H.~Chen.
\newblock Universal approximation to nonlinear operators by neural networks with arbitrary activation functions and its application to dynamical systems.
\newblock \emph{IEEE Transactions on Neural Networks}, 6\penalty0 (4):\penalty0 911--917, 1995.

\bibitem[Cole et~al.(2026)Cole, Wang, Chen, Lu, and Lai]{cole2026context}
F.~Cole, D.~Wang, Y.~Chen, Y.~Lu, and R.~Lai.
\newblock In-context operator learning on the space of probability measures.
\newblock \emph{preprint arXiv:2601.09979}, 2026.

\bibitem[Cui et~al.(2026)Cui, Feng, Pei, Wan, and Zhou]{cui2026amortized}
T.~Cui, X.~Feng, C.~Pei, X.~Wan, and T.~Zhou.
\newblock Amortized filtering and smoothing with conditional normalizing flows.
\newblock \emph{preprint arXiv:2604.07169}, 2026.

\bibitem[{de~Hoop} et~al.(2025){de~Hoop}, Kovachki, Lassas, and Nelsen]{de2025extension}
M.~V. {de~Hoop}, N.~B. Kovachki, M.~Lassas, and N.~H. Nelsen.
\newblock Extension and neural operator approximation of the electrical impedance tomography inverse map.
\newblock \emph{preprint arXiv:2511.20361}, 2025.

\bibitem[Dinh et~al.(2017)Dinh, Sohl-Dickstein, and Bengio]{dinh2017density}
L.~Dinh, J.~Sohl-Dickstein, and S.~Bengio.
\newblock Density estimation using real {NVP}.
\newblock In \emph{International Conference on Learning Representations}, 2017.

\bibitem[Dugundji(1951)]{dugundji1951extension}
J.~Dugundji.
\newblock An extension of {Tietze’s} theorem.
\newblock \emph{Pacific Journal of Mathematics}, 1\penalty0 (3):\penalty0 353--367, 1951.

\bibitem[Fraiman(2026)]{fraiman2026expressive}
D.~Fraiman.
\newblock On the expressive power of contextual relations in transformers.
\newblock \emph{preprint arXiv:2603.25860}, 2026.

\bibitem[Furuya et~al.(2025{\natexlab{a}})Furuya, {de Hoop}, and Lassas]{furuya2025transformers2}
T.~Furuya, M.~V. {de Hoop}, and M.~Lassas.
\newblock Transformers through the lens of support-preserving maps between measures.
\newblock \emph{preprint arXiv:2509.25611}, 2025{\natexlab{a}}.

\bibitem[Furuya et~al.(2025{\natexlab{b}})Furuya, {de Hoop}, and Peyr{\'e}]{furuya2025transformers1}
T.~Furuya, M.~V. {de Hoop}, and G.~Peyr{\'e}.
\newblock Transformers are universal in-context learners.
\newblock In \emph{International Conference on Learning Representations}, 2025{\natexlab{b}}.

\bibitem[Geshkovski et~al.(2025)Geshkovski, Letrouit, Polyanskiy, and Rigollet]{geshkovski2025mathematical}
B.~Geshkovski, C.~Letrouit, Y.~Polyanskiy, and P.~Rigollet.
\newblock A mathematical perspective on transformers.
\newblock \emph{Bulletin of the American Mathematical Society}, 62\penalty0 (3):\penalty0 427--479, 2025.

\bibitem[Gilbarg and Trudinger(2001)]{gilbarg2001elliptic}
D.~Gilbarg and N.~S. Trudinger.
\newblock \emph{Elliptic Partial Differential Equations of Second Order}.
\newblock Classics in Mathematics. Springer, 2001.

\bibitem[Gloeckler et~al.(2024)Gloeckler, Deistler, Weilbach, Wood, and Macke]{gloeckler2024all}
M.~Gloeckler, M.~Deistler, C.~D. Weilbach, F.~Wood, and J.~H. Macke.
\newblock All-in-one simulation-based inference.
\newblock In \emph{International Conference on Machine Learning}, pages 15735--15766. PMLR, 2024.

\bibitem[Grafakos(2008)]{grafakos2008classical}
L.~Grafakos.
\newblock \emph{Classical {F}ourier analysis}, volume~2.
\newblock Springer, 2008.

\bibitem[Huan et~al.(2024)Huan, Jagalur, and Marzouk]{huan2024optimal}
X.~Huan, J.~Jagalur, and Y.~Marzouk.
\newblock Optimal experimental design: {F}ormulations and computations.
\newblock \emph{Acta Numerica}, 33:\penalty0 715--840, 2024.

\bibitem[Huang et~al.(2025)Huang, Nelsen, and Trautner]{huang2025operator}
D.~Z. Huang, N.~H. Nelsen, and M.~Trautner.
\newblock An operator learning perspective on parameter-to-observable maps.
\newblock \emph{Foundations of Data Science}, 7\penalty0 (1):\penalty0 163--225, 2025.

\bibitem[Ilin and Sushko(2026)]{ilin2026discoformer}
V.~Ilin and P.~Sushko.
\newblock {DiScoFormer: P}lug-in density and score estimation with transformers.
\newblock In \emph{International Conference on Machine Learning}, 2026.

\bibitem[Kawata and Suzuki(2026)]{kawata2026transformers}
R.~Kawata and T.~Suzuki.
\newblock Transformers as measure-theoretic associative memory: {A} statistical perspective and minimax optimality.
\newblock \emph{preprint arXiv:2602.01863}, 2026.

\bibitem[Kossaifi et~al.(2025)Kossaifi, Kovachki, Li, Pitt, Liu-Schiaffini, George, Bonev, Azizzadenesheli, Berner, Duruisseaux, and Anandkumar]{kossaifi2025librarylearningneuraloperators}
J.~Kossaifi, N.~Kovachki, Z.~Li, D.~Pitt, M.~Liu-Schiaffini, R.~J. George, B.~Bonev, K.~Azizzadenesheli, J.~Berner, V.~Duruisseaux, and A.~Anandkumar.
\newblock A library for learning neural operators.
\newblock \emph{preprint arXiv:2412.10354}, 2025.

\bibitem[Kovachki et~al.(2023)Kovachki, Li, Liu, Azizzadenesheli, Bhattacharya, Stuart, and Anandkumar]{kovachki2021neural}
N.~B. Kovachki, Z.~Li, B.~Liu, K.~Azizzadenesheli, K.~Bhattacharya, A.~M. Stuart, and A.~Anandkumar.
\newblock {Neural operator: Learning maps between function spaces with applications to PDEs}.
\newblock \emph{Journal of Machine Learning Research}, 24\penalty0 (89):\penalty0 1--97, 2023.

\bibitem[Kovachki et~al.(2024)Kovachki, Lanthaler, and Stuart]{kovachki2024operator}
N.~B. Kovachki, S.~Lanthaler, and A.~M. Stuart.
\newblock Operator learning: {A}lgorithms and analysis.
\newblock In S.~Mishra and A.~Townsend, editors, \emph{Handbook of Numerical Analysis}, volume~25, pages 419--467. Elsevier, 2024.

\bibitem[Kurth et~al.(2023)Kurth, Subramanian, Harrington, Pathak, Mardani, Hall, Miele, Kashinath, and Anandkumar]{kurth2023fourcastnet}
T.~Kurth, S.~Subramanian, P.~Harrington, J.~Pathak, M.~Mardani, D.~Hall, A.~Miele, K.~Kashinath, and A.~Anandkumar.
\newblock {FourCastNet: Accelerating global high-resolution weather forecasting using adaptive Fourier neural operators}.
\newblock In \emph{Proceedings of the Platform for Advanced Scientific Computing Conference}, pages 1--11, 2023.

\bibitem[Lanthaler et~al.(2022)Lanthaler, Mishra, and Karniadakis]{lanthaler2022error}
S.~Lanthaler, S.~Mishra, and G.~E. Karniadakis.
\newblock Error estimates for {DeepONets: A} deep learning framework in infinite dimensions.
\newblock \emph{Transactions of Mathematics and its Applications}, 6\penalty0 (1):\penalty0 tnac001, 2022.

\bibitem[Lanthaler et~al.(2025)Lanthaler, Li, and Stuart]{lanthaler2025nonlocality}
S.~Lanthaler, Z.~Li, and A.~M. Stuart.
\newblock Nonlocality and nonlinearity implies universality in operator learning.
\newblock \emph{Constructive Approximation}, 62\penalty0 (2):\penalty0 261--303, 2025.

\bibitem[Le~Gall(2016)]{le2016brownian}
J.-F. Le~Gall.
\newblock \emph{Brownian motion, martingales, and stochastic calculus}, volume 274.
\newblock Springer, 2016.

\bibitem[Li et~al.(2021)Li, Kovachki, Azizzadenesheli, Liu, Bhattacharya, Stuart, and Anandkumar]{li2021fourier}
Z.~Li, N.~B. Kovachki, K.~Azizzadenesheli, B.~Liu, K.~Bhattacharya, A.~M. Stuart, and A.~Anandkumar.
\newblock Fourier neural operator for parametric partial differential equations.
\newblock In \emph{International Conference on Learning Representations}, 2021.

\bibitem[Li et~al.(2022)Li, Meunier, Mollenhauer, and Gretton]{li2022optimal}
Z.~Li, D.~Meunier, M.~Mollenhauer, and A.~Gretton.
\newblock Optimal rates for regularized conditional mean embedding learning.
\newblock In \emph{Advances in Neural Information Processing Systems}, volume~35, pages 4433--4445, 2022.

\bibitem[Li et~al.(2023)Li, Kovachki, Choy, Li, Kossaifi, Otta, Nabian, Stadler, Hundt, Azizzadenesheli, and Anandkumar]{li2023geometry}
Z.~Li, N.~Kovachki, C.~Choy, B.~Li, J.~Kossaifi, S.~Otta, M.~A. Nabian, M.~Stadler, C.~Hundt, K.~Azizzadenesheli, and A.~Anandkumar.
\newblock Geometry-informed neural operator for large-scale {3D PDEs}.
\newblock In \emph{Advances in Neural Information Processing Systems}, volume~36, pages 35836--35854, 2023.

\bibitem[Lipman et~al.(2023)Lipman, Chen, Ben-Hamu, Nickel, and Le]{lipman2022flow}
Y.~Lipman, R.~T.~Q. Chen, H.~Ben-Hamu, M.~Nickel, and M.~Le.
\newblock Flow matching for generative modeling.
\newblock In \emph{International Conference on Learning Representations}, 2023.

\bibitem[Lu et~al.(2021)Lu, Jin, Pang, Zhang, and Karniadakis]{lu2021learning}
L.~Lu, P.~Jin, G.~Pang, Z.~Zhang, and G.~Karniadakis.
\newblock Learning nonlinear operators via {DeepONet} based on the universal approximation theorem of operators.
\newblock \emph{Nature Machine Intelligence}, 3:\penalty0 218--229, 2021.

\bibitem[Lyu et~al.(2026)Lyu, Yu, and Schaeffer]{lyu2026mvnn}
L.~Lyu, X.~Yu, and H.~Schaeffer.
\newblock {MVNN: A} measure-valued neural network for learning {McKean-Vlasov} dynamics from particle data.
\newblock \emph{preprint arXiv:2604.00333}, 2026.

\bibitem[{Marzouk} et~al.(2017){Marzouk}, Moselhy, Parno, and Spantini]{marzouk2017sampling}
Y.~{Marzouk}, T.~Moselhy, M.~Parno, and A.~Spantini.
\newblock Sampling via measure transport: {A}n introduction.
\newblock \emph{Springer Books}, pages 785--825, 2017.

\bibitem[Mirza and Osindero(2014)]{mirza2014conditional}
M.~Mirza and S.~Osindero.
\newblock Conditional generative adversarial nets.
\newblock \emph{preprint arXiv:1411.1784}, 2014.

\bibitem[Moosm{\"u}ller and Cloninger(2023)]{moosmuller2023linear}
C.~Moosm{\"u}ller and A.~Cloninger.
\newblock Linear optimal transport embedding: {Provable W}asserstein classification for certain rigid transformations and perturbations.
\newblock \emph{Information and Inference: A Journal of the IMA}, 12\penalty0 (1):\penalty0 363--389, 2023.

\bibitem[Nelsen and Yang(2026)]{nelsen2026hna}
N.~H. Nelsen and Y.~Yang.
\newblock {Operator learning meets inverse problems: A probabilistic perspective}.
\newblock In A.~Hauptmann, B.~Jin, and C.-B. Sch{\"o}nlieb, editors, \emph{Handbook of Numerical Analysis}, volume~27. Elsevier, 2026.

\bibitem[Panaretos and Zemel(2020)]{panaretos2020invitation}
V.~M. Panaretos and Y.~Zemel.
\newblock \emph{An invitation to statistics in {W}asserstein space}.
\newblock Springer Nature, 2020.

\bibitem[Pierret et~al.(2026)Pierret, Tosel, Delon, and Newson]{pierret2026flow}
E.~Pierret, V.~Tosel, J.~Delon, and A.~Newson.
\newblock Flow matching for applied mathematicians, 2026.

\bibitem[Revach et~al.(2022)Revach, Shlezinger, Ni, Escoriza, Van~Sloun, and Eldar]{revach2022kalmannet}
G.~Revach, N.~Shlezinger, X.~Ni, A.~L. Escoriza, R.~J. Van~Sloun, and Y.~C. Eldar.
\newblock {KalmanNet: Neural network aided Kalman filtering for partially known dynamics}.
\newblock \emph{IEEE Transactions on Signal Processing}, 70:\penalty0 1532--1547, 2022.

\bibitem[Sanz-Alonso et~al.(2023)Sanz-Alonso, Stuart, and Taeb]{sanz2023inverse}
D.~Sanz-Alonso, A.~Stuart, and A.~Taeb.
\newblock \emph{Inverse problems and data assimilation}, volume 107.
\newblock Cambridge University Press, 2023.

\bibitem[Schuster et~al.(2020)Schuster, Mollenhauer, Klus, and Muandet]{schuster2020kernel}
I.~Schuster, M.~Mollenhauer, S.~Klus, and K.~Muandet.
\newblock Kernel conditional density operators.
\newblock In \emph{International Conference on Artificial Intelligence and Statistics}, pages 993--1004. PMLR, 2020.

\bibitem[Sohn et~al.(2015)Sohn, Yan, and Lee]{sohn2015learning}
K.~Sohn, X.~Yan, and H.~Lee.
\newblock Learning structured output representation using deep conditional generative models.
\newblock In \emph{Advances in Neural Information Processing Systems}, volume~28, 2015.

\bibitem[Song et~al.(2009)Song, Huang, Smola, and Fukumizu]{song2009hilbert}
L.~Song, J.~Huang, A.~Smola, and K.~Fukumizu.
\newblock Hilbert space embeddings of conditional distributions.
\newblock In \emph{International Conference on Machine Learning}, 2009.

\bibitem[Stuart(2010)]{stuart2010inverse}
A.~M. Stuart.
\newblock Inverse problems: {A B}ayesian perspective.
\newblock \emph{Acta Numerica}, 19:\penalty0 451--559, 2010.

\bibitem[Szab{\'o} et~al.(2016)Szab{\'o}, Sriperumbudur, P{\'o}czos, and Gretton]{szabo2016learning}
Z.~Szab{\'o}, B.~K. Sriperumbudur, B.~P{\'o}czos, and A.~Gretton.
\newblock Learning theory for distribution regression.
\newblock \emph{Journal of Machine Learning Research}, 17\penalty0 (152):\penalty0 1--40, 2016.

\bibitem[Tong et~al.(2026)Tong, Wang, and Yan]{tong2026latent}
X.~T. Tong, Y.~Wang, and L.~Yan.
\newblock Latent autoencoder ensemble {K}alman filter for data assimilation.
\newblock \emph{preprint arXiv:2603.06752}, 2026.

\bibitem[Whittle et~al.(2026)Whittle, Ziomek, Rawling, and Osborne]{whittle2026distribution}
G.~Whittle, J.~Ziomek, J.~Rawling, and M.~A. Osborne.
\newblock Distribution transformers: {F}ast approximate {B}ayesian inference with on-the-fly prior adaptation.
\newblock In \emph{International Conference on Machine Learning}, 2026.

\bibitem[Wu et~al.(2023)Wu, Chen, and {Ghattas}]{wu2023fast}
K.~Wu, P.~Chen, and O.~{Ghattas}.
\newblock A fast and scalable computational framework for large-scale high-dimensional {B}ayesian optimal experimental design.
\newblock \emph{SIAM/ASA Journal on Uncertainty Quantification}, 11\penalty0 (1):\penalty0 235--261, 2023.

\bibitem[Zeng et~al.(2025)Zeng, Zhang, Zhou, Wang, Wang, Liu, Wu, and Huang]{zeng2025point}
C.~Zeng, Y.~Zhang, J.~Zhou, Y.~Wang, Z.~Wang, Y.~Liu, L.~Wu, and D.~Z. Huang.
\newblock Point cloud neural operator for parametric {PDE}s on complex and variable geometries.
\newblock \emph{Computer Methods in Applied Mechanics and Engineering}, 443:\penalty0 118022, 2025.

\end{thebibliography}
\end{document}